\title{Improved Worst-Case Regret Bounds for Randomized Least-Squares Value Iteration}
\author{
Priyank Agrawal\footnote{These two authors contributed equally.}, Jinglin Chen\footnotemark[1], Nan Jiang\\ 
University of Illinois at Urbana-Champaign
}
\newcommand{\E}{\mathbb{E}}
\newcommand{\Prob}{\mathbb{P}}
\newcommand{\Ac}{\mathcal{A}}
\newcommand{\Sc}{\mathcal{S}}
\newcommand{\vc}[3]{\overset{#2}{\underset{#3}{#1}}}  %\vc{bace}{over}{under}
\DeclareMathOperator*{\argmin}{argmin}
\def\ind{\mathbf{1}}
\newcommand{\barV}{\overline{V}}
\newcommand{\tilV}{\tilde{V}}
\newcommand{\underV}{\underline{V}}
\newcommand{\barM}{\overline{M}}
\newcommand{\barQ}{\overline{Q}}
\newcommand{\underQ}{\underline{Q}}
\newcommand{\Rh}{R_{h,s_h,a_h}}
\newcommand{\RHath}{\hat{R}_{h,s_h,a_h}}
\newcommand{\Ph}{P_{h,s_h,a_h}}
\newcommand{\PHath}{\hat{P}_{h,s_h,a_h}}
\newcommand{\Rhk}{R_{h,s^k_h,a^k_h}}
\newcommand{\RHathk}{\hat{R}^k_{h,s^k_h,a^k_h}}
\newcommand{\Phk}{P_{h,s^k_h,a^k_h}}
\newcommand{\PHathk}{\hat{P}^k_{h,s^k_h,a^k_h}}
\newcommand{\noiseh}{w_{h,s_h,a_h}}
\newcommand{\noisehU}{\underline{w}_{h,s_h,a_h}}
\newcommand{\nh}{n(h,s_h,a_h)}
\newcommand{\R}{R_{h,s,a}}
\newcommand{\PHat}{\hat{P}_{h,s,a}}
\newcommand{\Rk}{R_{h,s,a}}
\newcommand{\RHatk}{\hat{R}^k_{h,s,a}}
\newcommand{\Pk}{P_{h,s,a}}
\newcommand{\PHatk}{\hat{P}^k_{h,s,a}}
\newcommand{\noisehk}{w^k_{h,s^k_h,a^k_h}}
\newcommand{\noisehUk}{\underline{w}^k_{h,s^k_h,a^k_h}}
\newcommand{\nhk}{n^k(h,s^k_h,a^k_h)}
\newcommand{\Rhkmain}{R^k_{h}}
\newcommand{\RHathkmain}{\hat{R}^k_{h}}
\newcommand{\Phkmain}{P^k_{h}}
\newcommand{\PHathkmain}{\hat{P}^k_{h}}
\newcommand{\noisehkmain}{w^k_{h}}
\newcommand{\noisehUkmain}{\underline{w}^k_{h}}
\newcommand{\nhkmain}{n^k(h)}
\newcommand{\PhPi}{P_{h,s_h,\pi_h(s_h)}}
\newcommand{\PHathPi}{\hat{P}_{h,s_h,\pi_h(s_h)}}
\newcommand{\PhPik}{P^k_{h,s^k_h,\pi^k_h(s^k_h)}}
\newcommand{\deltaEO}[1]{\overline{\delta}_{#1}(s_{#1})}
\newcommand{\deltaEPi}[1]{\overline{\delta}^{\pi}_{#1}(s_{#1})}
\newcommand{\deltaPiU}[1]{\underline{\delta}^{\pi}_{#1}(s_{#1})}
\newcommand{\deltaEPik}[1]{\overline{\delta}^{\pi^k}_{#1,k}(s^k_{#1})}
\newcommand{\deltaPiUk}[1]{\underline{\delta}^{\pi^k}_{#1,k}(s^k_{#1})}
\newcommand{\decompC}{\sqrt{\frac{1}{4\del{C_2+1}^2H^2L}}}
\newcommand{\nextStateC}{4H^2L\del{
C_2+1}^2}
\newcommand{\decompCtwo}{4\del{C_2+1}^2SH^2L}
\newcommand{\MDSa}{\mathcal{M}_{\envert{\deltaEO{h}}}}
\newcommand{\MDSb}{\mathcal{M}_{\deltaEPi{h}}}
\newcommand{\MDSc}{\mathcal{M}_{\deltaPiU{h}}}
\newcommand{\MDSbk}{\mathcal{M}_{\deltaEPik{h}}}
\newcommand{\MDSck}{\mathcal{M}_{\deltaPiUk{h}}}
\newcommand{\HistoryN}{\overline{\mathcal{H}}^k_{h} }
\newcommand{\History}{\mathcal{H}^k_{h} }
\newcommand{\noiseind}[1]{w_{#1,s_{#1},a_{#1}}}
\newcommand{\noiseUind}[1]{\underline{w}_{#1,s_{#1},a_{#1}}}
\newcommand{\RDiffh}{\mathcal{R}_{h,s_h,a_h}}
\newcommand{\PDiffh}{\mathcal{P}_{h,s_h,a_h}}
\newcommand{\RDiffind}[1]{\mathcal{R}_{#1,s_{#1},a_{#1}}}
\newcommand{\PDiffind}[1]{\mathcal{P}_{#1,s_{#1},a_{#1}}}
\newcommand{\MDSbind}[1]{\mathcal{M}_{\deltaEPi{#1}}}
\newcommand{\MDScind}[1]{\mathcal{M}_{\deltaPiU{#1}}}
\newcommand{\MDSfind}[1]{\mathcal{M}^w_{#1}}
\newcommand{\MDSfkind}[1]{\mathcal{M}^w_{#1,k}}
\newcommand{\RDiffhk}{\mathcal{R}^k_{h,s^k_h,a^k_h}}
\newcommand{\PDiffhk}{\mathcal{P}^k_{h,s^k_h,a^k_h}}
\newcommand{\OptPr}{\Phi(-\sqrt 2)/2}
\newcommand{\Cthreshold}{2\sqrt{H^3S\log\del{2HSAk}\decompCL} }
\newcommand{\CthresholdSquared}{4H^3S\log\del{2HSAk}\decompCL }
\newcommand{\decompCL}{\log\del{40SAT/\delta}}
\newcommand{\noiseSigma}{\frac{H^3S\log(2HSAk)}{2(n^k(h,s,a)+1)}}
\newcommand{\noiseBeta}{H^3S\log(2HSAk)}
\newcommand{\algo}{\textsc{C-RLSVI}}
\newcommand{\Sn}{S_{\text{next}}}
\newcommand{\Econf}{\mathcal{C}_k}
\newcommand{\Ethreshk}{\mathcal{E}^{\text{th}}_{h,k}}
\newcommand{\Ethresk}{\mathcal{E}^{\text{th}}_{k}}
\newcommand{\EthreskC}{\mathcal{E}^{\text{th}\,\complement}_{k}}
\newcommand{\Eopt}{\overline{\mathcal{O}}_{1,k}}
\newcommand{\EoptTwo}{\tilde{\mathcal{O}}_{1}}
\newcommand{\EoptTwohk}{\tilde{\mathcal{O}}_{1,k}}
\newcommand{\polbarM}{\pi}
\newcommand{\polbarMk}{}
\newcommand{\Scal}{\mathcal{S}}
\newcommand{\Acal}{\mathcal{A}}
\newcommand{\EE}{\mathbb{E}}
\newcommand{\RR}{\mathbb{R}}
\newcommand{\Gcal}{\mathcal{G}}
\newcommand{\EstErrorhk}{\epsilon_{h,k}^{\text{err}}}
\newtheorem{thm}{Theorem}
\newtheorem{lemma}[thm]{Lemma}
\newtheorem{definition}{Definition}
\begin{document}
%\linenumbers

\clearpage
\maketitle

\begin{abstract}
This paper studies regret minimization with randomized value functions in reinforcement learning. In tabular finite-horizon Markov Decision Processes, we introduce a clipping variant of one classical Thompson Sampling (TS)-like algorithm, randomized least-squares value iteration (RLSVI). Our $\tilde{\mathrm{O}}(H^2S\sqrt{AT})$ high-probability worst-case regret bound improves the previous sharpest worst-case regret bounds for RLSVI and matches the existing state-of-the-art worst-case TS-based regret bounds.
\end{abstract}

\section{Introduction}\label{sec: introduction main}
We study systematic exploration in reinforcement learning (RL) and the exploration-exploitation trade-off therein. Exploration in RL \cite{sutton2018reinforcement} has predominantly focused on \textit{Optimism in the face of Uncertainty} (OFU) based algorithms. Since the seminal work of~\citet{jaksch2010near}, many provably efficient methods have been proposed but most of them are restricted to either tabular or linear setting~\citep{azar2017minimax,jin2020provably}. A few paper study a more general framework but subjected to computational intractability~\citep{jiang2017contextual,sun2019model,henaff2019explicit}. Another broad category is Thompson Sampling (TS)-based methods \citep{osband2013more,agrawal2017optimistic}. They are believed to have more appealing empirical results \citep{chapelle2011empirical,osband2017posterior}.\\ 

In this work, we investigate a TS-like algorithm, RLSVI~\citep{osband2016generalization,osband2019deep,russo2019worst,zanette2020frequentist}. In RLSVI, the exploration is induced by injecting randomness into the value function. The algorithm generates a randomized value function by carefully selecting the variance of Gaussian noise, which is used in perturbations of the history data (the trajectory of the algorithm till the current episode) and then applies the least square policy iteration algorithm of~\citet{lagoudakis2003least}. Thanks to the model-free nature, RLSVI is flexible enough to be extended to general function approximation setting, as shown by~\citet{osband2016deep,osband2018randomized,osband2019deep}, and at the same time has less burden on the computational side.\\

We propose \algo\, algorithm, which additionally considers an initial burn-in or warm-up phase on top of the core structure of RLSVI. Theoretically, we prove that \algo\, achieves $\tilde{\mathrm{O}}(H^2S\sqrt{AT})$ high-probability regret bound\footnote{$\tilde{\mathrm{O}}\del{\cdot}$ hides dependence on logarithmic factors.}

\paragraph{Significance of Our Results}
\begin{itemize}
    \item Our high-probability bound improves upon previous $\tilde{\mathrm{O}}(H^{\nicefrac{5}{2}}S^{\nicefrac{3}{2}}\sqrt{AT})$ worst-case expected regret bound of RLSVI in \citet{russo2019worst}.
    \item Our high-probability regret bound matches the sharpest $\tilde{\mathrm{O}}(H^2S\sqrt{AT})$ worst-case regret bound among all TS-based methods~\citep{agrawal2017optimistic}\footnote{\citet{agrawal2017optimistic} studies weakly communicating MDPs with diameter $D$. Bounds comparable to our setting (time in-homogeneous) are obtained by augmenting their state space as $S'\rightarrow SH$ and noticing $D \ge H$.}.
\end{itemize}

\paragraph{Related Works} Taking inspirations from~\citet{azar2017minimax,dann2017unifying,zanette2019tighter,yang2019reinforcement}, we introduce clipping to avoid propagation of unreasonable estimates of the value function. Clipping creates a warm-up effect that only affects the regret bound with constant factors (i.e. independent of the total number of steps $T$). With the help of clipping, we prove that the randomized value functions are bounded with high probability.\\

In the context of using perturbation or random noise methods to obtain provable exploration guarantees, there have been recent works~\citep{osband2016deep,fortunato2018noisy,pacchiano2020optimism,xu1001worst,kveton2019garbage} in both theoretical RL and bandit literature. A common theme has been to develop a TS-like algorithm that is suitable for complex models where exact posterior sampling is impossible. RLSVI also enjoys such conceptual connections with Thompson sampling~\citep{osband2019deep,osband2016generalization}. Related to this theme, the worst-case analysis of~\citet{agrawal2017optimistic} should be highlighted, where the authors do not solve for a pure TS algorithm but have proposed an algorithm that samples many times from posterior distribution to obtain an optimistic model. In comparison, \algo\, does not require such strong optimistic guarantee.\\

Our results are not optimal as compared with $\mathrm{\Omega}({H\sqrt{SAT})}$ lower bounds in~\citet{jaksch2010near} \footnote{The lower bound is translated to time-inhomogeneous setting.}. The gap of $\sqrt{SH}$ is sometimes attributed to the additional cost of exploration in TS-like approaches~\citep{abeille2017linear}. Whether this gap can be closed, at least for RLSVI, is still an interesting open question. We hope our analysis serves as a building block towards a deeper understanding of TS-based methods.

\section{Preliminaries}\label{sec: preliminaries main}
\paragraph{Markov Decision Processes} 
We consider the episodic Markov Decision Process (MDP) $M=(H,\mathcal{S},\mathcal{A},P,R,s_1)$ described by \citet{puterman2014markov}, where $H$ is the length of the episode, $\mathcal{S}=\{1,2,\ldots, S\}$ is the finite state space, $\mathcal{A}=\{1,2,\ldots, A\}$ is the finite action space, $P = [P_1,\ldots,P_H]$ with $P_h:\mathcal{S}\times\mathcal{A}\rightarrow\Delta(\mathcal{S})$ is the transition function, $R = [R_1,\ldots,R_H]$ with $R_h:\Sc\times\Ac\rightarrow[0,1]$ is the reward function, and $s_1$ is the deterministic initial state.\\ 

A deterministic (and non-stationary) policy  $\pi=(\pi_1,\ldots,\pi_H)$ is a sequence of functions, where each $\pi_h:\Sc\rightarrow\Ac$ defines the action to take at each state. The RL agent interacts with the environment across $K$ episodes giving us $T=KH$ steps in total. In episode $k$, the agent start with initial state $s_1^k = s_1$ and then follows policy $\pi^k$, thus inducing trajectory  $s_1^k,a_1^k,r_1^k,s_2^k,a_2^k,r_2^k,\ldots,s_H^k,a_h^k,r_H^k$.\\

For any timestep $h$ and state-action pair $(s,a)\in\Scal\times\Acal$, the Q-value function of policy $\pi$ is defined as $Q_h^\pi(s,a)=R_h(s,a)+\EE_\pi[\sum_{l=h}^H R_l(s_l,\pi_l(s_l)|s,a)]$ and the state-value function is defined as $V_h^\pi(s)=Q_h^\pi(s,\pi_h(s))$. We use $\pi^*$ to denote the optimal policy. The optimal state-value function is defined as $V_h^*(s)\coloneqq V_h^{\pi^*}(s)=\max_\pi V_h^\pi(s)$ and the optimal Q-value function is defined as $Q_h^*(s,a)\coloneqq Q_h^{\pi^*}(s,a)=\max_\pi Q_h^\pi(s,a)$. Both $Q^\pi$ and $Q^*$ satisfy Bellman equations
$$Q_h^\pi(s,a)=R_h(s,a)+\EE_{s'\sim P_h(\cdot|s,a)}[V_{h+1}^\pi(s')]$$
$$Q_h^*(s,a)=R_h(s,a)+\EE_{s'\sim P_h(\cdot|s,a)}[V_{h+1}^*(s')]$$
where $V_{H+1}^\pi(s)=V_{H+1}^*(s)=0$ $\forall s$. Notice that by the bounded nature of the reward function, for any $(h,s,a)$, all functions $Q_h^*,V_h^*,Q_h^\pi,V_h^\pi$ are within the range $[0,H-h+1]$. Since we consider the time-inhomogeneous setting (reward and transition change with timestep $h$), we have subscript $h$ on policy and value functions, and later traverse over $(h,s,a)$ instead of $(s,a)$. 

\paragraph{Regret}
An RL algorithm is a random mapping from the history until the end of episode $k-1$ to policy $\pi^k$ at episode $k$. We use regret to evaluate the performance of the algorithm:
$${\rm \text{Reg}}(K) = \sum_{k=1}^{K} V_{1}^{*}(s_1) - V_{1}^{\pi^k}(s_1).$$

Regret ${\rm \text{Reg}}(K)$ is a random variable, and we bound it with high probability $1-\delta$. We emphasize that high-probability regret bound provides a stronger guarantee on each roll-out~\citep{seldin2013evaluation,lattimore2020bandit} and can be converted to the same order of expected regret bound $${\rm \text{E-Reg}}(K) = \E\left[\sum_{k=1}^{K} V_{1}^{*}(s_1) - V_{1}^{\pi^k}(s_1)\right]$$ 
by setting $\delta=1/T$. However, expected regret bound does not imply small variance for each run. Therefore it can violate the same order of high-probability regret bound. We also point out that both bounds hold for all MDP instances $M$ that have $S$ states, $A$ actions, horizon $H$, and bounded reward $R\in[0,1]$. In other words, we consider worst-case (frequentist) regret bound.

\paragraph{Empirical MDP} We define the number of visitation of $(s,a)$ pair at timestep $h$ until the end of episode $k-1$ as $n^k(h,s,a)=\sum_{l=1}^{k-1}\ind\{(s^l_h,a^l_h)=(s,a)\}$. We also construct empirical reward and empirical transition function as $\hat R^k_{h,s,a}=\frac{1}{n^k(h,s,a)+1}\sum_{l=1}^{k-1}\ind\{(s^l_h,a^l_h)=(s,a)\}r_h^l$ and $\hat P^k_{h,s,a}(s')=\frac{1}{n^k(h,s,a)+1}\sum_{l=1}^{k-1}\ind\{(s^l_h,a^l_h,s_{h+1}^l)=(s,a,s')\}.$
Finally, we use $\hat{M}^k=(H,\mathcal{S},\mathcal{A},\hat{P}^k,\hat{R}^k,s_1^k)$ to denote the empirical MDP. Notice that we have $n^k(h,s,a)+1$ in the denominator, and it is not standard. The reason we have that is due to the analysis between model-free view and model-based view in Section \ref{sec:algorithm main}. In the current form, $\hat P_{h,s,a}^k$ is no longer a valid probability function, and it is for ease of presentation. More formally, we can slightly augment the state space by adding one absorbing state for each level $h$ and let all $(h,s,a)$ transit to the absorbing states with remaining probability.  
\section{C-RLSVI Algorithm}\label{sec:algorithm main}
The major goal of this paper is to improve the regret bound of TS-based algorithms in the tabular setting. Different from using fixed bonus term in the optimism-in-face-of-uncertainty (OFU) approach, TS methods~\citep{agrawal2013thompson,abeille2017linear,russo2019worst,zanette2020frequentist} facilitate exploration by making large enough random perturbation so that optimism is obtained with at least a constant probability. However, the range of induced value function can easily grow unbounded and this forms a key obstacle in previous analysis \cite{russo2019worst}. To address this issue, we apply a common clipping technique in RL literature~\citep{azar2017minimax,zanette2020frequentist,yang2019reinforcement}.\\

We now formally introduce our algorithm C-RLSVI as shown in Algorithm \ref{alg: RLSVI}. C-RLSVI follows a similar approach as RLSVI in~\citet{russo2019worst}. The algorithm proceeds in episodes. In episode $k$, the agent first samples $Q_h^{\text{pri}}$ from prior $\mathcal N(0,\frac{\beta_k}{2}I)$ and adds random perturbation on the data \textbf{(lines 3-10)}, where $\mathcal{D}_{h}=\{(s_{h}^{l}, a_{h}^{l}, r_{h}^{l}, s_{h+1}^{l} ) : l <k \}$ for $h<H$ and $\mathcal{D}_{H}=\{(s_{H}^{l}, a_{H}^{l}, r_{H}^{l}, \emptyset ) : l <k \}$. The injection of Gaussian perturbation (noise) is essential for the purpose of exploration and we set $\beta_k=\noiseBeta$. Later we will see the magnitude of $\beta_k$ plays a crucial role in the regret bound and it is tuned to satisfy the optimism with a constant probability 
in Lemma~\ref{lem: Optimism Main}. Given history data, the agent further performs the following procedure from timestep $H$ back to timestep 1: (i) conduct regularized least square regression \textbf{(lines 13)}, where $\mathcal{L}(Q \mid Q', \mathcal{D})=
\sum_{(s,a,r,s')\in \mathcal{D}} (Q(s,a) - r- \max_{a'\in \Ac} Q'(s',a'))^2$, and (ii) clips the Q-value function to obtain $\overline Q_k$ \textbf{(lines 14-19)}. Finally, the clipped Q-value function $\overline Q_k$ is used to extract the greedy policy $\pi^k$ and the agent rolls out a trajectory with $\pi^k$ (\textbf{lines 21-22)}.\\

\begin{algorithm}[ht]
\begin{algorithmic}[1]
	\STATE \textbf{input:} variance $\beta_{k}$ and clipping threshold $\alpha_k$;
	\FOR{episode $k=1,2,\ldots,K$ }
		\FOR{timestep $h=1,2, \ldots, H$}
			\STATE Sample prior $Q^{\text{pri}}_h \sim \mathcal N(0, \frac{\beta_k}{2} I)$;
			\STATE $\dot{D}_{h} \leftarrow \{\}$;
			\FOR{$(s,a,r,s')\in \mathcal{D}_h$}
				\STATE Sample $w\sim \mathcal N(0,\beta_k/2)$;
				\STATE $\dot{\mathcal{D}}_h \leftarrow \dot{\mathcal{D}}_h \cup \{(s,a,r+w,s')\}$;
			\ENDFOR
		\ENDFOR
		\STATE Define terminal value $\overline Q_{H+1,k}(s,a)\leftarrow 0 \quad \forall s,a$;
		\FOR{timestep $h = H,H-1,\ldots , 1$}
			\STATE $\hat{Q}_{h}^k \leftarrow \argmin_{Q\in \mathbb{R}^{SA}}  \left[\mathcal{L}(Q \mid \overline Q_{h+1,k}, \dot{\mathcal{D}}_{h}) + \|Q-Q^{\text{pri}}_h \|_2^2\right]$;
			\STATE \textsl{(Clipping)} %only applies in this step:
    		$\forall(s,a)$\\
    		\IF{$n^k(h,s,a) > \alpha_k$} 
    		\STATE $\overline{Q}_{h,k}(s,a) = \hat{Q}_h^k(s,a)$; %(Perform clipping)
    		\ELSE 
    		\STATE $\overline{Q}_{h,k}(s,a) = H-h+1$;
    		\ENDIF
		\ENDFOR
		\STATE Apply greedy policy ($\pi^k)$ with respect to $ (\overline{Q}_{1,k}, \ldots \overline{Q}_{H,k})$ throughout episode;
		\STATE Obtain trajectory $s_{1}^k,a^k_1,r^{k}_1,\ldots s^k_{H}, a^k_H, r^k_H$;
	\ENDFOR
\end{algorithmic}
\caption{\algo}\label{alg: RLSVI}
\end{algorithm}

C-RLSVI as presented is a model-free algorithm, which can be easily extended to more general setting and achieve computational efficiency~\citep{osband2016deep,zanette2020frequentist}. When the clipping does not happen, it also has an equivalent model-based interpretation~\citep{russo2019worst} by leveraging the equivalence between running Fitted Q-Iteration~\citep{geurts2006extremely, chen2019information} with batch data and using batch data to first build empirical MDP and then conducting planing. In our later analysis, we will utilize the following property (Eq \ref{eq: blr}) of Bayesian linear regression \citep{russo2019worst,osband2019deep} for \textbf{line 13}
\begin{align}
\label{eq: blr}
\hat Q_h^k(s,a)|\overline Q_{h+1,k}&\sim
\mathcal{N}\large(\hat R^k_{h,s,a}+\sum_{s'\in S}\hat P^k_{h,s,a}(s')\max_{a'\in\Acal} \overline Q_{h+1,k}(s',a'),\frac{\beta_k}{2(n^k(h,s,a)+1)}\large)\nonumber\\
&\sim\hat R^k_{h,s,a}+\sum_{s'\in S}\hat P^k_{h,s,a}(s')\max_{a'\in\Acal} \overline Q_{h+1,k}(s',a')+w^k_{h,s,a},
\end{align}
where the noise term $w^k\in\RR^{HSA}$ satisfies $w^k(h,s,a)\sim \mathcal{N}(0,\sigma_k^2(h,s,a))$ and $\sigma_k(h,s,a)=\frac{\beta_k}{2(n^k(h,s,a)+1)}$. In terms of notation, we denote $\overline V_{h,k}(s)=\max_a \overline Q_{h,k}(s,a)$.\\

Compared with RLSVI in~\citet{russo2019worst}, we introduce a clipping technique to handle the abnormal case in the Q-value function. C-RLSVI has simple one-phase clipping and the threshold $\alpha_k=\CthresholdSquared$ is designed to guarantee the boundness of the value function. Clipping is the key step that allows us to introduce new analysis as compared to~\citet{russo2019worst} and therefore obtain a high-probability regret bound. Similar to as discussed in~\citet{zanette2020frequentist}, we want to emphasize that clipping also hurts the optimism obtained by simply adding Gaussian noise. However, clipping only happens at an early stage of visiting every $(h,s,a)$ tuple. Intuitively, once $(h,s,a)$ is visited for a large number of times, its estimated Q-value will be rather accurate and concentrates around the true value (within $[0, H-h+1]$), which means clipping will not take place. Another effect caused by clipping is we have an optimistic Q-value function at the initial phase of exploration since $Q^*_h \le H-h+1$. However, this is not the crucial property that we gain from enforcing clipping. Although clipping slightly breaks the Bayesian interpretation of RLSVI~\citep{russo2019worst,osband2019deep}, it is easy to implement empirically and we will show it does not introduce a major term on regret bound.
\section{Main Result}\label{sec: main results main}
In this section, we present our main result: high-probability regret bound in Theorem~\ref{thm: high probability regret main}.

\begin{thm}\label{thm: high probability regret main}
For $0<\delta < 4\Phi(-\sqrt 2)$, \algo\, enjoys the following high-probability regret upper bound, with probability $1-\delta$,
\begin{align*}
    {\rm Reg}(K) = \tilde{\mathrm{O}}\left( H^2S\sqrt{AT}\right).
\end{align*}
\end{thm}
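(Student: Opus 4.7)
The plan is to follow the frequentist RLSVI template of \citet{russo2019worst,zanette2020frequentist}, but to use the clipping step both to upgrade the expected bound to a high-probability one and to shave a $\sqrt{HS}$ factor relative to the earlier analysis. First I would set up a good event $\Econf$ on which, simultaneously for all $(h,s,a,k)$, the empirical reward $\hat R^k_{h,s,a}$ and transition $\hat P^k_{h,s,a}$ concentrate around $R,P$ (by Hoeffding/Bernstein plus a union bound), and the Gaussian perturbations $w^k(h,s,a)$ together with the priors $Q^{\text{pri}}_h$ are $\tilde O(\sigma_k(h,s,a))$-bounded. Standard Gaussian tail bounds and a union bound give $\Prob(\Econf)\geq 1-\delta/2$, and every subsequent step conditions on $\Econf$.

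The core technical step is a backward induction on $h$ showing that on $\Econf$ the clipped values $\overline V_{h,k}(s)$ lie in $[0,H-h+1+\tilde O(H)]$ for all $s$. For $(h,s,a)$ with $n^k(h,s,a)\leq\alpha_k$, clipping enforces this by fiat. For $n^k(h,s,a)>\alpha_k=\tilde\Theta(SH^2)$, I would combine (i) concentration of $\hat R$ and $\hat P$ applied to the inductively-bounded $\overline V_{h+1,k}$, (ii) the Gaussian-noise bound $|w^k_{h,s,a}|\leq\tilde O(\sqrt{\beta_k/n^k})$, and (iii) the prior tail bound, to show that $\hat Q^k_h$ stays within $O(H-h+1)$ of the true Bellman backup of $\overline V_{h+1,k}$. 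The threshold $\alpha_k$ is chosen precisely so that the noise term $\sqrt{\beta_k/n^k}$ and the transition error on bounded next-step values both fall below $H$. This boundedness is what lets us treat \eqref{eq: blr} as regression with a deterministic bounded target class and is the key leverage over \citet{russo2019worst}.

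Next I would invoke the constant-probability optimism lemma (Lemma~\ref{lem: Optimism Main}): with $\beta_k = \noiseBeta$, Gaussian anti-concentration at the initial state yields $\Prob(\overline V_{1,k}(s_1)\geq V_1^*(s_1)\mid\hist)\geq p\coloneqq\Phi(-\sqrt 2)/2$. A standard symmetry/coupling argument then gives
\begin{align*}
V_1^*(s_1)-V_1^{\pi^k}(s_1) \;\leq\; \tfrac{1}{p}\,\E\bigl[\overline V_{1,k}(s_1)-V_1^{\pi^k}(s_1)\mid\hist\bigr] + \xi_k,
\end{align*}
where $\xi_k$ is a martingale difference summable via Azuma to $\tilde O(H\sqrt T)$. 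I would unroll $\overline V_{1,k}(s_1)-V_1^{\pi^k}(s_1)$ along the executed trajectory $s^k_1,\ldots,s^k_H$ using the Bellman equations for $\overline V_{h,k}$ (in $\hat M^k$ plus noise) and $V^{\pi^k}_h$ (in $M$), producing per-step contributions from reward and transition concentration on the bounded next-step value, the injected noise $w^k_{h,s^k_h,a^k_h}$, and further martingale terms. The dominant sum is handled by pigeonhole and Cauchy--Schwarz:
\begin{align*}
\sum_{k=1}^K\sum_{h=1}^H\sqrt{\tfrac{\beta_k}{n^k(h,s^k_h,a^k_h)+1}} \;=\;\tilde O\bigl(\sqrt{\beta_k\cdot HSAT}\bigr) \;=\;\tilde O(H^2 S\sqrt{AT}),
\end{align*}
using $\sum_{k,h}1/\sqrt{n^k+1}=O(\sqrt{HSAT})$. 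Contributions from clipped $(h,s,a)$ tuples sum to $O(HSA\alpha_k)=\tilde O(H^3 S^2 A)$, independent of $T$ and therefore lower-order. A final union bound over $k$ converts these conditional high-probability bounds into a global one.

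The main obstacle I expect is the self-consistent tuning of $\alpha_k$ and $\beta_k$: the boundedness induction needs $\alpha_k\gtrsim\beta_k/H^2$; the optimism lemma needs $\beta_k$ large enough that the injected noise at $s_1$ dominates the worst-case estimation error of $V_1^*$ (which scales as $H^3 S$ once the $\sqrt{S}$ transition-concentration errors are propagated across $H$ levels); and the regret sum must absorb $\sqrt{\beta_k}$ without paying more than $H^2 S\sqrt{AT}$. Verifying that the backward-in-$h$ boundedness argument does not compound an extra $\sqrt{H}$ factor --- which would re-introduce the loss of \citet{russo2019worst} --- is the delicate coupling between the clipping threshold and the variance schedule that underlies the whole proof.
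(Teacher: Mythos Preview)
Your overall plan captures most of the structure --- good-event conditioning, the backward induction for boundedness of $\overline Q$, constant-probability optimism, pigeonhole summation of noise terms, and the $\tilde{\mathrm{O}}(HSA\alpha_k)$ warm-up cost --- and these pieces match the paper. The gap is in the step you call ``a standard symmetry/coupling argument,'' claiming
\[
V_1^*(s_1) - V_1^{\pi^k}(s_1) \;\leq\; \tfrac{1}{p}\,\E\bigl[\overline V_{1,k}(s_1) - V_1^{\pi^k}(s_1)\mid\hist\bigr] + \xi_k.
\]
This inequality is not standard and does not follow from the available tools. The usual TS device (introduce an independent noise copy $\tilde w$, use $V_1^* \le \E_{\tilde w}[\tilde V_{1,k}\mid\text{optimism}]$, then drop the conditioning at the cost of a $1/p$ factor) requires the quantity inside the expectation to be nonnegative on the non-optimism event; $\overline V_{1,k} - V_1^{\pi^k}$ is \emph{not} nonnegative in general, since the randomized value can be pessimistic. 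Even granting the displayed inequality, your next step --- unrolling the \emph{conditional expectation} $\E[\overline V_{1,k} - V_1^{\pi^k}\mid\hist]$ along the executed trajectory $s_1^k,\dots,s_H^k$ --- is ill-posed: the expectation integrates over the noise $w^k$, while the executed trajectory depends on that same noise through $\pi^k$.

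The paper's fix is to introduce a deterministic (given $\hist$) lower surrogate $\underline V_{1,k}$, defined as the minimum of $V_{1,k}^{w_{\text{ptb}}}$ over all perturbations with $|w_{\text{ptb}}(h,s,a)|\le\gamma_k(h,s,a)$. On the good event this guarantees $\underline V_{1,k}\le\overline V_{1,k}$ and $\underline V_{1,k}\le\tilde V_{1,k}$, so $\tilde V_{1,k}-\underline V_{1,k}\ge 0$ and the conditional-expectation trick goes through. One obtains
\[
V_1^* - \overline V_{1,k} \;\lesssim\; \tfrac{1}{p}\bigl[(\overline V_{1,k}-V_1^{\pi^k}) + (V_1^{\pi^k}-\underline V_{1,k}) + \mathcal{M}^w_{1,k}\bigr],
\]
with \emph{realized} (not expected) quantities on the right, which can legitimately be unrolled along the actual trajectory. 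The extra ``pessimism correction'' term $V_1^{\pi^k}-\underline V_{1,k}$ then requires its own recurrence, parallel to the estimation recurrence but with $\underline w$ in place of $w$. In both recurrences the cross term $\langle\hat P^k_h-P_h,\overline V_{h+1,k}-V^*_{h+1}\rangle$ is handled via $\|\hat P^k_h-P_h\|_1\cdot\|\overline V_{h+1,k}-V^*_{h+1}\|_\infty\le 4H\sqrt{SL/(n^k+1)}$; you cannot apply Hoeffding directly against the \emph{random} target $\overline V_{h+1,k}$ as your sketch suggests. These two ingredients --- the $\underline V$ construction and the $(L_1,L_\infty)$ treatment of the cross term --- are what convert the expected-regret argument of \citet{russo2019worst} into a high-probability one, and your proposal is missing both.
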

Theorem~\ref{thm: high probability regret main} shows that \algo\, matches the state-of-the-art TS-based method~\citep{agrawal2017optimistic}. Compared to the lower bound~\citep{jaksch2010near}, the result is at least off by $\sqrt{HS}$ factor. This additional factor of $\sqrt{HS}$ has eluded all the worst-case analyses of TS-based algorithms known to us in the tabular setting. This is similar to an extra $\sqrt{d}$ factor that appears in the worst-case upper bound analysis of TS for $d$-dimensional linear bandits~\citep{abeille2017linear}.\\

It is useful to compare our work with the following contemporaries in related directions.
\paragraph{Comparison with~\citet{russo2019worst}} 
Other than the notion of clipping (which only contributed to warm-up or burn-in term), the core of \algo\, is the same as RLSVI considered by~\citet{russo2019worst}. Their work presents significant insights about randomized value functions but the analysis does not extend to give high-probability regret bounds, and the latter requires a fresh analysis. Theorem~\ref{thm: high probability regret main} improves his worst-case expected regret bound $\tilde{\mathrm{O}}(H^{5/2}S^{3/2}\sqrt{AT})$ by $\sqrt{HS}$.

\paragraph{Comparison with~\citet{zanette2020frequentist}}
Very recently,~\citet{zanette2020frequentist} proposed frequentist regret analysis for a variant of RLSVI with linear function approximation and obtained high-probability regret bound of $\tilde{\mathrm{O}}\del{H^2d^2\sqrt{T}}$, where $d$ is the dimension of the low rank embedding of the MDP. While they present some interesting analytical insights which we use (see Section~\ref{sec: proof outline main}), directly converting their bound to tabular setting ($d\,\rightarrow \,SA$) gives us quite loose bound $\tilde{\mathrm{O}}\del{H^2S^2A^2\sqrt{T}}$.

\paragraph{Comparison with~\citet{azar2017minimax,jin2018q}} These OFU works guaranteeing optimism almost surely all the time are fundamentally different from RLSVI. However, they develop key technical ideas which are useful to our analysis, e.g. clipping estimated value functions and estimation error propagation techniques. Specifically, in~\citet{azar2017minimax,dann2017unifying,jin2018q}, the estimation error is decomposed as a recurrence. Since RLSVI is only optimistic with a constant probability (see Section~\ref{sec: proof outline main} for details), their techniques need to be substantially modified to be used in our analysis.
\section{Proof Outline}\label{sec: proof outline main}
In this section, we outline the proof of our main results, and the details are deferred to the appendix. The major technical flow is presented from Section~\ref{sec: regret as sum of estimation and pessimism main} onward. Before that, we present three technical prerequisites: (i) the total probability for the unperturbed estimated $\hat{M}^k$ to fall outside a confidence set is bounded; (ii) $\barV_{1,k}$ is an upper bound of the optimal value function $V^*_{1}$ with at least a constant probability at every episode; (iii) the clipping procedure ensures that $\barV_{h,k}$ is bounded with high probability\footnote{We drop/hide constants by appropriate use of $\gtrsim,\lesssim,\simeq$ in our mathematical relations. All the detailed analyses can be found in our appendix.}.

\paragraph{Notations} To avoid cluttering of mathematical expressions, we abridge our notations to exclude the reference to $(s,a)$ when it is clear from the context. Concise notations are used in the later analysis: $\Rhk\rightarrow \Rhkmain$, $\RHathk\rightarrow \hat R_h^k$, $\Phk \rightarrow \Phkmain$, $\PHathk \rightarrow \PHathkmain$, $\nhk \rightarrow \nhkmain$,  $\noisehk \rightarrow\noisehkmain$.

%%% introduce confidence set%%%
\paragraph{High probability confidence set}In Definition~\ref{def:confidence set main}, $\mathcal{M}^k$ represents a set of MDPs, such that the total estimation error with respect to the true MDP is bounded. 
\begin{definition}[Confidence set]\label{def:confidence set main}
\begin{align*}
&\mathcal{M}^k =\bigg\{ (H, \Sc, \Ac, P', R', s_1) : \forall(h,s,a),\, \left|\R'-\R + \langle P'_{h,s,a}-P_{h,s,a},V^*_{h+1}\rangle \right|\,\,\, 
\leq \sqrt{e_k(h,s,a)}  \bigg\}
\end{align*}
where we set 
\begin{equation}\label{eq: confidence interval main}
\sqrt{e_k(h,s,a)} = 
  H\sqrt{ \frac{ \log\left( 2HSA k  \right) }{n^k(h,s,a)+1}}. 
\end{equation}
\end{definition}

Through an application of Hoeffding's inequality~\citep{jaksch2010near,osband2013more}, it is shown via Lemma~\ref{lem: confidence interval lemma main} that the empirical MDP does not often fall outside confidence set $\mathcal{M}^k$. This ensures {\sf exploitation}, i.e., the algorithm's confidence in the estimates for a certain $(h,s,a)$ tuple grows as it visits that tuple many numbers of times.%for the Bellman updates 

\begin{lemma}\label{lem: confidence interval lemma main}
    $\sum_{k=1}^{\infty} \mathbb{P}\left(\hat{M}^k \notin \mathcal{M}^k \right) \leq 2006HSA$.
\end{lemma}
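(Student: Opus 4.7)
The plan is to fix an episode $k$, union bound over the $HSA$ triples $(h,s,a)$, apply a Hoeffding-type tail bound separately for each admissible visit count $n\in\{0,\dots,k-1\}$, and finally sum the resulting per-episode probability over $k$ using a $\sum k^{-p}$ series. First, for any fixed $(h,s,a)$ with $n=n^k(h,s,a)$ and $\mu := R_{h,s,a}+\langle P_{h,s,a},V^*_{h+1}\rangle\in[0,H]$, the definitions of $\hat R^k$ and $\hat P^k$ (with $n+1$ rather than $n$ in the denominator) give the decomposition
\begin{align*}
(\hat R^k_{h,s,a}-R_{h,s,a}) + \langle \hat P^k_{h,s,a}-P_{h,s,a},\, V^*_{h+1}\rangle
\;=\; \frac{1}{n+1}\sum_{i=1}^{n} Z_i \;-\; \frac{\mu}{n+1},
\end{align*}
where $Z_i := r^{(i)}+V^*_{h+1}(s'^{(i)})-\mu$ is a zero-mean martingale difference with $|Z_i|\le H$. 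Crucially, $V^*_{h+1}$ is deterministic, so no covering argument over value functions is needed. The bias $\mu/(n+1)$ is at most $H/(n+1)$, which is bounded by a constant fraction of $\sqrt{e_k(h,s,a)}$ whenever $\log(2HSAk)\ge 1$, and can therefore be absorbed into the confidence radius.

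Next, I apply Azuma--Hoeffding to the martingale piece. Conditioned on $n^k(h,s,a)=n$,
\begin{align*}
\Prob\!\left(\left|\tfrac{1}{n+1}\textstyle\sum_{i=1}^{n}Z_i\right| > \alpha H\sqrt{\tfrac{\log(2HSAk)}{n+1}}\right)
\;\le\; 2\exp\!\left(-c\alpha^2 \log(2HSAk)\right)
\;=\; 2\,(2HSAk)^{-c\alpha^2},
\end{align*}
for an absolute constant $c>0$ obtained after absorbing the harmless $n/(n+1)$ factor. A union bound over the at most $k$ possible values of $n$ and over the $HSA$ triples $(h,s,a)$ then yields
$\Prob(\hat M^k\notin \mathcal M^k)\;\lesssim\; HSAk \cdot (2HSAk)^{-c\alpha^2}$. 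Picking the implicit constant $\alpha$ in front of $\sqrt{e_k}$ so that $c\alpha^2\ge 2$ makes the right-hand side $O\!\big((HSA)^{-1} k^{-2}\big)$, and summing via $\sum_{k\ge 1}k^{-2}=\pi^2/6$ gives $\sum_{k}\Prob(\hat M^k\notin\mathcal M^k)\le \mathrm{const}\cdot HSA$, with the explicit numerical constant tuned to match the stated $2006\,HSA$.

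The main obstacle is calibrating the constants so that a \emph{three-fold} union bound (over $(h,s,a)$, over the random $n$, and over the episodes $k$) converges while the radius keeps the clean $H\sqrt{\log(2HSAk)/(n+1)}$ form prescribed by Definition~\ref{def:confidence set main}. In particular, the $n$-versus-$(n+1)$ mismatch between the empirical averages and the classical Hoeffding scaling in $1/n$ must be tracked carefully; and the bias contribution $\mu/(n+1)$ from the nonstandard $n+1$ denominator has to be swallowed by the confidence radius rather than bounded separately, which is the reason the radius is calibrated with $n+1$ rather than $n$ in the first place.
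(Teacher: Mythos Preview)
Your proposal has a genuine gap. You write ``picking the implicit constant $\alpha$ in front of $\sqrt{e_k}$ so that $c\alpha^2\ge 2$,'' but Definition~\ref{def:confidence set main} fixes the radius exactly as $\sqrt{e_k(h,s,a)}=H\sqrt{\log(2HSAk)/(n^k(h,s,a)+1)}$; there is no free multiplicative constant to tune. With this fixed radius, and even ignoring the bias $\mu/(n+1)$ entirely, Hoeffding for $Y_i\in[0,H]$ gives at best $2(2HSAk)^{-2(n+1)/n}\le 2(2HSAk)^{-2}$ for the deviation event at a given $n\ge 1$. After your union bound over the $k$ possible values of $n$ and the $HSA$ triples, the per-episode probability is $HSAk\cdot 2(2HSAk)^{-2}=O\big((HSAk)^{-1}\big)$, not the $O\big((HSA)^{-1}k^{-2}\big)$ you claim; the arithmetic slip is that $c\alpha^2=2$ leaves one power of $k^{-1}$, not two. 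Hence $\sum_k k^{-1}$ diverges and the argument does not close. Absorbing the bias only forces $\alpha<1$ and makes the exponent strictly smaller than $2$, so the situation is worse, not better.

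The paper takes a different route: it indexes everything by the visit count $n$ rather than the episode $k$. Using a ``stack of rewards'' construction, the first $n$ observations at each $(h,s,a)$ are i.i.d.\ regardless of when they were collected; the paper sets $\delta_n=1/(HSAn^2)$ and applies Hoeffding at the $n$-indexed level $H\sqrt{\log(2/\delta_n)/(2n)}$. The bias from the $n{+}1$ denominator is handled by a case split ($1\le n\le 125$ versus $n\ge 126$), which lets one retain a $1/2$ or $7/8$ fraction of the radius for the Hoeffding deviation. The tail probabilities are then summed over $(h,s,a)$ and over $n$, producing a convergent $\sum_n n^{-p}$ series that yields the $2006\,HSA$ constant; the link back to $\sqrt{e_k}$ is the inequality $\sqrt{e_k(h,s,a)}\ge H\sqrt{\log(2HSAn)/(2n)}$ for $n=n^k(h,s,a)\ge 1$. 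The essential point you are missing is that the union/sum must run over visit counts $n$, not over episodes $k$; your three-fold union bound (over $(h,s,a)$, over $n\in[k]$, and then over $k$) introduces one layer too many for the exponent the fixed radius can deliver.
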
 

%%%% Clipping procedure %%%%%%%
\paragraph{Bounded Q-function estimates}It is important to note the pseudo-noise used by \algo\, has both exploratory ({\sf optimism}) behavior and corrupting effect on the estimated value function. Since the Gaussian noise is unbounded, the clipping procedure (lines 14-19 in Algorithm~\ref{alg: RLSVI}) avoids propagation of unreasonable estimates of the value function, especially for the tuples $(h,s,a)$ which have low visit counts. This saves from low rewarding states to be misidentified as high rewarding ones (or vice-versa). Intuitively, the clipping threshold $\alpha_k$ is set such that the noise variance ($\sigma_k(h,s,a)=\frac{\beta_k}{2(n^k(h,s,a)+1)}$) drops below a numerical constant and hence limiting the effect of noise on the estimated value functions. This idea is stated in Lemma~\ref{lem: est Q function bounded main}, where we claim the estimated Q-value function is bounded for all $(h,s,a)$. 

\begin{lemma}[(Informal) Bound on the estimated Q-value function]\label{lem: est Q function bounded main}
Under some good event, for the clipped Q-value function $\barQ_k$ defined in Algorithm~\ref{alg: RLSVI}, we have $|(\barQ^{\polbarMk}_{h,k} - Q^*_{h})(s,a)| \leq H-h +1.$
\end{lemma}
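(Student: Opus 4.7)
The plan is an induction on $h$ from $h = H+1$ down to $h = 1$, asserting the claimed bound for every $(s,a)$ at each level. The base case is immediate since $\barQ_{H+1,k} \equiv 0 \equiv Q^*_{H+1}$. For the inductive step at level $h$, assume the bound holds at $h+1$; then in particular $\|\barV_{h+1,k} - V^*_{h+1}\|_\infty \le H - h$, because the max over $a$ is $1$-Lipschitz. Split on whether clipping was triggered at $(h,s,a)$ in episode $k$. If $n^k(h,s,a) \le \alpha_k$, then by construction $\barQ_{h,k}(s,a) = H-h+1$, and since $Q^*_h(s,a) \in [0, H-h+1]$ the bound $|\barQ_{h,k}(s,a) - Q^*_h(s,a)| \le H-h+1$ holds trivially.

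Otherwise $n^k(h,s,a) > \alpha_k$ and $\barQ_{h,k}(s,a) = \hat Q_h^k(s,a)$. Using the Bayesian-regression identity \eqref{eq: blr} and adding/subtracting the true Bellman update, we have
\begin{align*}
\hat Q_h^k(s,a) - Q^*_h(s,a) &= \bigl[(\hat R^k_{h,s,a} - R_{h,s,a}) + \langle \hat P^k_{h,s,a} - P_{h,s,a}, V^*_{h+1}\rangle\bigr] \\
&\quad + \langle \hat P^k_{h,s,a}, \barV_{h+1,k} - V^*_{h+1}\rangle + w^k_h(s,a).
\end{align*}
On the confidence-set event of Lemma~\ref{lem: confidence interval lemma main}, the bracketed term is bounded in absolute value by $\sqrt{e_k(h,s,a)}$. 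The second term is bounded by $\|\barV_{h+1,k} - V^*_{h+1}\|_\infty \le H-h$ by the inductive hypothesis, since $\hat P^k_{h,s,a}$ is a sub-probability vector (the missing mass corresponds to the absorbing state and contributes zero). The Gaussian noise $w^k_h(s,a)$ has variance $\beta_k/(2(n^k(h,s,a)+1))$, and a Gaussian tail bound combined with a union bound over all $(h,s,a,k)$ yields $|w^k_h(s,a)| \lesssim \sqrt{\beta_k \log(HSAT/\delta)/(n^k(h,s,a)+1)}$ uniformly with high probability.

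It remains to verify that the threshold $\alpha_k = \CthresholdSquared$ and the variance $\beta_k = \noiseBeta$ are calibrated so that both $\sqrt{e_k(h,s,a)}$ and $|w^k_h(s,a)|$ are at most $1/2$ whenever $n^k(h,s,a) > \alpha_k$. This is a direct calculation: substituting the inequality $n^k(h,s,a)+1 > \alpha_k$ into the two expressions cancels the $H^3 S \log(2HSAk)$ factors and leaves numerical constants strictly below $1/2$. Summing then gives $|\hat Q_h^k(s,a) - Q^*_h(s,a)| \le 1 + (H-h) = H-h+1$, closing the induction.

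The main obstacle is to pin down the right ``good event'' and match the constants so that both the confidence-radius bound and the Gaussian-tail bound are simultaneously at most $1/2$ uniformly over all $(h,s,a,k)$ once $n^k(h,s,a)$ exceeds $\alpha_k$; this is what dictates the specific form of $\alpha_k$. Because $\alpha_k$ depends only on polylogarithmic factors and not polynomially on $T$, the episodes in which clipping can occur form a bounded warm-up, and the cost of these episodes will later be absorbed into a lower-order additive term of the regret bound.
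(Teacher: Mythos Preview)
Your proposal is correct and follows essentially the same route as the paper's proof: backward induction on $h$, a case split on whether clipping is triggered, and in the unclipped case the same three-term decomposition (confidence-set term, inductive $\|\barV_{h+1,k}-V^*_{h+1}\|_\infty$ term, and Gaussian noise term), with the ``good event'' being precisely $\Econf\cap\mathcal{E}^w_k$. The only cosmetic difference is that the paper bounds the \emph{sum} of the confidence radius and noise term by $1$ rather than each separately by $1/2$, but your direct substitution of $\alpha_k$ yields exactly that as well.
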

See Appendix~\ref{sec: concentration of events} for the precise definition of good event and a full proof. Lemma~\ref{lem: est Q function bounded main} is striking since it suggests that randomized value function needs to be clipped only for constant (i.e. independent of $T$) number of times to be well-behaved.

%%% introduce optimism %%%%%
\paragraph{Optimism}The event when none of the rounds in episode $k$ need to be clipped is denoted by $\Ethresk \coloneqq \{ \cap_{h\in[H]}(\nhkmain \geq \alpha_k)\}.$
Due to the randomness in the environment, there is a possibility that a learning algorithm may get stuck on ``bad'' states, i.e. not visiting the ``good'' $(h,s,a)$ enough or it grossly underestimates the value function of some states and as result avoid transitioning to those state. Effective {\sf exploration} is required to avoid these scenarios. To enable correction of faulty estimates, most RL exploration algorithms maintain optimistic estimates almost surely. However, when using randomized value functions, \algo\, does not always guarantee optimism. In Lemma~\ref{lem: Optimism Main}, we show that \algo\, samples an optimistic value function estimate with at least a constant probability for any $k$. We emphasize that such difference is fundamental.

\begin{lemma}\label{lem: Optimism Main}
When conditioned on $\mathcal{H}_{H}^{k-1}$, $$\mathbb{P}\left(\barV_{1,k}(s^k_1) \geq V^*_{1}(s^k_1)\,| \,\mathcal{G}_k\right) \geq \OptPr.$$ 
Here $\Phi(\cdot)$ is the CDF of $\mathcal N(0,1)$ distribution and $\mathcal{H}_{H}^{k-1}$ is all the history of the past observations made by \algo\, till the end of the episode $k-1$. We use $\mathcal{G}_k$ to denote the good intersection event of $\hat M^k\in\mathcal M^k$ and bounded noises and values (the specific definition of good event $\mathcal{G}_k$ can be found in Appendix~\ref{sec: notations}).
\end{lemma}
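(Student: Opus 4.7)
The plan is to unroll $\barV_{1,k}$ along the optimal policy $\pi^*$ via the Bayesian linear regression identity~\eqref{eq: blr}, compare with $V_1^*$ using the one-step performance-difference (simulation) lemma, and reduce the optimism event $\{\barV_{1,k}(s_1^k)\ge V_1^*(s_1^k)\}$ to Gaussian anti-concentration with a constant tightly controlled by the calibration $\beta_k = H^3S\log(2HSAk)$. The extra factor of $1/2$ in the stated bound will come from absorbing the noise/value-truncation part of $\mathcal{G}_k$.

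First I would unroll along $\pi^*$. Since $\barV_{h,k}(s)\ge\barQ_{h,k}(s,\pi^*_h(s))$, and whenever the clipping branch is triggered along a $\pi^*$-roll in $\hat M^k$ we have $\barQ_{h,k}(s,\pi^*_h(s))=H-h+1\ge Q^*_h(s,\pi^*_h(s))$---already optimistic---I may restrict to the complementary case in which~\eqref{eq: blr} applies throughout and obtain $\barV_{1,k}(s_1^k)\ge\EE_{\hat P^k,\pi^*}[\sum_{h=1}^H (\hat R^k_h+w^k_h)]$. Applying the one-step simulation lemma with $V^*$ as anchor, and writing $\hat\mu^k_h$ for the occupancy measure of $\pi^*$ in $\hat M^k$,
\begin{align*}
\barV_{1,k}(s_1^k)-V_1^*(s_1^k)\;\ge\;Z_k-C_k,
\end{align*}
where $Z_k:=\sum_{h,s}\hat\mu^k_h(s)\,w^k_{h,s,\pi^*_h(s)}$ and, invoking Definition~\ref{def:confidence set main} on $\mathcal{G}_k\subseteq\{\hat M^k\in\mathcal{M}^k\}$, $C_k\le\sum_{h,s}\hat\mu^k_h(s)\sqrt{e_k(h,s,\pi^*_h(s))}$.

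Next, conditioned on $\Hc_H^{k-1}$, the occupancies $\hat\mu^k_h$ and the bias $C_k$ are deterministic while $Z_k\sim\mathcal{N}(0,V_k)$ with $V_k=\sum_{h,s}(\hat\mu^k_h(s))^2\beta_k/[2(n^k(h,s,\pi^*_h(s))+1)]$. Two nested Cauchy--Schwarz inequalities---over $s$ introducing a factor of $S$, then over $h$ introducing a factor of $H$---give
\begin{align*}
C_k^2\;\le\; H^3S\log(2HSAk)\sum_{h,s}\frac{(\hat\mu^k_h(s))^2}{n^k(h,s,\pi^*_h(s))+1}\;=\;2V_k,
\end{align*}
where the second equality is exactly the calibration $\beta_k=H^3S\log(2HSAk)$. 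Hence $C_k\le\sqrt{2}\sqrt{V_k}$ and Gaussian anti-concentration gives $\Prob(Z_k\ge C_k\,|\,\Hc_H^{k-1},\hat M^k\in\mathcal{M}^k)\ge\Phi(-\sqrt{2})$.

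Finally, to upgrade to conditioning on all of $\mathcal{G}_k=\{\hat M^k\in\mathcal{M}^k\}\cap E_{\mathrm{trunc}}$, where $E_{\mathrm{trunc}}$ is the noise/value truncation event underlying Lemma~\ref{lem: est Q function bounded main}, I would use that $\Prob(E_{\mathrm{trunc}}^c\,|\,\hat M^k\in\mathcal{M}^k)\le\Phi(-\sqrt{2})/2$ by the same Gaussian-tail budget used in that lemma. Then
\begin{align*}
\Prob\bigl(\barV_{1,k}\ge V_1^*\,\big|\,\mathcal{G}_k\bigr) \;\ge\; \frac{\Prob(Z_k\ge C_k\,|\,\hat M^k\in\mathcal{M}^k)-\Prob(E_{\mathrm{trunc}}^c\,|\,\hat M^k\in\mathcal{M}^k)}{\Prob(E_{\mathrm{trunc}}\,|\,\hat M^k\in\mathcal{M}^k)} \;\ge\; \Phi(-\sqrt{2})/2.
\end{align*}
The main obstacle is precisely this last step: the truncation thresholds baked into $\mathcal{G}_k$ must be simultaneously tight enough to guarantee bounded $\barV$ (so Lemma~\ref{lem: est Q function bounded main} holds and the unrolling in Step 1 is legitimate), yet loose enough that their failure probability stays below $\Phi(-\sqrt{2})/2$. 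The ``$\sqrt{2}$'' in the lemma reflects the Cauchy--Schwarz calibration of the previous step, while the extra ``$/2$'' is exactly this noise-truncation budget.
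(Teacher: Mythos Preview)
Your proposal is correct and follows essentially the same route as the paper's proof: unroll $\barV_{1,k}-V_1^*$ along $\pi^*$ through $\hat M^k$, use that clipped steps contribute nonnegatively, bound the deterministic bias via Cauchy--Schwarz over the $HS$ pairs so that the calibration $\beta_k=H^3S\log(2HSAk)$ yields $C_k^2\le 2V_k$, apply Gaussian anti-concentration to get $\Phi(-\sqrt 2)$ conditional on $\{\hat M^k\in\mathcal M^k\}$, and then pass to $\mathcal G_k$ by subtracting the failure probability of the noise-truncation event.

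One small point of care: your occupancy $\hat\mu^k_h$ must be the \emph{sub}-occupancy of $\pi^*$ in $\hat M^k$ restricted to paths that have not yet hit a clipped tuple, not the full occupancy; the paper makes this explicit by defining $d(h,s)$ as the probability that the random $\pi^*$-sequence in $\hat M^k$ reaches $s$ at step $h$ with no clipping at $s_1,\ldots,s_h$. Your phrase ``restrict to the complementary case in which~\eqref{eq: blr} applies throughout'' is the right idea, but written as a case split rather than a path-by-path decomposition it reads a bit loosely---in particular the intermediate display $\barV_{1,k}(s_1^k)\ge\EE_{\hat P^k,\pi^*}[\sum_h(\hat R^k_h+w^k_h)]$ is only literally correct after you replace $\hat\mu^k_h$ by the clipping-stopped sub-occupancy. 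Once that substitution is made, your Cauchy--Schwarz and anti-concentration steps go through verbatim. The final conditioning step is also slightly different in form: the paper uses $\Prob(\barV_{1,k}\ge V_1^*\mid\mathcal G_k)\ge\Prob(\barV_{1,k}\ge V_1^*\mid\mathcal C_k)-\delta/8$ together with the assumption $\delta<4\Phi(-\sqrt 2)$, whereas you divide by $\Prob(E_{\mathrm{trunc}}\mid\mathcal C_k)\le 1$; both arrive at $\Phi(-\sqrt 2)/2$.
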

Lemma~\ref{lem: Optimism Main} is adapted from \citet{zanette2020frequentist,russo2019worst}, and we reproduce the proof in Appendix \ref{sec: optimism} for completeness.\\

%%%%%%% decompose regret %%%%%%%%
Now, we are in a position to simplify the regret expression with high probability as
\begin{align}
&{\rm Reg}(K)\leq~ 
\sum_{k=1}^{K} \ind\{\mathcal{G}_k\}\left( V_{1}^{*}  - V_{1}^{\pi^k}\right)(s_1^k)  +H\underbrace{\mathbb{P}(\hat{M}^k\notin\mathcal{M}^k)}_{\text{Lemma~\ref{lem: confidence interval lemma main}}},
\label{eq: regret decom main text}
\end{align}
where we use Lemma~\ref{lem: confidence interval lemma main} to show that for any $(h,s,a)$, the edge case that the estimated MDP lies outside the confidence set is a transient term (independent of $T$). The proof of Eq (\ref{eq: regret decom main text}) is deferred to Appendix~\ref{sec: apx_d}.\\

We also define $\tilde{w}^k(h,s,a)$ as an independent sample from the same distribution $\mathcal{N}(0,\sigma^2_k(h,s,a))$ conditioned on the history of the algorithm till the last episode. Armed with the necessary tools, over the next few subsections we sketch the proof outline of our main results. All subsequent discussions are under good event $\mathcal{G}_k$.

\subsection{Regret as Sum of Estimation and Pessimism}\label{sec: regret as sum of estimation and pessimism main}
Now the regret over $K$ episodes of the algorithm decomposes as
\begin{equation}\label{eq: regret decomp pes est terms}
    \sum^K_{k=1}\big(\underbrace{(V_{1}^{*} -\barV_{1,k})(s^k_1)}_{\text{Pessimism}} + \underbrace{\barV_{1,k}(s^k_1) -  V_{1}^{\pi^k}(s^k_1)}_{\text{Estimation}}\big).
\end{equation}
In OFU-style analysis, the pessimism term is non-positive and insignificant \cite{azar2017minimax,jin2018q}. In TS-based analysis, the pessimism term usually has zero expectation or can be upper bounded by the estimation term~\cite{osband2013more,osband2017posterior,agrawal2017optimistic,russo2019worst}. Therefore, the pessimism term is usually relaxed to zero or reduced to the estimation term, and the estimation term can be bounded separately. Our analysis proceeds quite differently. In Section~\ref{sec: pessimism in terms of estimation}, we show how the pessimism term is decomposed to terms that are related to the algorithm's trajectory (estimation term and pessimism correction term). In Section~\ref{sec: estimation bound main} and Section~\ref{sec: pessimism correction bound main}, we show how to bound these two terms through two independent recurrences. Finally, in Section~\ref{sec: final regret bound}, we reorganize the regret expression whose individual terms can be bounded easily by known concentration results.

\subsection{Pessimism in Terms of Estimation}\label{sec: pessimism in terms of estimation}

In this section we present Lemma~\ref{lem: pessimism decomp main}, where the pessimism term is bounded in terms of the estimation term and a correction term $(V_{1}^{\pi^k}-\underV_{1,k})(s^k_1)$ that will be defined later. This correction term is further handled in Section~\ref{sec: pessimism correction bound main}. While the essence of Lemma~\ref{lem: pessimism decomp main} is similar to that given by~\citet{zanette2020frequentist}, there are key differences: we need to additionally bound the correction term; the nature of the recurrence relations for the pessimism and estimation terms necessitates a distinct solution, hence leading to different order dependence in regret bound. In all, this allows us to obtain stronger regret bounds as compared to~\citet{zanette2020frequentist}.

\begin{lemma}\label{lem: pessimism decomp main}
Under the event $\mathcal{G}_k$,
\begin{align}\label{eq:lem9-1-a main}
&~(V^*_{1}-\barV_{1,k})(s^k_1)\lesssim~  (\barV_{1,k}-V^{\pi^k}_{1})(s^k_1)+(V^{\pi^k}_{1}-\underV_{1,k})(s^k_1) + \MDSfkind{1},
\end{align}
where $\MDSfkind{1}$ is a martingale difference sequence (MDS).
\end{lemma}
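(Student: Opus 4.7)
The plan is to introduce a pessimistic counterpart $\underV_{h,k}$ via an antithetic (negated-noise) construction---computing $\underV$ by the same Bayesian-regression recursion as $\barV$ but with $w^k$ replaced by $-w^k$---and then combine Lemma~\ref{lem: Optimism Main} with the linear-in-$w^k$ structure of Eq.~\ref{eq: blr} to bound $V^*_1(s_1^k)-\barV_{1,k}(s_1^k)$ by $(\barV_{1,k}-\underV_{1,k})(s_1^k)$ up to a martingale-difference remainder. Inserting $\pm V^{\pi^k}_1(s_1^k)$ then splits $\barV-\underV = (\barV-V^{\pi^k})+(V^{\pi^k}-\underV)$, which is exactly the form claimed by the lemma. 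On $\mathcal G_k$, Lemma~\ref{lem: est Q function bounded main} guarantees that clipping does not activate for $\barV$; the analogous statement for $\underV$ follows by replaying the argument on $-w^k$, so both value functions obey the clean form of Eq.~\ref{eq: blr} throughout the recursion.

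The meat of the argument is a layer-wise Bellman decomposition. Using $\barV_{h,k}(s)\geq \barQ_{h,k}(s,\pi^*_h(s))$ and subtracting the Bellman identity for $V^*_h$ at the optimal action, I obtain, evaluated at $(s,\pi^*_h(s))$,
\begin{equation*}
(V^*_h-\barV_{h,k})(s)\,\leq\,\bigl[(\R-\RHatk)+\langle \Pk-\PHatk,V^*_{h+1}\rangle\bigr]-w^k_h+\PHatk\bigl(V^*_{h+1}-\barV_{h+1,k}\bigr).
\end{equation*}
Under $\mathcal G_k$, the bracketed term is at most $\sqrt{e_k(h,s,\pi^*_h(s))}$ from Definition~\ref{def:confidence set main}. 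The variance $\beta_k$ in Lemma~\ref{lem: Optimism Main} is calibrated so that, up to constants, the Gaussian scale $\sigma_k$ dominates $\sqrt{e_k}$; by antithetic symmetry one has $\barV+\underV = 2\hat V$ so the spread $\barV-\underV$ is carried (linearly through the recursion) by $2w^k_h$, and therefore $\sqrt{e_k}$ is absorbed into a constant multiple of $\barV_{h,k}-\underV_{h,k}$. Unrolling through $h=1,\dots,H$ converts the per-layer bound into $(V^*_1-\barV_{1,k})(s_1^k)\lesssim (\barV_{1,k}-\underV_{1,k})(s_1^k)+\MDSfkind{1}$, where $\MDSfkind{1}$ aggregates the mean-zero residuals $\langle P_h-\hat P_h^k,V^*_{h+1}\rangle$ along $\pi^*$ together with the centered noises $w^k_h$.

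The main obstacle is verifying that $\MDSfkind{1}$ is a genuine martingale-difference sequence with respect to the right filtration. Two subtleties arise. First, the greedy action $a_h^k$ is itself a function of $w^k$, so the noise terms $w^k_h$ along the sampled trajectory must be treated as a history-adapted sum rather than as a fixed linear combination. Second, the Bellman expansion above follows $\pi^*$ while the algorithm executes $\pi^k$, so the conditional-expectation argument must cleanly separate these two policies when checking mean-zeroness per episode. A secondary obstacle is reconciling the antithetic symmetry $\barV+\underV=2\hat V$ with the $\max$ operator inside the Bellman backup, which is only piecewise linear in $w^k$; this is handled by conditioning on the greedy-action pieces and invoking Lemma~\ref{lem: Optimism Main} on each piece, with the anti-concentration constant $\Phi(-\sqrt 2)/2$ determining the multiplicative factor hidden in the $\lesssim$.
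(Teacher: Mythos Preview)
Your proposal has a genuine gap in the definition of $\underV$ and in how optimism is leveraged; as written, the argument does not go through.

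In the paper, $\underV_{1,k}$ is \emph{not} an antithetic copy. It is defined as the minimum of $V^{w^k_{\text{ptb}}}_{1,k}(s_1^k)$ over all perturbations satisfying $|w^k_{\text{ptb}}(h,s,a)|\le\gamma_k(h,s,a)$. This choice is what makes the argument work: on the good events, both the realized noise $w^k$ and an independent copy $\tilde w^k$ satisfy the constraint, so $\underV_{1,k}\le\barV_{1,k}$ and $\underV_{1,k}\le\tilV_{1,k}$ hold deterministically. Your antithetic $\underV$ (noise $-w^k$) gives no such ordering; $\barV_{1,k}-\underV_{1,k}$ is essentially symmetric about zero in the noise and can be negative, so it cannot upper-bound the pessimism term. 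Relatedly, the identity $\barV+\underV=2\hat V$ fails because the Bellman backup contains a $\max$; the greedy actions chosen under $w^k$ and under $-w^k$ differ, and ``conditioning on greedy-action pieces'' does not repair this since the two value functions live on different pieces. The step ``$\sqrt{e_k}$ is absorbed into a constant multiple of $\barV_{h,k}-\underV_{h,k}$'' therefore has no basis: $\sqrt{e_k}$ is nonnegative while the spread you propose can be negative.

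The paper's route avoids any Bellman unrolling for this lemma. It introduces an independent noise $\tilde w^k$ producing $\tilV_{1,k}$ i.i.d.\ with $\barV_{1,k}$ given $\mathcal H_H^{k-1}$, uses Lemma~\ref{lem: Optimism Main} to get $V^*_1(s_1^k)\le\mathbb E_{\tilde w\mid\EoptTwohk,\tilde{\mathcal G}_k}[\tilV_{1,k}(s_1^k)]$, and then---crucially using $\tilV_{1,k}-\underV_{1,k}\ge 0$---strips the conditioning on $\EoptTwohk$ via the law of total expectation at the cost of the factor $C=\bigl(\Phi(-\sqrt 2)/2\bigr)^{-1}$. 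The MDS is then the single episode-level term $\MDSfkind{1}=\ind\{\mathcal G_k\}\bigl(\mathbb E_{\tilde w\mid\tilde{\mathcal G}_k}[\tilV_{1,k}(s_1^k)]-\barV_{1,k}(s_1^k)\bigr)$, which is mean-zero for the filtration $\{\mathcal H_H^{k-1}\}_k$ because $\tilV$ and $\barV$ are identically distributed. Your proposed MDS---residuals $\langle P_h-\hat P_h^k,V^*_{h+1}\rangle$ and noises $w^k_h$ summed along $\pi^*$---is not a martingale difference for the algorithm's filtration, since the trajectory follows $\pi^k$; you flag this obstacle yourself but do not resolve it.

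A smaller point: Lemma~\ref{lem: est Q function bounded main} asserts $|\barQ_{h,k}-Q^*_h|\le H-h+1$ on the good event; it does not say clipping never activates. Clipping is triggered by visit counts $n^k(h,s,a)\le\alpha_k$ and can occur under $\mathcal G_k$.
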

The detailed proof can be found in Appedix~\ref{sec: Bounds on Pessimism}, while we present an informal proof sketch here. The general strategy in bounding $V^*_{1}(s^k_1) - \barV_{1,k}(s^k_1)$ is that we find an upper bounding estimate of $V^*_{1}(s^k_1)$ and a lower bounding estimate of $\barV_{1,k}(s^k_1)$, and show that the difference of these two estimates converge. 

We define $\tilV_{1,k}$ to be the value function obtained when running Algorithm \ref{alg: RLSVI} with random noise $\tilde w$. Since $\tilde w$ is i.i.d. with $w$, we immediately have that $\tilV_{1,k}$ is i.i.d. to $\barV_{1,k}$. Then we introduce the following optimization program:
\begin{align*}
    & \underset{w^k_{\text{ptb}}\, \in\,\mathbb{R}^{HSA}}{\min} V^{ w^k_{\text{ptb}}}_{1,k}(s_1^k)\nonumber\\
     s.t.& \quad | w^k_{\text{ptb}}(h,s,a)| \leq \gamma_k(h,s,a)\quad\forall\, h,s,a,
\end{align*}
where $V^{ w^k_{\text{ptb}}}_{1,k}(s_1^k)$ is analogous to $\overline V_{1,k}$ obtained from Algorithm \ref{alg: RLSVI} but with optimization variable $w^k_{\text{ptb}}$ in place of $w^k$. We use $\underline w^k$ to denote the solution of the optimization program and $\underline V$ to be the minimum. This ensures $\underV_{1,k} \leq \barV_{1,k}$ and $\underV_{1,k} \leq \tilV_{1,k}$.
Thus the pessimism term is now given by
\begin{align}
    (V^*_{1} - \barV_{1,k})(s^k_1) \leq (V^*_{1} - \underV_{1,k})(s^k_1).\label{eq: lemma pes 1 main}
\end{align}
Define event $\EoptTwohk \coloneqq \{ \tilV_{1,k}(s^k_1) \geq V^*_{1}(s^k_1) \}$, $\tilde {\mathcal G}_k$ to be a similar event as $\mathcal G_k$, and use $\mathbb{E}_{\tilde w}\sbr{\cdot}$ to denote the expectation over the pseudo-noise $\tilde{w}$. Since $V^*_{1}(s^k_1)$ does not depend on $\tilde{w}$, we get $V^*_{1}(s^k_1) \leq \mathbb{E}_{\tilde{w}|\EoptTwohk,\tilde {\mathcal G}_k}\sbr{ \tilV_{1,k}(s^k_1)}$. We can further upper bound Eq~(\ref{eq: lemma pes 1 main}) by
\begin{align}
    (V^*_{1} - \underV_{1,k})(s^k_1) \leq \mathbb{E}_{\tilde{w}|\EoptTwohk,\tilde {\mathcal G}_k}[ (\tilV_{1,k}-\underV_{1,k})(s^k_1)]\label{eq: pessimism 3 old main}.
\end{align}
Thus, we are able to relate pessimism to quantities which only depend on the algorithm's trajectory. Further we upper bound the expectation over marginal distribution $\mathbb{E}_{\tilde{w}|\EoptTwohk,\tilde {\mathcal G}_k}[\cdot]$ by $\mathbb{E}_{\tilde{w}|\tilde {\mathcal G}_k}[\cdot]$. This is possible because we are taking expectation of non-negative entities. Moreover, we can show:
\begin{align}
\mathbb{E}_{\tilde{w}|\tilde {\mathcal G}_k}[ (\tilV_{1,k}-\underV_{1,k})(s^k_1)]
\simeq& ~\MDSfkind{1} + \barV_{1,k}(s^k_1) -  \underV_{1,k}(s^k_1),
\label{eq: pessimism 4 main}
\end{align}
Now consider
\begin{align}\label{eq: pessimism decom main}
(\barV_{1,k} -  \underV_{1,k})(s^k_1)
=& \underbrace{(\barV_{1,k} -  V^{\pi^k}_{1})(s^k_1)}_{\text{Estimation term}} + \underbrace{(V^{\pi^k}_{1} -  \underV_{1,k})(s^k_1)}_{\text{Correction term}}.
\end{align}
In Eq~(\ref{eq: pessimism decom main}), the estimation term is decomposed further in Section~\ref{sec: estimation bound main}. The correction term is simplified in Section~\ref{sec: pessimism correction bound main}.

\subsection{Bounds on Estimation Term}\label{sec: estimation bound main}
In this section we show the bound on the estimation term. Under the high probability good event $\mathcal{G}_k$, we show decomposition for the estimation term $(\barV_{h,k} -  V^{\pi^k}_{h,k})(s^k_h)$ holds with high probability. By the property of Bayesian linear regression and the Bellman equation, we get
\begin{align}
&(\barV_{h,k} -  V^{\pi^k}_{h})(s^k_h) =\ind\{\Ethreshk\}(\underbrace{\langle\PHathkmain - \Phkmain ,\barV_{h+1,k}\rangle}_{(1)}+ \underbrace{\langle \Phkmain, \barV_{h+1,k} -  V^{\pi^k}_{h+1}\rangle}_{(1')}+
\RHathkmain-\Rhkmain+\noisehkmain)+H \underbrace{\ind\{\mathcal{E}^{\text{th}\;\complement}_k\}}_{\text{Warm-up term}}.\label{eq: estimation bound main 1}
\end{align}
We first decompose Term (1) as
\begin{align}\label{eq: estimation bound main 5}
(1) = \underbrace{\langle\PHathkmain - \Phkmain ,V^*_{h+1}\rangle}_{(2)}+\underbrace{\langle\PHathkmain- \Phkmain,\barV_{h+1,k}-V^*_{h+1}\rangle}_{(3)}.
\end{align}
Term (2) represents the error in estimating the transition probability for the optimal value function $V^*_{h}$, while Term (3) is an offset term. The total estimation error, $\EstErrorhk\,:=\, \envert{\text{Term }(2) + \RHathkmain-\Rhkmain}$ is easy to bound since the empirical MDP $\hat M^k$ lies in the confidence set (Eq~(\ref{eq: confidence interval main})). Then we discuss how to bound Term (3). Unlike OFU-styled analysis, here we do not have optimism almost surely. Therefore we cannot simply relax $\barV_{h+1,k}-V^*_{h+1}$ to $\barV_{h+1,k}-V^{\pi^k}_{h+1}$ and form the recurrence. Instead, we will apply ($L_1,L_\infty$) Cauchy-Schwarz inequality to separate the deviation of transition function estimation and the deviation of value function estimation, and then further bound these two deviation terms. Noticing that $\overline V_{h+1,k}-V^*_{h+1}$ might be unbounded, we use Lemma~\ref{lem: est Q function bounded main} to assert that $\|V_{h+1}^*-\barV_{h+1}\|_\infty \le H$ under event $\Gcal_k$. With the boundedness of the deviation of value function estimation, it suffices to bound the remaining $\|\PHath- \Ph\|_1$ term. Proving an $L_1$ concentration
bound for multinomial distribution with careful application of the Hoeffding's inequality shows
$$ 
\|\PHathkmain- \Phkmain\|_1\le 4\sqrt{\frac{SL}{\nhkmain+ 1}},
$$
where $L=\decompCL$. In Eq~(\ref{eq: estimation bound main 1}), we also decomposes Term (1') to a sum of the next-state estimation and a MDS.\\

Clubbing all the terms starting from Eq~(\ref{eq: estimation bound main 1}), with high probability, the upper bound on estimation is given by
\begin{align}
(\barV_{h,k}-V^{\pi^k}_{h} )(s^k_h)
\lesssim& ~  \ind\{\Ethreshk\}\Big(\underbrace{(\barV_{h+1,k}- V^{\pi^k}_{h+1} )(s^k_{h+1})}_{\text{Next-state estimation}}+ \EstErrorhk + \noisehkmain+ \MDSbk + 4H\sqrt{\frac{SL}{\nhkmain + 1}}\Big)+H \ind\{\mathcal{E}^{\text{th}\;\complement}_k\},\label{eq: estimation decomp main 50}
\end{align}
where $\MDSbk$ is a Martingale difference sequence (MDS). Thus, via Eq~(\ref{eq: estimation decomp main 50}) we are able decompose estimation term in terms of total estimation error, next-step estimation, pseudo-noise, a MDS term, and a $\tilde{\mathrm{O}}\del{\sqrt{1/n^k(h)}}$ term. From the form of Eq~(\ref{eq: estimation decomp main 50}), we can see that it forms a recurrence. Due to this style of proof, our Theorem~\ref{thm: high probability regret main} is $\sqrt{HS}$ superior than the previous state-of-art result~\citep{russo2019worst}, and we are able to provide a high probability regret bound instead of just the expected regret bound.

\subsection{Bounds on Pessimism Correction}\label{sec: pessimism correction bound main}
In this section, we give the decomposition of the pessimism correction term $(V^{\pi^k}_{h} - \underV_{h,k})(s^k_h)$. Shifting from $\overline V_k$ to $\underline V_k$ and re-tracing the steps of Section~\ref{sec: estimation bound main}, with high probability, it follows 
\begin{align}
&~(V^{\pi^k}_{h} - \underV_{h,k})(s^k_h)\lesssim\ind\{\Ethreshk\}\Big(\underbrace{(V^{\pi^k}_{h+1} - \underV_{h+1,k})(s^k_{h+1})}_{\text{Next-state pessimism correction}}+\,\,\EstErrorhk + \envert{\noisehUkmain}+\MDSck+ 4H\sqrt{\frac{SL}{n^k(h) + 1}}\Big)+H \ind\{\mathcal{E}^{\text{th}\;\complement}_k\}.
\label{eq: pessimism correction decomp main}
\end{align}
The decomposition Eq~(\ref{eq: pessimism correction decomp main}) also forms a recurrence. The recurrences due to Eq~(\ref{eq: estimation decomp main 50}) and Eq~(\ref{eq: pessimism correction decomp main}) are later solved in Section~\ref{sec: final regret bound}. 
\subsection{Final High-Probability Regret Bound}\label{sec: final regret bound}
To solve the recurrences of Eq~(\ref{eq: estimation decomp main 50}) and Eq~(\ref{eq: pessimism correction decomp main}), we keep unrolling these two inequalities from $h=1$ to $h=H$. Then with high probability, we get 
\begin{align*}
{\rm Reg}(K) \lesssim &\sum_{k=1}^{K}\sum_{h=1}^{H} 
\left(\prod_{h'=1}^h\ind\{\mathcal{E}_{h',k}^{\text {th}}\}\Big(\envert{\EstErrorhk} + \envert{\noisehUkmain}+\noisehkmain + \MDSck  + \MDSbk\right.\\
&\left.+ 4H\sqrt{\frac{SL}{n^k(h) + 1}}\Big)+H \ind\{\mathcal{E}^{\text{th}\;\complement}_k\}  \right)+\sum_{k=1}^K\MDSfkind{1} .\label{eq: final regret decomp main}
\end{align*}
Bounds of individual terms in the above equation are given in Appendix~\ref{sec: bounds on individual terms}, and here we only show the order dependence.\\

The maximum estimation error that can occur at any round is limited by the size of the confidence set Eq~(\ref{eq: confidence interval main}). Lemma~\ref{lem: estimation error} sums up the confidence set sizes across the $h$ and $k$ to obtain $\sum_{k=1}^K\sum_{h=1}^{H} \envert{\EstErrorhk} = \tilde{\mathrm{O}}(\sqrt{H^3SAT})$. In Lemma~\ref{lem: MDS concentration}, we use Azuma-Hoeffding inequality to bound the summations of the martingale difference sequences with high probability by $\tilde{\mathrm{O}}(H\sqrt{T})$.
The pseudo-noise $\sum_{k=1}^K\sum_{h=1}^{H}\noisehkmain$ and the related term $\sum_{k=1}^K\sum_{h=1}^{H}\noisehUkmain$ are bounded in Lemma~\ref{lem: estimation non random noise} with high probability by $\tilde{\mathrm{O}}(H^2S\sqrt{AT})$.
Similarly, we have $\sum_{k=1}^K\sum_{h=1}^{H}\sqrt{\frac{SL}{n^k(h) + 1}} = \tilde{\mathrm{O}}(H^2S\sqrt{AT})$ from
Lemma~\ref{lem: estimation non random noise}. Finally, Lemma~\ref{lem: warmup bound} shows that the warm-up term due to clipping is independent on $T$. Putting all these together yields the high-probability regret bound of Theorem~\ref{thm: high probability regret main}.

\section{Discussions and Conclusions}
In this work, we provide a sharper regret analysis for a variant of RLSVI and advance our understanding of TS-based algorithms. Compared with the lower bound, the looseness mainly comes from the magnitude of the noise term in random perturbation, which is delicately tuned for obtaining optimism with constant probability. Specifically, the magnitude of $\beta_k$ is $\tilde{\mathrm{O}}(\sqrt{HS})$ larger than sharpest bonus term~\citep{azar2017minimax}, which leads to an additional $\tilde{\mathrm{O}}(\sqrt{HS})$ dependence. Naively using a smaller noise term will affect optimism, thus breaking the analysis. Another obstacle to obtaining $\tilde{\mathrm{O}}(\sqrt{S})$ results is attributed to the bound on Term (3) of Eq~(\ref{eq: estimation bound main 5}). Regarding the dependence on the horizon, one $\mathrm{O}(\sqrt{H})$ improvement may be achieved by applying the law of total variance type of analysis in~\citep{azar2017minimax}. The future direction of this work includes bridging the gap in the regret bounds and the extension of our results to the time-homogeneous setting.

\section{Acknowledgements}
We gratefully thank the constructive comments and discussions from Chao Qin, Zhihan Xiong, and Anonymous Reviewers.
\bibliographystyle{aaai21.bst}
\bibliography{ref.bib}

%%%% AAAI forbids clear page command%%%%%%%%%
\onecolumn
\appendix
\section{Notations, Constants and Definition}\label{sec: notations}

%%%%%%%%%%%%%%%%%%%%%%%%%%%%%%%%%%%%%%%%%%%%%%
%%%% All Notations and common Definitions %%%%
%%%%%%%%%%%%%%%%%%%%%%%%%%%%%%%%%%%%%%%%%%%%%%

\subsection{Notation Table}
\begin{longtable}[H]{l c l }
\caption{Notation table}
\label{tab: notation}
\\\hline 
\textbf{Symbol} & & \textbf{Explanation}
\\
\hline
$\mathcal{S}$& & The state space
\\
$\mathcal{A}$& & The action space
\\
$S$& & Size of state space
\\
$A$& & Size of action space
\\
$H$& & The length of horizon
\\
$K$& & The total number of episodes
\\
$T$ & & The total number of steps across all episodes
\\
$\pi^k$& & The greedy policy obtained in the Algorithm \ref{alg: RLSVI} at episode $k$, $\pi^k=\{\pi^k_1,\cdots,\pi^k_H\}$
\\
$\pi^*$& & The optimal policy of the true MDP\\
$(s^k_h,a^k_h)$& & The state-action pair at timestep $h$ in episode $k$ \\
$(s^k_h,a^k_h,r^k_h)$& & The tuple representing state-action pair and the corresponding reward\\ && at timestep $h$ in episode $k$ \\
$\History$ && $ \{ (s^j_l,a^j_l,r^j_l):\text{if } j<k \text{ then}\,
h\leq H,\text{ else if }\,j=k\,\text{ then }\,l\leq h\}$
\\
&& The history (algorithm trajectory) till timestep $h$ of the episode $k$.
\\
$\HistoryN$ && $\mathcal{H}^k_{h}\,\bigcup\, \left\{  w^k(l,s,a):l\in[H],s\in\mathcal{S},a\in\mathcal{A}\right\}$\\
&& The union of the history (algorithm trajectory) til timestep $h$ in episode $k$ \\
&&and the pseudo-noise of all timesteps in episode $k$\\ 
$n^k(h,s,a)$& &$\sum_{l=1}^{k-1}\ind\{(s^l_h,a^l_h)=(s,a)\}$ \\ & &The number of visits to state-action pair $(s,a)$ in timestep $h$ upto episode $k$
\\
$\Phk$& &The transition distribution for the state action pair $(s^k_h,a^k_h)$\\ 

$\Rhk$& &The reward distribution for the state action pair $(s^k_h,a^k_h)$
\\
$P_{h,s,a}$& &The transition distribution for the state action pair $(s,a)$ at timestep $h$\\ 
$R_{h,s,a}$& &The reward distribution for the state action pair $(s,a)$ at timestep $h$\\
\\
$\PHathk$& &The estimated transition distribution for the state action pair $(s^k_h,a^k_h)$
\\
$\RHathk$& &The estimated reward distribution for the state action pair $(s^k_h,a^k_h)$
\\
$\mathcal{M}^k$ && The confidence set around the true MDP
\\
$\noisehk$ && The pseudo-noise used for exploration
\\
$\tilde{w}^k_{h,s^k_h,s^k_h}$ && The independently pseudo-noise sample, conditioned on history till epsiode $k-1$
\\
$\hat{M}^k$& & $(H,\mathcal{S},\mathcal{A},\hat{P}^k,\hat{R}^k,s^k_1)$\\&& The estimated MDP without perturbation in data in episode $k$
\\
$\barM^k$& & $(H,\mathcal{S},\mathcal{A},\hat{P}^k,\hat{R}^k+w^k,s^k_1)$\\&& The estimated MDP with perturbed data in episode $k$
\\
$V^*_{h}$& &The optimal value function under true MDP on the sub-episodes \\& & consisting of the timesteps $\{h,\cdots,H\}$ 
\\
$V^{\pi^k}_{h}$& &The state-value function of $\pi^k$ evaluated on the true MDP on the \\ & &sub-episodes consisting of the timesteps $\{h,\cdots,H\}$ \\
$\barV_{h,k}$& &The state-value function calculated in Algorithm \ref{alg: RLSVI}  with noise $w^k$\\
$\barQ_{h,k}$& &The Q-value function calculated in Algorithm \ref{alg: RLSVI}  with noise $w^k$\\
$\tilde{M}^k,\,\tilde{V}_{1,k},\,\tilde{w}_{h,s,a}^k$ &&Refer to Definition~\ref{def: tilde V}
\\
$\underline{M}^k,\,\underline{V}_{1,k},\,\underline{w}_{h,s,a}^k$ &&Refer to Definition~\ref{def: under V}
\\
$\overline \delta_{h,k}(s_h^k)$& &$ V^*_h(s_h^k) - \barV_{h,k}(s_h^k)$\\
$\underline \delta_{h,k}(s_h^k)$& &$V^*_h(s_h^k) - \underV_{h,k}(s_h^k)$\\
%$\deltaOH{h}$& &$V^*_h(s_h) - \hatV_h(s_h)$\\
$\overline \delta^{\polbarM^k}_{h,k}(s_h^k)$& &$\barV_{h,k}(s_h^k) - V^{\pi^k}_{h,k}(s_h^k)$
\\
$\underline \delta^{\polbarM^k}_{h,k}(s_h^k)$& &$V^{\pi^k}_{h,k}(s_h^k)- \underV_{h,k}(s_h^k)$
\\
%$\deltaPiH{h}$& &$V^{\pi}_h(s_h)- \hatV_h(s_h)$\\
$\RDiffhk$& &$\RHathk-\Rhk$ \\
\\
$\PDiffhk$& &$\langle\PHathk- \Phk,V^*_{h+1}\rangle$ 
\\
$C$& &$\frac{1}{\OptPr}$
\\
%$C_1$& &$\decompC$\\
%$C_3$& &$\decompCtwo$\\
$L$ & &$\decompCL$  
\\
$\sqrt{\alpha_k}$ && $\Cthreshold$
\\
$\sigma^2_k(h,s,a)$ && $\frac{\beta_k}{2(n^k(h,s,a) + 1)}=\noiseSigma$
\\
$\gamma_k(h,s,a)$ && $\sqrt{\sigma^2_k(h,s,a)L}$
\\
$\sqrt{e_{k}(h,s,a)}$ && 
  $H\sqrt{ \frac{ \log\left( 2HSA k  \right) }{n^k(h,s,a)+1}}$
\\
$\beta_k$ & &$\noiseBeta$
\\
$\MDSbk$ && Refer to Appendix~\ref{sec: MDS}
\\
$\MDSck$ && Refer to Appendix~\ref{sec: MDS}
\\
$\MDSfind{1}$ && Refer to Appendix~\ref{sec: MDS}\\
$\Econf$ &&$ \left\{ \hat{M}^k \in \mathcal{M}^k \right\}$\\
$\mathcal{E}^{w}_{h,k}$ &&$ \left\{|w^k(h,s,a)| \leq \gamma_k(h,s,a),\forall (s,a)\right\}$\\
$\mathcal{E}^w_k$ &&$ \left\{\underset{h\in[H]}{\cap}\left( \mathcal{E}^{w}_{h,k} \right)\right\}$\\
$\mathcal{E}^{\overline{Q}^{\polbarMk}}_{h,k}$ &&$ \left\{ |(\overline{Q}^{\polbarMk}_{h,k} - Q^*_{h})(s,a)| \leq H -h+1,\, \forall (s,a) \right\}$\\
$\mathcal{E}^{\overline{Q}^{\polbarMk}}_k$ &&$ \left\{\underset{h\in[H]}{\cap}\left( \mathcal{E}^{\overline{Q}^{\polbarMk}}_{h,k} \right)\right\}$\\
$\overline{\mathcal{E}}_k$ &&$ \left\{ \mathcal{E}^{w}_{k} \cap \mathcal{E}^{\overline{Q}^{\polbarMk}}_{k}\right\}$\\
$\mathcal{E}^{\tilde{w}}_{h,k}$ &&$ \left\{ |\tilde{w}^k(h, s,a)| \leq \gamma_k(h,s,a)), \forall (s,a)\right\}$\\
$\mathcal{E}^{\tilde{w}}_k$ &&$ \left\{\underset{h\in[H]}{\cap}\left( \mathcal{E}^{\tilde{w}}_{h,k} \right)\right\}$\\
$\mathcal{E}^{\tilde{Q}^{\polbarMk}}_{h,k}$ &&$ \left\{ |(\tilde{Q}^{\polbarMk}_{h,k} - Q^*_{h})(s,a)| \leq H -h+1,\, \forall (s,a) \right\}$\\
$\mathcal{E}^{\tilde{Q}^{\polbarMk}}_k$ &&$ \left\{\underset{h\in[H]}{\cap}\left( \mathcal{E}^{\tilde{Q}^{\polbarMk}}_{h,k} \right)\right\}$\\
$\tilde{\mathcal{E}}_k$ &&$ \left\{ \mathcal{E}^{\tilde{w}}_{k} \cap \mathcal{E}^{\tilde{Q}^{\polbarMk}}_{k}\right\}$\\
%$\mathcal{E}^{\underline{w}}_{h,k}$ &&$ \left\{ |\underline{w}^k(h, s,a)| \leq \gamma_k(h,s,a)), \forall (s,a) \right\}$\\
%$\mathcal{E}^{\underline{w}}_k$ &&$ \left\{\underset{h\in[H]}{\cap}\left( \mathcal{E}^{\underline{w}}_{h,k} \right)\right\}$\\
%$\mathcal{E}^{\underline{Q}^{\polbarMk}}_{h,k}$ &&$ \left\{ |(\underline{Q}^{\polbarMk}_{h,k} - Q^*_{h})(s,a)| \leq H -h+1,\, \forall (s,a) \right\}$\\
%$\mathcal{E}^{\underline{Q}^{\polbarMk}}_k$ &&$ \left\{\underset{h\in[H]}{\cap}\left( \mathcal{E}^{\underline{Q}^{\polbarMk}}_{h,k} \right)\right\}$\\
%$\underline{\mathcal{E}}_k$ &&$ \left\{ \mathcal{E}^{\underline{w}}_{k} \cap \mathcal{E}^{\underline{Q}^{\polbarMk}}_{k}\right\}$\\
$\Ethreshk$ &&$ \left\{ n^k(h,s^k_h,a^k_h) \geq \alpha_k \right\}$\\
$\Ethresk$ && $\left\{\underset{h\in[H]}{\cap} \Ethreshk\right\}$\\
%$\tilde{\mathcal{E}}_{h,k}^{\text{th}}$ &&$ \left\{ n^k(h,\tilde s^k_h,\tilde a^k_h) \geq \alpha_k \right\}$\\
%$\tilde{\mathcal{E}}_k^{\text{th}}$ && $\left\{\underset{h\in[H]}{\cap} \tilde{\mathcal{E}}_{h,k}^{\text{th}}\right\}$\\
%$\underline{\mathcal{E}}_{h,k}^{\text{th}}$ &&$ \left\{ n^k(h,\underline s^k_h,\underline a^k_h) \geq \alpha_k \right\}$\\
%$\underline{\mathcal{E}}_k^{\text{th}}$ && $\left\{\underset{h\in[H]}{\cap} \underline{\mathcal{E}}_{h,k}^{\text{th}}\right\}$\\
$\mathcal{G}_k$ &&$ \left\{\overline{\mathcal{E}}_{k}\cap \Econf \right\}$\\
$\tilde{\mathcal{G}}_k$ &&$ \left\{\tilde{\mathcal{E}}_{k}\cap \Econf \right\}$\\
%$\underline{\mathcal{G}}_k$ &&$ %\left\{\underline{\mathcal{E}}_{k}\cap \Econf \right\}$\\
%$\mathcal{G}_k$ &&$ \left\{\overline{\mathcal{G}}_k\cap\tilde{\mathcal{G}}_k \cap\underline{\mathcal{G}}_k \right\}$\\
$\Eopt$ &&$ \left\{ \overline{V}_{1,k}(s^k_1) \geq V^*_{1}(s^k_1) \right\}$\\
$\EoptTwohk$ && $\left\{ \tilde{V}_{1,k}(s^k_1) \geq V^*_{1}(s^k_1) \right\}$\\\hline
\end{longtable}

\subsection{Definitions of Synthetic Quantities}
In this section we define some synthetic quantities required for analysis.

\begin{definition}[$\tilde V_{h,k}$]\label{def: tilde V}
Given history $\mathcal{H}^{k-1}_H$, and $\hat P^k$ and $\hat R^k$ defined in empirical MDP $\overline {M}^k=(H,\mathcal{S},\mathcal{A},\hat{P}^k,\hat{R}^k,s_1^k)$, we define independent Gaussian noise term $\tilde{w}^k(h,s,a)| \mathcal{H}^{k-1}_H\,\sim\,\mathcal{N}(0,\sigma^2_k(h,s,a))$, perturbed MDP $\tilde{M}^k=(H,\mathcal{S},\mathcal{A},\hat{P}^k,\hat{R}^k+\tilde{w}^k,s_1^k)$, and $\tilV_{h,k}$ to be the value function obtained by running Algorithm \ref{alg: RLSVI} with random noise $\tilde w^k$.

Notice that $\tilde w^k$ can be different from the realized noise term $w^k$ sampled in the Algorithm~\ref{alg: RLSVI}. They are two independent samples form the same Gaussian distribution. Therefore, conditioned on the history $\mathcal{H}^{k-1}_H$, $\tilde{M}^k$ has the same marginal distribution as $\barM^k$, but is statistically independent of the policy $\pi^k$ selected by \algo. 
\end{definition}

\begin{definition}[$\underline V_{1,k}$]\label{def: under V}
Similar as in Definition~\ref{def: tilde V}, given history $\mathcal{H}^{k-1}_H$ and any fixed noise $ w_{\text{ptb}}^k\in\mathbb{R}^{HSA}$, we define a perturbed MDP ${M}^k_{\text{ptb}}=(H,\mathcal{S},\mathcal{A},\hat{P}^k,\hat{R}^k+w^k_{\text{ptb}},s_1^k)$ and $V^{w_{\text{ptb}}^k}_{h,k}$ to be the value function obtained by running Algorithm \ref{alg: RLSVI} with random noise $w_{\text{ptb}}^k$.

Let $\underline w^k$ be the solution of following optimization program
\begin{align*}
    & \underset{ w^k_{\text{ptb}}\, \in\,\mathbb{R}^{HSA}}{\min} V^{ w^k_{\text{ptb}}}_{1,k}(s_1^k)\nonumber\\
     s.t.& \quad |w^k_{\text{ptb}}(h,s,a)| \leq \gamma_k(h,s,a)\quad\forall\, h,s,a.
\end{align*}
We also use $\underV_{h,k}$ to denote the minimum of the optimization program (i.e., value function $V^{\underline w^k}_{h,k}$) and define MDP $\underline{M}^k=(H,\mathcal{S},\mathcal{A},\hat{P}^k,\hat{R}^k+\underline{w}^k,s_1^k)$. Then we get that $\underV_{1,k} \le V^{w^k_{\text{ptb}}}_{1,k}$ for any $|w^k_{\text{ptb}}| \leq \gamma_k$.
\end{definition}

\begin{definition}[Confidence set, restatement of Definition~\ref{def:confidence set main}]\label{def:confidence set}
\begin{align*}
\mathcal{M}^k =\bigg\{ (H, \Sc, \Ac, P', R', s_1) :    \,     \forall(h,s,a),   \, \envert{\R'-\R + \langle P'_{h,s,a}-P_{h,s,a},V^*_{h+1}\rangle}\,\,\, 
\leq \sqrt{e_k(h,s,a)}  \bigg\},
\end{align*}
where we set 
\begin{equation}\label{eq: confidence interval}
\sqrt{e_k(h,s,a)} = 
  H\sqrt{ \frac{ \log\left( 2HSA k  \right) }{n^k(h,s,a)+1}}. 
\end{equation}
\end{definition}

%%%%%%%%%%%%%%%%%%%%%%%%%%%%%%%%%%%%%%%%%%%%%%%%%%%%%%%%%%%%%%%%%%%%%%%
%%%% Martingale Difference Sequences and relevant Filtration Sets %%%%%
%%%%%%%%%%%%%%%%%%%%%%%%%%%%%%%%%%%%%%%%%%%%%%%%%%%%%%%%%%%%%%%%%%%%%%%

\subsection{Martingale Difference Sequences}\label{sec: MDS}

In this section, we give the filtration sets that consists of the history of the algorithm. Later we enumerate the martingale difference sequences needed for the analysis based on these filtration sets. We use the following to denote the history trajectory:
\begin{align*}
\History &\coloneqq  \{ (s^j_l,a^j_l,r^j_l):\text{if } j<k \text{ then}\,
l\in [H],\text{ else if }\,j=k\,\text{ then }\,l \in [h]\} \nonumber\\
\HistoryN &\coloneqq  \mathcal{H}^k_{h}\,\bigcup\, \left\{  w^k(l,s,a):l\in[H],s\in\mathcal{S},a\in\mathcal{A}\right\}.
\end{align*}

With $a^k_h=\pi^k_h(s^k_h)$ as the action taken by \algo{} following the policy $\pi^k_h$ and conditioned on the history of the algorithm, the randomness exists only on the next-step transitions. Specifically, with filtration sets $\{\HistoryN\}_{h,k}$, we define the following notations that is related to the martingale difference sequences (MDS) appeared in the final regret bound:
\begin{align*}
\MDSbk &=\ind\{\mathcal{G}_k\}\left[\mathbb{E}\left[ \,\overline{\delta}^{\polbarM^k}_{h+1,k}(s')\right]  - \overline{\delta}^{\polbarM^k}_{h+1,k}(s^k_{h+1})\right],\\
\MDSck &=\ind\{\mathcal{G}_k\}\left[\mathbb{E}\left[\underline{\delta}^{\polbarM^k}_{h+1,k}(s') \right] - \underline{\delta}^{\polbarM^k}_{h+1,k}(s^k_{h+1})\right],
\end{align*}
where the expectation is over next state $s'$ due to the transition distribution: $\Phk$. 

We use with the filtration sets $\{\mathcal{H}^{k-1}_H\}_{k}$ for the following martingale difference sequence
\begin{align*}
\mathcal{M}_{1,k}^w&= \ind\{\mathcal{G}_k\}\left[\mathbb{E}_{\tilde w|\tilde{\mathcal{G}}_k}\left[ \tilV_{1,k}(s^k_1)\right] -\barV_{1,k}(s^k_1)\right].
\end{align*}

The detailed proof related to martingale difference sequences is presented in Lemma~\ref{lem: MDS concentration}.

\subsection{Events}
For reference, we list the useful events in Table \ref{tab: notation}. %$\Econf,\mathcal{E}^{w}_{h,k},\mathcal{E}^w_k ,\mathcal{E}^{\overline{Q}^{\polbarMk}}_{h,k},\mathcal{E}^{\overline{Q}^{\polbarMk}}_k,\overline{\mathcal{E}}_k,\mathcal{E}^{\tilde{w}}_{h,k},\mathcal{E}^{\tilde{w}}_k,\mathcal{E}^{\tilde{Q}^{\polbarMk}}_{h,k},\mathcal{E}^{\tilde{Q}^{\polbarMk}}_k,\tilde{\mathcal{E}}_k,$ $\mathcal{E}^{\underline{w}}_{h,k},\mathcal{E}^{\underline{w}}_k,\mathcal{E}^{\underline{Q}^{\polbarMk}}_{h,k},\mathcal{E}^{\underline{Q}^{\polbarMk}}_k,\underline{\mathcal{E}}_k,\Ethreshk,\Ethresk,\overline{\mathcal{G}}_k,$ $\tilde{\mathcal{G}}_k,\underline{\mathcal{G}}_k,\mathcal{G}_k,\Eopt,$ and $\EoptTwohk$. 
%%%%%%%%%%%%%%%%%%%%%%%%%%%%%%%%%%%%%%%%%%%%%%%%%%%%%%%%%%%%%%%%%%%%%%%
%%%% Definition of various events and lemmas supporting high probability concentration %%%%%
%%%%%%%%%%%%%%%%%%%%%%%%%%%%%%%%%%%%%%%%%%%%%%%%%%%%%%%%%%%%%%%%%%%%%%%

%%%%%%%%%%%%%%%%%%
%%%% Events  %%%%%
%%%%%%%%%%%%%%%%%%

%\section{Events and Concentration Lemmas}\label{sec: events}

%%%%%%%%%%%%%%%%%%%%%%%%%%%%%%%%%%%%%%%%%%%%%%%%%%%%%%%%%%%%%%%%%%%%%%%
%%%% Lemmas supporting high probability concentration %%%%%%%%%%%%%%%%%
%%%%%%%%%%%%%%%%%%%%%%%%%%%%%%%%%%%%%%%%%%%%%%%%%%%%%%%%%%%%%%%%%%%%%%%

%%%%%%%%%%%%%%%%%%%%%%%%%%%%
%%%% Proof of Optimism %%%%%
%%%%%%%%%%%%%%%%%%%%%%%%%%%%
\section{Proof of Optimism}\label{sec: optimism}
Optimism is required since it is used for bounding the pessimism term in the regret bound calculation. We only care about the probability of a timestep 1 in an episode $k$ being optimistic. The following proof is adapted from \citet{zanette2020frequentist, russo2019worst}. 
\begin{lemma}[Optimism with a constant probability, restatement of Lemma \ref{lem: Optimism Main}]\label{lem: Optimism}
Conditioned on history $\mathcal H^{k-1}_H$, we have $$\mathbb{P}\left(\barV^{\polbarMk}_{1,k}(s^k_1) \geq V^*_{1}(s^k_1)\,|\mathcal{C}_k\right) \geq \Phi(-\sqrt 2),$$ where $\mathcal{C}_k$ refers to the event that $\hat M^k\in \mathcal{M}^k$, where $\mathcal M^k$ is the confidence set defined in Eq (\ref{eq: confidence interval}).

In addition, when $0<\delta < 4\Phi(-\sqrt 2)$, we have 
$$\mathbb{P}\left(\barV^{\polbarMk}_{1,k}(s^k_1) \geq V^*_{1}(s^k_1)\,|\mathcal{G}_k\right) \geq \Phi(-\sqrt 2)/2.$$
\end{lemma}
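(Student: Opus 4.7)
The plan is to prove Part 1 by lower-bounding $\barV_{1,k}(s_1^k)$ by the non-clipped value of $\pi^*$ on the perturbed empirical MDP $\barM^k$ and then invoking Gaussian anti-concentration, and to deduce Part 2 from Part 1 by bounding the probability of the extra events added to $\Econf$ to form $\mathcal G_k$.

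For Part 1, since $\pi^k$ is greedy with respect to $\barQ_{1,k}$, we have $\barV_{1,k}(s_1^k)\ge \barQ_{1,k}(s_1^k,\pi^*_1(s_1^k))$. Iterating this inequality down the trajectory generated by $\pi^*$ on $\barM^k$, and noting that whenever clipping engages at some step the corresponding $\barQ$ value equals $H-h+1\ge Q^*_h$ (so the target inequality already holds on that branch), the problem reduces to showing
\[
\mathbb P\bigl(V^{\barM^k,\pi^*}_1(s_1^k)\ge V^*_1(s_1^k)\,\big|\,\mathcal H^{k-1}_H,\Econf\bigr)\ge \Phi(-\sqrt 2),
\]
where $V^{\barM^k,\pi^*}_1$ is the ordinary (non-clipped) policy value of $\pi^*$ on $\barM^k$. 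The simulation lemma then writes $V^{\barM^k,\pi^*}_1(s_1^k)-V^*_1(s_1^k)=A+W$ with
\[
A=\sum_{h,s}d^*_h(s)\bigl[\hat R^k_{h,s,\pi^*_h(s)} - R_{h,s,\pi^*_h(s)} + \langle \hat P^k_{h,s,\pi^*_h(s)} - P_{h,s,\pi^*_h(s)},V^*_{h+1}\rangle\bigr],
\]
and $W=\sum_{h,s}d^*_h(s)\,w^k(h,s,\pi^*_h(s))$, where $d^*_h(s):=\mathbb P_{\pi^*,\hat P^k}(s_h=s\mid s_1=s_1^k)$ is the $\pi^*$-occupancy on $\hat M^k$ and is $\mathcal H^{k-1}_H$-measurable.

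On $\Econf$, $|A|\le \sum_{h,s}d^*_h(s)\sqrt{e_k(h,s,\pi^*_h(s))}$, and two applications of Cauchy--Schwarz (first in $s$ against the all-ones vector of length $S$, then in $h$ using $\sum_{h,s}d^*_h(s)=H$) give $A^2\le HS\sum_{h,s}d^*_h(s)^2 e_k(h,s,\pi^*_h(s))$. Conditionally on $\mathcal H^{k-1}_H$, $W$ is a centered Gaussian with variance $V_{\!\mathrm{ar}}=\sum_{h,s}d^*_h(s)^2\sigma_k^2(h,s,\pi^*_h(s))$; the calibration $\beta_k=H^3S\log(2HSAk)$ is chosen precisely so that $\sigma_k^2=HS\,e_k/2$, whence $V_{\!\mathrm{ar}}=\tfrac{HS}{2}\sum_{h,s}d^*_h(s)^2 e_k$ and $A^2\le 2V_{\!\mathrm{ar}}$. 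Gaussian anti-concentration then yields $\mathbb P(W\ge|A|\mid \mathcal H^{k-1}_H,\Econf)\ge \Phi(-|A|/\sqrt{V_{\!\mathrm{ar}}})\ge \Phi(-\sqrt 2)$, closing Part 1.

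For Part 2, write $\mathcal G_k=\Econf\cap\overline{\mathcal E}_k$ with $\overline{\mathcal E}_k=\mathcal E^w_k\cap\mathcal E^{\barQ}_k$. Since $\mathbb P(\mathrm{opt}\mid \mathcal G_k)\ge \mathbb P(\mathrm{opt}\mid \Econf)-\mathbb P(\overline{\mathcal E}_k^{\,\complement}\mid \Econf)$, it suffices to bound $\mathbb P(\overline{\mathcal E}_k^{\,\complement}\mid \Econf)$. Lemma~\ref{lem: est Q function bounded main} shows that $\mathcal E^{\barQ}_k$ is implied by $\mathcal E^w_k\cap\Econf$, so $\overline{\mathcal E}_k^{\,\complement}\cap\Econf\subseteq (\mathcal E^w_k)^{\,\complement}\cap\Econf$, and the standard Gaussian tail bound applied to each of the $HSA$ noise coordinates with the choice $L=\log(40SAT/\delta)$ combined with a union bound produces $\mathbb P((\mathcal E^w_k)^{\,\complement})\le \Phi(-\sqrt 2)/2$ whenever $\delta<4\Phi(-\sqrt 2)$. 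Substituting into the display above yields $\mathbb P(\mathrm{opt}\mid \mathcal G_k)\ge \Phi(-\sqrt 2)/2$. The main technical obstacle is the opening clipping-related reduction: $\barV_{1,k}$ is not the value of any policy on a standard MDP once clipping engages, so the passage from $\barV_{1,k}(s_1^k)\ge \barQ_{1,k}(s_1^k,\pi^*_1(s_1^k))$ to a clean simulation-lemma expansion requires carefully splitting, step by step, between clipped branches (already optimistic) and unclipped branches (following the Bellman recursion on $\barM^k$), so that the final Gaussian comparison $A^2\le 2V_{\!\mathrm{ar}}$ remains a valid certificate of optimism for the full $\barV_{1,k}$.
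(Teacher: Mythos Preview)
Your approach is essentially the paper's: lower-bound $\barV_h-V^*_h$ step by step along $\pi^*$ using the empirical transitions $\hat P^k$, handle clipped states by the trivial bound $H-h+1\ge V^*_h$, and on unclipped states accumulate $w-\sqrt{e_k}$; then compare the Gaussian part against the $\sqrt{e_k}$-part via Cauchy--Schwarz and the calibration $\sigma_k^2=\tfrac{HS}{2}e_k$ to get the $\Phi(-\sqrt 2)$ anti-concentration. Part~2 is also identical to the paper's law-of-total-probability argument with $\mathbb P(\overline{\mathcal E}_k^{\,\complement}\mid \Econf)\le \delta/8<\Phi(-\sqrt 2)/2$.

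The one place where your write-up is not quite right is the ``clean reduction'' in the middle: it is \emph{not} true that $\{\barV_{1,k}(s_1^k)\ge V^*_1(s_1^k)\}$ is implied by $\{V^{\barM^k,\pi^*}_1(s_1^k)\ge V^*_1(s_1^k)\}$, because clipping at some $(h,s,\pi^*_h(s))$ replaces $\hat Q_h$ by $H{-}h{+}1$, which can be \emph{smaller} than the unclipped value (the noise can be large and positive), so $\barV_{1,k}$ need not dominate $V^{\barM^k,\pi^*}_1$. You cannot therefore use the full occupancy $d^*_h$ in the lower bound for $\barV_{1,k}-V^*_1$. The correct object---which your final paragraph in fact describes---is the paper's sub-probability weight $d(h,s)$: the probability, under $\hat P^k$ and $\pi^*$ started at $s_1^k$, that $s_h=s$ \emph{and} no clipping occurred at steps $1,\ldots,h$. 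Unrolling with the branch-by-branch split gives the valid deterministic inequality
\[
\barV_{1,k}(s_1^k)-V^*_1(s_1^k)\ \ge\ \sum_{h,s} d(h,s)\bigl[w^k_{h,s,\pi^*_h(s)}-\sqrt{e_k(h,s,\pi^*_h(s))}\bigr],
\]
and from here your Cauchy--Schwarz and Gaussian computation go through verbatim with $d(h,s)$ in place of $d^*_h(s)$, since the bound $A^2\le 2\,\mathrm{Var}(W)$ is insensitive to which fixed nonnegative weights are used. In short: drop the detour through $V^{\barM^k,\pi^*}_1$ and work directly with the clipping-aware occupancy; then your argument coincides with the paper's.
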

\begin{proof}
The analysis is valid for any episode $k$ and hence we skip $k$ from the subsequent notations in this proof. %When the event $\Ethresk$ holds then none of the rounds in the episode $k$ are clipped. 

For the first result, we conditioned all the discussions on event $\mathcal C_k$. Let $(s_1,\cdots,s_H)$ be the random sequence of states drawn by the policy $\pi^*$ (optimal policy under true MDP $M$) in the estimated MDP $\barM$ and $a_h=\pi^*_h(s_h)$. By the property of Bayesian linear regression and Bellman equation, for any $s_h$ (or more specifically $(h,s_h,a_h)$) that is not clipped, we have
\begin{align*}
	&~\barV_h(s_h)-V^{*}_h(s_h)\\
	\ge &~\barQ_h(s_h,\pi^*_h(s_h))-Q^{*}_h(s_h,\pi^*_h(s_h))\\
	=&~\hat Q_h(s_h,\pi^*_h(s_h))-Q^{*}_h(s_h,\pi^*_h(s_h))\\
	 =&~ \RHath + \noiseh + \langle  \PHath \, , \, \barV_{h+1}  \rangle   -\Rh   -\langle  \Ph \, , \, V^{*}_{h+1}  \rangle \\
	=&~ \RHath -\Rh +\langle  \PHath \, , \,\barV_{h+1} - V^{*}_{h+1}  \rangle+\langle  \PHath-\Ph \, , \, V^{*}_{h+1}  \rangle + \noiseh.\\
	\overset{a}{\ge}&~ \langle  \PHath \, , \,\barV_{h+1} - V^{*}_{h+1}  \rangle + \noiseh - \sqrt{e(h,s_h, a_h)},
\end{align*}
where step $(a)$ is from the definition of the confidence sets (Definition~\ref{def:confidence set}).

For any $s_h$ that is clipped, we have
\begin{align*}
\barV_h(s_h)-V^{*}_h(s_h)=H-h+1-V^{*}_h(s_h)\ge 0.
\end{align*}

From the above one-step expansion, we know that we will keep accumulating $\noiseh - \sqrt{e(h,s_h, a_h)}$ when unrolling an trajectory until clipping happens. Define $d(h,s)$ as the probability of the random sequence $(s_1,\ldots,s_H)$ that satisfies $s_h=s$ and no clipping happens at $s_1,\ldots,s_{h}$. Unrolling from timestep 1 to timestep $H$ and noticing $a_h=\pi^*_h(s_h)$ gives us
\begin{align*}
&~\frac{1}{H} \del{\barV_{1}(s_1) -V^{*}_1(s_1)}\nonumber \\
\geq&~ \frac{1}{H} \sum_{s\in\Scal,1\le h \le H}d(h,s)\sbr{w_{h,s,\pi^*_h(s)} - \sqrt{e(h,s, \pi^*_h(s))}}\nonumber\\
\geq&~  \left( \sum_{s\in\Scal,1\le h \le H}(d(h,s)/H)w_{h,s,\pi^*_h(s)}\right)  - \sqrt{HS} \sqrt{ \sum_{s\in \mathcal{S}, 1\leq h\leq H} (d(h,s)/H)^2  e(h,s, \pi^*_h(s))}\label{eq: optimism lem 1} \\
:=&~ X(w).\nonumber
\end{align*}
The first inequality is due to the definition of $d(h,s)$, and the second inequality is due to Cauchy-Schwartz. 
Since 
\[(d(h,s)/H)w_{h,s,\pi^*_h(s)}\sim\mathcal{N}\del{0,(d(h,s)/H)^2HS e(h,s, \pi^*_h(s))/2},
\]
we get
\begin{equation*}
	X(w) \sim \mathcal{N}\left(  - \sqrt{HS} \sqrt{ \sum_{s\in \mathcal{S}, 1\leq h\leq H} (d(h,s)/H)^2  e(h,s, \pi^*_h(s))},\,  HS \sum_{s\in \mathcal{S}, 1\leq h\leq H} (d(h,s)/H)^2  e(h,s, \pi^*_h(s))/2 \right).
\end{equation*}
Upon converting to standard Gaussian distribution it follows that
\begin{align*}
    \mathbb{P}\del{X(W) \geq 0}=\mathbb{P}\left(\mathcal{N}(0,1)\ge\sqrt{2}\right)= \Phi(-\sqrt{2}).
\end{align*}
Therefore $\mathbb{P}\del{\barV_{1}(s_1) \ge V^{*}_1(s_1)\mid \mathcal C_k } \geq \Phi(-\sqrt 2)$.

For the second part, Lemma~\ref{lem: intersection event} tells us that $P(\mathcal G_k|\mathcal C_k)\ge 1-\delta/8$. Applying the law of total iteration yields
\begin{align*}
\mathbb{P}\del{\barV_{1}(s_1) \ge V^{*}_1(s_1)\mid \mathcal C_k } =&~\mathbb{P}\del{\mathcal G_k|\mathcal C_k}\mathbb{P}\del{\barV_{1}(s_1) \ge V^{*}_1(s_1)\mid \mathcal G_k,\mathcal C_k }+\mathbb{P}\del{\mathcal G_k^\complement|\mathcal C_k}\mathbb{P}\del{\barV_{1}(s_1) \ge V^{*}_1(s_1)\mid \mathcal G_k^\complement,\mathcal C_k}\\
\le&~\mathbb{P}\del{\barV_{1}(s_1) \ge V^{*}_1(s_1)\mid \mathcal G_k } + \delta/8.
\end{align*}
Therefore, we get
\begin{align*}
\mathbb{P}\del{\barV_{1}(s_1) \ge V^{*}_1(s_1)\mid \mathcal G_k } \ge \Phi(-\sqrt 2)- \delta/8\ge \Phi(-\sqrt 2) /2.
\end{align*}
This completes the proof.
\end{proof}

%%%%%%%%%%%%%%%%%%%%%%%%%%%%%%%%%%%%%%%%%%%%%%%%%%%%%
%%%% Concentration of Noise, Estimated Q function and intersection events %%%%%
%%%%%%%%%%%%%%%%%%%%%%%%%%%%%%%%%%%%%%%%%%%%%%%%%%%%%

\section{Concentration of Events}\label{sec: concentration of events}
\begin{lemma}[Bound on the confident set, restatement of Lemma \ref{lem: confidence interval lemma main}]\label{lem: confidence interval lemma}
    $\sum_{k=1}^{\infty} \mathbb{P}\left(\Econf \right)=\sum_{k=1}^{\infty} \mathbb{P}(\hat{M}^k \notin \mathcal{M}^k ) \leq 2006HSA$.
\end{lemma}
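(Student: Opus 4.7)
The plan is to reduce the statement to a Hoeffding-type concentration inequality applied separately at each $(h,s,a)$, then collapse the sums via a geometric or $\sum_k 1/k^2$ convergence. First I would apply a union bound over the $HSA$ tuples to get
\begin{align*}
\mathbb{P}(\hat M^k \notin \mathcal M^k) \leq \sum_{h,s,a} \mathbb{P}\left(\left|\RDiffhk + \langle \hat P^k_{h,s,a} - P_{h,s,a}, V^*_{h+1}\rangle \right| > \sqrt{e_k(h,s,a)}\right),
\end{align*}
so that it suffices to bound the probability for each fixed $(h,s,a)$.

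For a fixed $(h,s,a)$, I would introduce the i.i.d.\ sequence $X_l = r_h^{(l)} + V^*_{h+1}(s_{h+1}^{(l)})$, where the superscript $(l)$ indexes successive visits to $(h,s,a)$ at timestep $h$. These satisfy $X_l \in [0,H]$ with common mean $\mu = R_{h,s,a} + \langle P_{h,s,a}, V^*_{h+1}\rangle$. Writing $n = n^k(h,s,a)$, the quantity of interest decomposes as
\begin{align*}
\frac{1}{n+1}\sum_{l=1}^n X_l - \mu \,=\, \frac{n}{n+1}\big(\bar X_n - \mu\big) - \frac{\mu}{n+1},
\end{align*}
where the bias term from the $1/(n+1)$ normalization is bounded by $H/(n+1)$ and is dominated by a constant fraction of the confidence radius $H\sqrt{\log(2HSAk)/(n+1)}$ since $\log(2HSAk) \geq 1$. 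I would then apply Hoeffding's inequality to $\bar X_n - \mu$ with the residual budget. The case $n=0$ is trivial because the radius then exceeds the worst-case deviation $H$.

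To deal with the fact that $n^k(h,s,a)$ is random, I would union-bound over its possible values $n \in \{1,\dots,k-1\}$. With the Hoeffding exponent $\gtrsim n\log(2HSAk)/(n+1) \gtrsim \log(2HSAk)/2$, each term is polynomially small in $HSAk$. After combining with the $HSA$ union bound on tuples and the factor at most $k$ from the union over visit counts, careful tuning (or a peeling argument over dyadic blocks of $n$) yields a per-episode bound of the form $c \cdot HSA/k^2$. Summing over $k \geq 1$ using $\sum_{k\geq 1} 1/k^2 = \pi^2/6$ then produces a finite upper bound proportional to $HSA$, and tracking the constants through the computation delivers the claimed numerical factor of $2006$.

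The main obstacle will be preventing the union bound over the random visit count from destroying summability, since the confidence radius is only logarithmic in $k$. This is the delicate place where one must either choose the Hoeffding slack to make the per-$n$ probability decay as $1/k^3$ (absorbing the factor of $k$), or use a peeling argument replacing the factor $k$ by $\log k$; both routes lead to a $\sum 1/k^2$-type tail, and the explicit constant is obtained by bookkeeping.
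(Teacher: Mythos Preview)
Your overall decomposition and the use of Hoeffding on the i.i.d.\ sequence $X_l = r_h^{(l)} + V^*_{h+1}(s_{h+1}^{(l)})$ are exactly right, and you have correctly identified the bias from the $1/(n+1)$ normalization. The gap is precisely where you suspected, and neither of your proposed fixes closes it.

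With the threshold $\sqrt{e_k(h,s,a)}=H\sqrt{\log(2HSAk)/(n+1)}$ fixed by the definition of $\mathcal M^k$, the Hoeffding exponent you obtain for each $(h,s,a,n)$ is at most $2\,\tfrac{n}{n+1}\log(2HSAk)\le 2\log(2HSAk)$, so the per-$(h,s,a,n)$ failure probability is no better than order $(HSAk)^{-2}$. After the union over $HSA$ tuples and the (up to) $k$ possible values of $n$, the per-episode bound is of order $HSA\cdot k\cdot (HSAk)^{-2}=1/(HSAk)$, and $\sum_k 1/(HSAk)$ diverges. Your first fix, ``choose the Hoeffding slack to make the per-$n$ probability decay as $1/k^3$'', is unavailable: the slack is dictated by $e_k$, which you do not get to enlarge. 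Your second fix, peeling over dyadic blocks of $n$, does not help either, because the Hoeffding bound you derived is essentially \emph{uniform in $n$} (it depends on $k$, not on $n$), so grouping $n$'s into $\log k$ blocks still leaves a factor of order $k$ from the within-block union.

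The paper sidesteps this by a move you are missing: it couples all episodes via a ``stack of rewards'' construction (pre-generating $Y_{(h,s,a),n}$ for all $n$) and then \emph{replaces the $k$-dependent threshold by an $n$-dependent one}, using that $n^k(h,s,a)<k$ implies
\[
\sqrt{e_k(h,s,a)} \;\ge\; H\sqrt{\tfrac{\log(2HSA\,n)}{2n}}\qquad (n=n^k(h,s,a)\ge 1).
\]
With this substitution the failure event at $(h,s,a)$ becomes a fixed event $E_{h,s,a,n}$ depending only on $Y_{(h,s,a),1},\dots,Y_{(h,s,a),n}$ and on $n$, with no residual $k$-dependence. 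One then applies Hoeffding with $\delta_n=1/(HSA\,n^2)$ and union bounds over $(h,s,a)$ and over all $n\ge 1$ --- crucially \emph{not} over $k$ --- obtaining a convergent sum $\sum_{h,s,a}\sum_n \mathbb{P}(E_{h,s,a,n})$ proportional to $HSA$. The paper handles the bias term by splitting into two regimes ($1\le n\le 125$ and $n\ge 126$) so that $H/(n+1)$ is at most a fixed fraction of the radius; this case split is where the constant $2006$ is accumulated.
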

%In a high-level, this lemma is similar as Lemma 2 in \cite{russo2019worst}. However, since our definition of empirical MDP $\hat M^k$ is different, the detailed analysis are quite different.
\begin{proof}
	Similar as \cite{russo2019worst}, we construct ``stack of rewards'' as in \cite{lattimore2020bandit}. For every tuple $z=(h,s,a)$, we generate two i.i.d sequences of random variables $r_{z,n}\sim \R$ and $s_{z,n}\sim P_{h,s,a}(\cdot)$. Here  $r_{(h,s,a),n}$ and $s_{(h,s,a),n}$ denote the reward and state transition generated from the $n$th time action $a$ is played in state $s$, timestep $h$. Set
	\[ 
	Y_{z,n} = r_{z,n} + V_{h+1}^*(s_{z,n}) \qquad n\in \mathbb{N}.
	\]
	They are i.i.d, with $Y_{z,n} \in [0,H]$ since $\| V_{h+1}^*\|_{\infty } \leq H-1$, and satisfies 
	\[ 
	\E[Y_{z,n}] = \R + \langle P_{h,s,a} \, , \, V^*_{h+1} \rangle.
	\]  
	
	Now let $n=n^k(h,s,a)$. First consider the case $n\ge 0$. From the definition of empirical MDP, we have 
	\[
	\RHatk + \langle \PHatk  \, ,\, V^*_{h+1} \rangle = \frac{1}{n+1} \sum_{i=1}^{n} Y_{(h,s,a), i}=\frac{1}{n} \sum_{i=1}^{n} Y_{(h,s,a), i}-\frac{1}{n(n+1)} \sum_{i=1}^{n} Y_{(h,s,a), i}.
	\]
    Applying triangle inequality gives us 
    \begin{align*}
    &~\Prob\left(  \left| \RHatk -\Rk+ \langle \PHatk - \Pk \, ,\, V^*_{h+1} \rangle    \right|   \geq  H\sqrt{\frac{\log(2/\delta_n)}{2n}} \right)\\ =&~\Prob\left(  \left|\frac{1}{n} \sum_{i=1}^{n} Y_{(h,s,a), i}-\Rk- \langle   \Pk \, ,\, V^*_{h+1} \rangle -\frac{1}{n(n+1)} \sum_{i=1}^{n} Y_{(h,s,a), i}   \right|   \geq  H\sqrt{\frac{\log(2/\delta_n)}{2n}} \right) \\ 
    \leq &~\Prob\left(  \left|\frac{1}{n} \sum_{i=1}^{n} Y_{(h,s,a), i}-\Rk- \langle   \Pk \, ,\, V^*_{h+1} \rangle  \right|   + \left|\frac{1}{n(n+1)} \sum_{i=1}^{n} Y_{(h,s,a), i}   \right|   \geq  H\sqrt{\frac{\log(2/\delta_n)}{2n}} \right)\\
    \leq &~\Prob\left(  \left|\frac{1}{n} \sum_{i=1}^{n} Y_{(h,s,a), i}-\Rk- \langle   \Pk \, ,\, V^*_{h+1} \rangle \right|   + \frac{1}{n+1} H  \geq  H\sqrt{\frac{\log(2/\delta_n)}{2n}} \right)\\
    = &~\Prob\left(  \left|\frac{1}{n} \sum_{i=1}^{n} Y_{(h,s,a), i}-\Rk- \langle   \Pk \, ,\, V^*_{h+1} \rangle  \right|   \geq  H\sqrt{\frac{\log(2/\delta_n)}{2n}}-\frac{1}{n+1} H\right).
    \end{align*}
    
    When $n\ge 126$, we have
    \begin{align*}
    &~\Prob\left(  \left|\frac{1}{n} \sum_{i=1}^{n} Y_{(h,s,a), i}-\Rk- \langle   \Pk \, ,\, V^*_{h+1} \rangle \right|   \geq  H\sqrt{\frac{\log(2/\delta_n)}{2n}}-\frac{1}{n+1} H\right)\\
    \le &~\Prob\left(  \left|\frac{1}{n} \sum_{i=1}^{n} Y_{(h,s,a), i}-\Rk- \langle   \Pk \, ,\, V^*_{h+1} \rangle  \right|   \geq  H\sqrt{\frac{\log(2/\delta_n)}{2n}}-\frac{H}{8}\sqrt{\frac{\log(2/\delta_n)}{2n}} \right)\\
    = &~\Prob\left(  \left|\frac{1}{n} \sum_{i=1}^{n} Y_{(h,s,a), i}-\Rk- \langle   \Pk \, ,\, V^*_{h+1} \rangle  \right|   \geq  \frac{7H}{8}\sqrt{\frac{\log(2/\delta_n)}{2n}}\right). 
    \end{align*}
    
    By Hoeffding's inequality, for any $\delta_n \in (0,1)$,
	\[ 
	\Prob\left(  \left| \frac{1}{n}\sum_{i=1}^{n} Y_{(h,s,a),i}  - \Rk - \langle \Pk \, , \, V^*_{h+1} \rangle \right|  \geq  \frac{7H}{8}\sqrt{\frac{\log(2/\delta_n)}{2n}} \right)  \leq \sqrt[64]{2^{15}\delta_n^{49}}.
	\]
	
	For $\delta_n=\frac{1}{HSAn^2}$, a union bound over $HSA$ values of $z=(h,s,a)$ and all possible $n\ge 127$ yields
	\begin{align*}
	&~\Prob\left( \bigcup_{h\in[H],s\in[S],a\in[A], n\ge126} \left\{ \left| \frac{1}{n}\sum_{i=1}^{n} Y_{(h,s,a),i}  - \Rk - \langle \Pk \, , \, V^*_{h+1} \rangle \right|  \geq  H\sqrt{\frac{\log(2/\delta_n)}{2n}}\right\} \right)\\
	\le&~\Prob\left( \bigcup_{h\in[H],s\in[S],a\in[A], n\ge126} \left\{ \left| \frac{1}{n}\sum_{i=1}^{n} Y_{(h,s,a),i}  - \Rk - \langle \Pk \, , \, V^*_{h+1} \rangle \right|  \geq  \frac{7H}{8}\sqrt{\frac{\log(2/\delta_n)}{2n}}\right\} \right)\\
	\leq&~ \sum_{s=1}^S\sum_{a=1}^A\sum_{h=1}^H\sum_{n=126}^{\infty} \sqrt[64]{2^{15}\left(\frac{1}{HSAn^2}\right)^{49}}\\
	=&~ (HSA) \sum_{n=126}^{\infty} \sqrt[64]{2^{15}\left(\frac{1}{HSAn^2}\right)^{49}}\\
	\le &~ 2(HSA)^{15/64} \sum_{n=1}^{\infty} \left(\frac{1}{n}\right)^{49/32}\\
	\le &~ 2(HSA)^{15/64} \left(\int_{x=1}^{\infty} \left(\frac{1}{x}\right)^{49/32}dx+1\right)\\
	\le &~ 6(HSA)^{15/64}.
	\end{align*}

    For $1\le n \le 125$, we instead have 
    \begin{align*}
    &~\Prob\left(  \left|\frac{1}{n} \sum_{i=1}^{n} Y_{(h,s,a), i}-\Rk- \langle   \Pk \, ,\, V^*_{h+1} \rangle \right|   \geq  H\sqrt{\frac{\log(2/\delta_n)}{2n}}-\frac{1}{n+1} H \right)\\
    \le &~\Prob\left( \left|\frac{1}{n} \sum_{i=1}^{n} Y_{(h,s,a), i}-\Rk- \langle   \Pk \, ,\, V^*_{h+1} \rangle  \right|   \geq  H\sqrt{\frac{\log(2/\delta_n)}{2n}}-\frac{H}{2}\sqrt{\frac{\log(2/\delta_n)}{2n}} \right)\\
    = &~\Prob\left(  \left|\frac{1}{n} \sum_{i=1}^{n} Y_{(h,s,a), i}-\Rk- \langle   \Pk \, ,\, V^*_{h+1} \rangle \right|   \geq  \frac{H}{2}\sqrt{\frac{\log(2/\delta_n)}{2n}} \right).    
    \end{align*}
    
    By Hoeffding's inequality, for any $\delta_n \in (0,1)$, we have
    \[ 
	\Prob\left(  \left| \frac{1}{n}\sum_{i=1}^{n} Y_{(h,s,a),i}  - \Rk - \langle \Pk \, , \, V^*_{h+1} \rangle \right|  \geq  \frac{H}{2}\sqrt{\frac{\log(2/\delta_n)}{2n}} \right)  \leq \sqrt[4]{8\delta_n}.
	\]
	
	For $\delta_n=\frac{1}{HSAn^2}$, a union bound over $HSA$ values of $z=(h,s,a)$ and all possible $1\le n \le 125$ gives 
	\begin{align*}
	&~\Prob\left( \bigcup_{h\in[H],s\in[S],a\in[A],1\le n\le125} \left\{ \left| \frac{1}{n}\sum_{i=1}^{n} Y_{(h,s,a),i}  - \Rk - \langle \Pk \, , \, V^*_{h+1} \rangle \right|  \geq  H\sqrt{\frac{\log(2/\delta_n)}{2n}}\right\} \right)\\
	\le &~\Prob\left( \bigcup_{h\in[H],s\in[S],a\in[A],1\le n\le125} \left\{ \left| \frac{1}{n}\sum_{i=1}^{n} Y_{(h,s,a),i}  - \Rk - \langle \Pk \, , \, V^*_{h+1} \rangle \right|  \geq  \frac{H}{2}\sqrt{\frac{\log(2/\delta_n)}{2n}}\right\} \right)\\
	\leq&~ \sum_{s=1}^S\sum_{a=1}^A\sum_{h=1}^H\sum_{n=1}^{125} \sqrt[4]{8\frac{1}{HSAn^2}}\\
	=&~ (HSA)^{3/4} \sum_{n=1}^{125} \sqrt[4]{8\frac{1}{n^2}}\\
	\le &~ 2000(HSA)^{3/4} .
	\end{align*}
	
	Combining the above two cases, we have
	\begin{align*}
   &~ \Prob\left( \exists (k,h,s,a) : n>0 , \left| \RHatk -\Rk+ \langle \PHatk - \Pk,V^*_{h+1} \rangle    \right| \geq  H\sqrt{ \frac{ \log\left( 2HSA n  \right) }{2 n}}  \right) \\
   \le &~3(HSA)^{15/64} + 2000(HSA)^{3/4} \\
   \le &~ 2006 HSA.
	\end{align*}
	
	Note that by definition, when $n=n^k(h,s,a)>0$ we have \[
	\sqrt{e_{k}(h,s,a)} \geq  H\sqrt{ \frac{ \log\left( 2HSA n^k(h,s,a)  \right) }{2n^k(h,s,a)}}
	\]
	and hence this concentration inequality holds with $\sqrt{e_k(h,s,a)}$ on the right hand side.

	When $n=n^k(h,s,a)=0$, we have $\RHatk=0$ and $\PHatk(\cdot)=0$ by definition, so we trivially have
	\[ 
	\left| \RHatk -\Rk+ \langle \PHatk - \Pk \, ,\, V^*_{h+1} \rangle    \right|  = | \Rk +\langle \Pk \, ,\, V^*_{h+1} \rangle   | \leq H \leq e_{k}(h,s,a).
	\]
\end{proof}

\begin{lemma}[Bound on the noise]\label{lemma: noise bounds}
For $w^k(h,s,a)\sim \mathcal{N}(0,\sigma^2_k(h,s,a)),$ where $\sigma^2_k(h,s,a) = \frac{H^3S\log(2HSAk)}{2(n^k(h,s,a)+1)}$, we have that for any $k\in[K]$, the event $\mathcal{E}^w_{k}$ holds with probability at least $1-\delta/8$.
\end{lemma}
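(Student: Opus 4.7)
The plan is a direct application of the one-dimensional Gaussian tail inequality combined with a union bound over the $HSA$ triples $(h,s,a)$. The whole argument is a one-shot concentration argument; no recurrence or problem-specific structure is needed.

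First, I recall the standard Gaussian tail bound: for $Z \sim \mathcal{N}(0, \sigma^2)$ and any $L > 0$, $\mathbb{P}(|Z| > \sigma\sqrt{L}) \le 2\exp(-L/2)$. Since by assumption $w^k(h,s,a) \sim \mathcal{N}(0, \sigma_k^2(h,s,a))$, and the threshold in the definition of $\mathcal{E}^w_{h,k}$ is $\gamma_k(h,s,a) = \sqrt{\sigma_k^2(h,s,a)\, L}$ with $L = \log(40SAT/\delta)$, applying the tail bound pointwise gives, for each fixed triple $(h,s,a)$,
\[
\mathbb{P}\bigl(|w^k(h,s,a)| > \gamma_k(h,s,a)\bigr) \le 2\exp(-L/2).
\]

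Second, I take a union bound over all $HSA$ triples to get
\[
\mathbb{P}\bigl((\mathcal{E}^w_k)^\complement\bigr) = \mathbb{P}\bigl(\exists (h,s,a):\, |w^k(h,s,a)| > \gamma_k(h,s,a)\bigr) \le 2HSA \exp(-L/2).
\]

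Third, I substitute the specific value $L = \log(40SAT/\delta)$ and use $T = KH$ to verify that the right-hand side is at most $\delta/8$. This is where the constant $40$ hard-coded into the definition of $L$ comes in; it is tuned precisely so that $2HSA\exp(-L/2) \le \delta/8$ after simplification.

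There is no real obstacle in this proof; the only thing to watch is the bookkeeping of the numerical constants in the final step so that the slack in the tail bound absorbs the union-bound factor of $HSA$. Everything else is one-line applications of standard facts.
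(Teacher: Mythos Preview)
Your approach---Gaussian tail bound followed by a union bound---is exactly the paper's. Two points, however:

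First, the paper takes the union bound over all $(s,a,h,k)$, i.e.\ over $SAT$ events, not just $HSA$ for a fixed $k$. The lemma is used downstream (Lemma~\ref{lem: intersection event} and the regret decomposition) with a single failure budget of $\delta/8$ valid for \emph{all} episodes simultaneously, so the uniform-in-$k$ version is what is needed. Since $L$ already contains $T=KH$, the extra factor of $K$ is absorbed either way.

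Second, your step~3 does not check out as written. With $\gamma_k=\sigma_k\sqrt{L}$ and the tail bound $\mathbb{P}(|Z|>\sigma\sqrt{L})\le 2\exp(-L/2)$, you get
\[
2HSA\,e^{-L/2}=2HSA\sqrt{\tfrac{\delta}{40SAT}},
\]
which scales like $\sqrt{\delta}$, not $\delta$, and is \emph{not} bounded by $\delta/8$ in general. The paper's own final inequality has the same slippage. What makes the argument go through is getting $e^{-L}$ rather than $e^{-L/2}$; this is what the paper's intermediate step actually produces, since it derives (after union bounding and setting $\delta'=\delta/8$) the bound $|w^k(h,s,a)|\le\sqrt{2\sigma_k^2\log(16SAT/\delta)}$, matching a threshold of $\sqrt{2\sigma_k^2 L}$ rather than $\sqrt{\sigma_k^2 L}$. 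So the discrepancy is at the level of a factor of $2$ in the definition of $\gamma_k$---a constant/typo issue, not a structural one---but you should not assert that the constants ``are tuned precisely'' without verifying.
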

\begin{proof}
For any fix $s,a,h,k$, the random variable $w^k(h,s,a)$ follows Gaussian distribution $\mathcal{N}(0,\sigma^2_k)$. Therefore, Chernoff concentration bounds~(see e.g. \cite{wainwright_2019}) suggests
\begin{equation*}
    \mathbb{P}\left[|w^k(h,s,a)|\geq t\right]\leq 2\exp\left(-\frac{t^2}{2\sigma^2_k}\right).
\end{equation*}
Substituting the value of $\sigma^2_k$ and rearranging, with probability at least $1-\delta'$, we can write
\begin{equation*}
    \envert{w^k(h,s,a)} \leq \sqrt{\frac{H^3S\log(2HSAk)\log(2/\delta')}{n^k(h,s,a)+1}}.
\end{equation*}
Union bounding over all $s,a,k,h$ (i.e. over state, action, timestep, and episode) imply that $\forall s,a,k,h$, the following hold with probability at least $1-\delta'$,
\begin{equation*}
    \envert{w^k(h,s,a)} \leq \sqrt{\frac{H^3S\log(2HSAk)\log(2SAT/\delta')}{n^k(h,s,a)+1}}.
\end{equation*}

Setting $\delta' = \delta/8$, for any $s\in[S],a\in[A],h\in[H],k\in[K]$, we have that
\begin{equation*}
    \envert{w^k(h,s,a)} \leq \sqrt{\frac{H^3S\log(2HSAk)\log(16SAT/\delta)}{n^k(h,s,a)+1}} \le \gamma_k(h,s,a).
\end{equation*}
Finally recalling the definition of $\mathcal{E}^w_{k}$, we complete the proof. %Thus, for any $k\in[K]$, $\mathcal{E}^w_{k}$ holds with probability at least $1-\delta/3$.
\end{proof}

\begin{lemma}[Bounds on the estimated action-value function, restatement of Lemma \ref{lem: est Q function bounded main}]\label{lem: est Q function bounded}
When the events $\Econf$ and $\mathcal{E}^w_k$ hold then for all $(h,s,a)$
\begin{equation*}
    \left| \left(\barQ^{\polbarMk}_{h,k} - Q^*_{h}\right)(s,a) \right| \leq H-h +1.
\end{equation*}
%In other words, $\mathcal{E}^{\overline{Q}^{\polbarMk}}_k \supseteq\Econf \cap \mathcal{E}^w_k$.
\end{lemma}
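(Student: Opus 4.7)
The plan is to prove the bound by backward induction on $h$, running from $h=H+1$ down to $h=1$. The base case $h=H+1$ is immediate, since both $\barQ_{H+1,k}$ and $Q^*_{H+1}$ are identically zero by convention. Backward induction is the natural route because the Bayesian linear regression identity in Eq~(\ref{eq: blr}) writes $\hat Q^k_h(s,a)$ in terms of $\barV_{h+1,k}$, so I must have an a priori bound at level $h+1$ before I can bound level $h$.

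For the inductive step, assuming $\|\barQ_{h+1,k} - Q^*_{h+1}\|_\infty \leq H - h$, I will fix any $(s,a)$ and split on whether the clipping rule in lines 14--19 of Algorithm~\ref{alg: RLSVI} fires. If $n^k(h,s,a) \leq \alpha_k$ then $\barQ_{h,k}(s,a) = H - h + 1$ by construction, and since $Q^*_h(s,a) \in [0, H-h+1]$ the bound holds trivially. The interesting case is $n^k(h,s,a) > \alpha_k$, where $\barQ_{h,k}(s,a) = \hat Q^k_h(s,a)$ and I plan to invoke Eq~(\ref{eq: blr}) together with the Bellman equation for $Q^*_h$ to obtain the decomposition
\begin{align*}
\hat Q^k_h(s,a) - Q^*_h(s,a) &= \bigl(\hat R^k_{h,s,a} - R_{h,s,a} + \langle \hat P^k_{h,s,a} - P_{h,s,a},\, V^*_{h+1}\rangle\bigr) \\
&\quad + \langle \hat P^k_{h,s,a},\, \barV_{h+1,k} - V^*_{h+1}\rangle + w^k(h,s,a).
\end{align*}
The first parenthesis is at most $\sqrt{e_k(h,s,a)}$ on $\Econf$ by Definition~\ref{def:confidence set main}; the noise term is at most $\gamma_k(h,s,a)$ on $\mathcal{E}^w_k$; and the middle inner product is at most $\|\hat P^k_{h,s,a}\|_1 \cdot \|\barV_{h+1,k} - V^*_{h+1}\|_\infty \leq 1 \cdot (H-h)$, where I use that $\hat P^k_{h,s,a}$ is a sub-probability vector (from the $n^k+1$ normalization) and that $\|\barV_{h+1,k} - V^*_{h+1}\|_\infty \leq \|\barQ_{h+1,k} - Q^*_{h+1}\|_\infty$ via $|\max_a f - \max_a g| \leq \max_a |f-g|$.

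The main obstacle is the final quantitative check that the threshold $\alpha_k = 4H^3 S \log(2HSAk) L$ is tight enough to absorb both $\sqrt{e_k(h,s,a)}$ and $\gamma_k(h,s,a)$ into a single additive unit. Substituting $n^k(h,s,a) > \alpha_k$ into $\sqrt{e_k(h,s,a)} = H\sqrt{\log(2HSAk)/(n^k+1)}$ gives $\sqrt{e_k(h,s,a)} < 1/(2\sqrt{HSL}) \leq 1/2$, and substituting into $\gamma_k(h,s,a) = \sqrt{\beta_k L/(2(n^k+1))}$ with $\beta_k = H^3 S \log(2HSAk)$ gives $\gamma_k(h,s,a) < 1/(2\sqrt{2}) < 1/2$. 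Hence the three contributions sum to at most $(H-h) + 1$, closing the induction. Once this calibration is checked, everything else is a clean inductive decomposition that propagates the $L_\infty$ error from level $h+1$ to level $h$ without amplification.
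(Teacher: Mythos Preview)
The proposal is correct and follows essentially the same approach as the paper: backward induction on $h$, with the clipped case handled trivially and the unclipped case handled via the three-term decomposition (confidence-set error, inductive propagation through $\hat P^k_{h,s,a}$, and noise), then a direct check that the threshold $\alpha_k$ forces $\sqrt{e_k(h,s,a)} + \gamma_k(h,s,a) < 1$. Your quantitative verification ($\sqrt{e_k} < 1/(2\sqrt{HSL})$ and $\gamma_k < 1/(2\sqrt{2})$) is in fact slightly more explicit than the paper's, which simply asserts the sum is at most $1$.
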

\begin{proof}
For simplicity, we set $\barQ^{\polbarMk}_{H+1,k}(s,a) = Q^*_{H+1}(s,a)=0$ and it is a purely virtual value for the purpose of the proof. The proof goes through by backward induction for $h=H+1,H,\ldots,1$. 

Firstly, consider the base case $h=H+1$. The condition $|\barQ^{\polbarMk}_{H+1,k}(s,a) - Q^*_{H+1}(s,a)| =0 \leq H - (H+1) +1 $ directly holds from the definition. 

Now we do backward induction. Assume the following inductive hypothesis to be true 
\begin{equation}\label{Eq: Good Events Q IH}
    \envert{ \left(\barQ^{\polbarMk}_{h+1,k} - Q^*_{h+1}\right)(s,a) } \leq H-h.
\end{equation}
We consider two cases:\\
\textbf{Case 1:}  $n^k(h,s,a)\leq \alpha_k$\\
The Q-function is clipped and hence $\barQ^{\polbarMk}_{h,k} = H-h+1$. By the definition of the optimal Q-function, we have $0\le Q^*_{h}\leq H-h+1$. Therefore it is trivially satisfied that
\begin{equation*}
    \left| \left(\barQ^{\polbarMk}_{h,k} - Q^*_{h}\right)(s,a) \right| \leq H-h +1.
\end{equation*}
\textbf{Case 2:} $n^k(h,s,a) > \alpha_k$\\
In this case, we don't have clipping, so $\overline Q_{h,k}(s,a)=\hat Q_{h,k}(s,a)$. From the property of Bayesian linear regression and Bellman equation, we have the following decomposition
\begin{align*}
	&~\envert{\barQ^{\polbarMk}_{h,k}(s,a)-Q^*_{h}(s,a)} \\
	 =&~ \envert{\hat{R}^k_{h,s,a} + w^k_{h, s, a} + \langle  \hat{P}^k_{h,s, a} \, , \, \barV^{\polbarMk}_{h+1,k}  \rangle   -R^{k}_{h, s, a}   -\langle  P^{k}_{h,s,a} \, , \, V^*_{h+1}  \rangle } \\
	 \leq&~ \underbrace{\envert{\langle \hat{P}^k_{h,s, a}  \, , \, \barV^{\polbarMk}_{h+1,k} - V^*_{h+1}\rangle}}_{(1)} + \underbrace{\envert{\hat{R}^k_{h, s,a} -R^{k}_{h, s, a} +\langle  \hat{P}^k_{h,s, a}- P^{k}_{h,s,a} \, , \, V^*_{h+1}  \rangle}}_{(2)}   +  \underbrace{\envert{w^k_{h, s, a}}}_{(3)}.
\end{align*}
Term (1) is bounded by $H-h$ due to the inductive hypothesis in Eq~(\ref{Eq: Good Events Q IH}). Under the event $\mathcal{C}_k$, term (2) is bounded by $ \sqrt{e_k(h,s,a)} = H\sqrt{ \frac{ \log\left( 2HSA k  \right) }{n^k(h,s,a)+1}}$. Finally, term (3) is bounded by $\gamma_k(h,s,a)$ as the event $\mathcal{E}^w_{h,k}$ holds. With the choice of $\alpha_k$, it follows that the sum of terms (2) and (3) is bounded by 1 as

\begin{equation*}
     \frac{\sqrt{ H^2\log\left( 2HSA k  \right) } +\sqrt{H^3S\log(2HSAk)L}}{\sqrt{n^k(h,s,a)}} <1.
\end{equation*}
Thus the sums of all the three terms is upper bounded by $H-h+1$.
This completes the proof.
\end{proof}

\begin{lemma}[Intersection event probability]\label{lem: intersection event}
For any episode $k\in[K]$, when the event $\Econf$ holds (i.e. $\hat{M}^k\,\in\,\mathcal{M}^k$), the intersection event
intersection event $\overline{\mathcal{E}}_k = \mathcal{E}^{w}_{k} \cap \mathcal{E}^{\overline{Q}^{\polbarMk}}_{k}$ holds with probability at least $1-\delta/8$. In other words, whenever the unperturbed estimated MDP lies in the confidence set (Definition~\ref{def:confidence set}), the each pseudo-noise and the estimated $\barQ$ function are bounded with high probability $1-\delta/8$. Similarly defined, $\tilde{\mathcal{E}}_k$ also holds with probability $1-\delta/8$ when $\Econf$ happens. 
\end{lemma}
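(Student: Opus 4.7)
The plan is to observe that this lemma is essentially a bookkeeping combination of the two immediately preceding results, so no new probabilistic estimate is required. The main idea is that Lemma on bounds of the estimated action-value function already gives a \emph{deterministic} implication: whenever $\Econf \cap \mathcal{E}^w_k$ holds, a backward induction on $h$ (using the confidence-set bound on the reward/transition error, the noise bound $|w^k(h,s,a)|\le \gamma_k(h,s,a)$, and the choice of $\alpha_k$) forces $\mathcal{E}^{\overline{Q}^{\polbarMk}}_k$ to hold as well. Consequently
\[
\Econf \cap \mathcal{E}^w_k \;\subseteq\; \Econf \cap \overline{\mathcal{E}}_k,
\]
so conditionally on $\Econf$ it suffices to show $\mathbb{P}(\mathcal{E}^w_k \mid \Econf) \ge 1-\delta/8$.

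First I would establish the independence that makes the conditioning trivial. In Algorithm~\ref{alg: RLSVI}, the noise $w^k$ is drawn from $\mathcal{N}(0,\beta_k/2)$ at the start of episode $k$, independently of $\mathcal{H}_H^{k-1}$. Since the counts $n^k(h,s,a)$ and the empirical MDP $\hat M^k$ are both measurable with respect to $\mathcal{H}_H^{k-1}$, the event $\Econf = \{\hat M^k \in \mathcal{M}^k\}$ is independent of $w^k$. Therefore $\mathbb{P}(\mathcal{E}^w_k \mid \Econf) = \mathbb{P}(\mathcal{E}^w_k)$, and by Lemma~\ref{lemma: noise bounds} this is at least $1-\delta/8$ (the Gaussian tail bound together with a union bound over $(h,s,a,k)$, which is absorbed into the logarithmic factor in $\gamma_k$). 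Chaining the two observations,
\[
\mathbb{P}(\overline{\mathcal{E}}_k \mid \Econf) \;\ge\; \mathbb{P}(\mathcal{E}^w_k \cap \Econf)/\mathbb{P}(\Econf) \;=\; \mathbb{P}(\mathcal{E}^w_k) \;\ge\; 1-\delta/8.
\]

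For the tilde version, by Definition~\ref{def: tilde V} the pseudo-noise $\tilde w^k$ is an independent Gaussian draw with the identical variance profile $\sigma_k^2(h,s,a)$, and the same backward induction argument goes through verbatim to yield $\Econf \cap \mathcal{E}^{\tilde w}_k \subseteq \mathcal{E}^{\tilde{Q}^{\polbarMk}}_k$. Reapplying the independence argument then gives $\mathbb{P}(\tilde{\mathcal{E}}_k \mid \Econf) \ge 1-\delta/8$.

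I do not expect a genuine obstacle here; the proof is short. The only point worth verifying carefully is the order of operations in Algorithm~\ref{alg: RLSVI}, so that $w^k$ is indeed generated after the history (and hence after $\hat M^k$) is fixed, and that the union bound in Lemma~\ref{lemma: noise bounds} is tight enough to give failure probability $\delta/8$ for an entire episode $k$ simultaneously across all $(h,s,a)$ (which is already baked into the definition of $\gamma_k$ through the $\log(40SAT/\delta)$ factor in $L$).
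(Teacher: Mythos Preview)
Your proposal is correct and follows essentially the same approach as the paper: invoke Lemma~\ref{lemma: noise bounds} for $\mathbb{P}(\mathcal{E}^w_k)\ge 1-\delta/8$, then use Lemma~\ref{lem: est Q function bounded} for the deterministic implication $\Econf\cap\mathcal{E}^w_k\Rightarrow\mathcal{E}^{\overline{Q}}_k$. The paper's proof is actually terser than yours and does not spell out the independence-of-$w^k$-from-$\Econf$ step that you include to justify passing to the conditional probability; your version is more careful on that point, but the underlying argument is identical.
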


\begin{proof}
The event, $\mathcal{E}^w_k$ holds with probability at least $1-\delta/8$ from Lemma~\ref{lemma: noise bounds}. Lemma~\ref{lem: est Q function bounded} gives that whenever $\left(\Econf \cap \mathcal{E}^w_k\right)$ holds then almost surely $\mathcal{E}^{\overline{Q}^{\polbarMk}}_{k}$ holds. Therefore, $\mathcal{E}_k$ holds with probability $1-\delta/8$, whenever $\Econf$ holds.
\end{proof}

\section{Regret Decomposition}
\label{sec: apx_d}
In this section we give a full proof of our main result Theorem~\ref{thm: Regret main result} which is a formal version of Theorem~\ref{thm: high probability regret main}. We will give a high-level sketch proof before jumping into the details of individual parts in Sections~\ref{sec: Estimation bounds} and~\ref{sec: Bounds on Pessimism}.

\begin{thm}\label{thm: Regret main result}
For $0<\delta < 4\Phi(-\sqrt 2)$, \algo\, enjoys the following high probability regret upper bound, with probability at least $1-\delta$, 
\begin{equation*}
    {\rm Reg}(K) = \tilde{\mathrm{O}}\left( H^2S\sqrt{AT}+H^5S^2A\right).
\end{equation*}
\end{thm}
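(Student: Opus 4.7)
The plan is to follow exactly the scaffolding laid out in Section~\ref{sec: proof outline main} and assemble all the ingredients into a clean high-probability bound. First, I would start from the regret expression ${\rm Reg}(K) = \sum_k (V_1^* - V_1^{\pi^k})(s_1^k)$ and restrict attention to the good event $\mathcal{G}_k$: using Lemma~\ref{lem: confidence interval lemma main} together with Lemma~\ref{lem: intersection event} and a union bound over $k$, the contribution of episodes where $\mathcal{G}_k$ fails is $O(H^2SA)$, matching the stated lower-order term $H^5S^2A$. This justifies the reduction in Eq~(\ref{eq: regret decom main text}) and lets me work throughout conditional on $\mathcal{G}_k$.

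Next, I would split the per-episode gap as $(V_1^* - \barV_{1,k}) + (\barV_{1,k} - V_1^{\pi^k})$ as in Eq~(\ref{eq: regret decomp pes est terms}). The pessimism term is the delicate part, since RLSVI is only optimistic with constant probability (Lemma~\ref{lem: Optimism Main}) rather than almost surely. I would invoke the construction of $\underV_{1,k}$ (Definition~\ref{def: under V}) and the synthetic $\tilV_{1,k}$ (Definition~\ref{def: tilde V}), and then follow the chain sketched in Eqs~(\ref{eq: lemma pes 1 main})--(\ref{eq: pessimism decom main}): replace $V_1^*$ by its conditional expectation $\mathbb{E}_{\tilde w\mid \EoptTwohk,\tilde{\mathcal G}_k}[\tilV_{1,k}]$ (legal because of the constant lower bound on optimism probability, which absorbs the conditioning into a multiplicative factor $C = 1/\OptPr$), upper-bound by the unconditional expectation $\mathbb{E}_{\tilde w\mid \tilde{\mathcal G}_k}[\tilV_{1,k} - \underV_{1,k}]$ using non-negativity, and then add and subtract $\barV_{1,k}$ to produce the estimation term plus the pessimism correction term plus the MDS $\MDSfkind{1}$. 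This is the content of Lemma~\ref{lem: pessimism decomp main}, which I would prove in full by carefully tracking the constant factors from the optimism probability.

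I would then derive the two one-step recurrences for the estimation term $\barV_{h,k}(s_h^k) - V_h^{\pi^k}(s_h^k)$ and the correction term $V_h^{\pi^k}(s_h^k) - \underV_{h,k}(s_h^k)$ shown in Eqs~(\ref{eq: estimation decomp main 50}) and~(\ref{eq: pessimism correction decomp main}). The key decomposition uses Bayesian linear regression (Eq~\ref{eq: blr}) to write $\barV_{h,k}$ in terms of $\hat R^k + \hat P^k \barV_{h+1,k} + w^k$, subtract the Bellman equation for $V_h^{\pi^k}$, insert $\pm V_{h+1}^*$ to separate the confidence-set error $|\RDiffhk + \PDiffhk| \le \sqrt{e_k(h,s,a)}$ from the offset $\langle \hat P^k - P^k, \barV_{h+1,k} - V_{h+1}^*\rangle$, and then apply $(L_1,L_\infty)$ Cauchy--Schwarz. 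The bound $\|\barV_{h+1,k} - V_{h+1}^*\|_\infty \le H$ comes from Lemma~\ref{lem: est Q function bounded main}, and a multinomial $L_1$-concentration bound gives the $4H\sqrt{SL/(n^k(h)+1)}$ term. Whenever $\Ethreshk$ fails the contribution is absorbed into a clean warm-up indicator bounded by $H \ind\{\EthreskC\}$. I would execute the analogous calculation with $\underV$ and $\underline{w}^k$ for the correction term.

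Finally, I would unroll both recurrences from $h=1$ to $h=H$, yielding the six summands displayed after Eq~(\ref{eq: pessimism correction decomp main}). Each is handled by a standard concentration argument: Azuma--Hoeffding controls $\sum \MDSbk$, $\sum \MDSck$, and $\sum \MDSfkind{1}$ at $\tilde{O}(H\sqrt{T})$; a pigeonhole argument on $\sum_{k,h} 1/\sqrt{n^k(h)+1}$ gives $\tilde O(H\sqrt{SAT})$, which when multiplied by $H\sqrt{SL}$ yields the dominant $\tilde{O}(H^2S\sqrt{AT})$ from the transition-estimation and pseudo-noise terms (the noise has standard deviation of order $\sqrt{H^3S}/\sqrt{n^k(h)+1}$, so $\sum_{k,h}|w^k_h|$ is also $\tilde{O}(H^2S\sqrt{AT})$ by a Gaussian-maximum plus pigeonhole argument, cf.~Lemma~\ref{lem: estimation non random noise}); the confidence-set term $\sum \EstErrorhk$ is $\tilde{O}(\sqrt{H^3SAT})$ by Lemma~\ref{lem: estimation error}; and the warm-up term $\sum_{k,h}\ind\{(\nhkmain) < \alpha_k\}$ is $O(HSA\alpha_k) = \tilde{O}(H^5S^2A)$ by counting, which is exactly the lower-order $H^5S^2A$ in the theorem. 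Summing everything and applying a final union bound over the $O(1)$ high-probability events delivers $\tilde{O}(H^2S\sqrt{AT}+H^5S^2A)$ with probability $1-\delta$.

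The main obstacle I expect is proving Lemma~\ref{lem: pessimism decomp main} cleanly: the optimism-with-constant-probability argument requires one to move carefully between conditional expectations over $\tilde w$ under the intersected event $\EoptTwohk \cap \tilde{\mathcal{G}}_k$ and unconditional expectations, while verifying that the MDS $\MDSfkind{1}$ that absorbs the discrepancy between $\barV_{1,k}$ and $\mathbb{E}_{\tilde w\mid \tilde{\mathcal{G}}_k}[\tilV_{1,k}]$ truly has zero conditional mean with respect to the filtration $\{\mathcal{H}_H^{k-1}\}$. The second delicate step is controlling $\|\barV_{h+1,k} - V_{h+1}^*\|_\infty$ inside Term (3) of Eq~(\ref{eq: estimation bound main 5}) without propagating an unbounded offset; this is exactly where the clipping procedure and Lemma~\ref{lem: est Q function bounded} become indispensable, since without clipping the Gaussian tails would preclude a uniform $H$ bound.
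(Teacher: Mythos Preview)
Your proposal is correct and follows essentially the same route as the paper's own proof in Appendix~\ref{sec: apx_d}: the same good-event reduction (Eq~(\ref{eq: regret decom main})), the same pessimism-to-estimation conversion via Lemma~\ref{lem: pessimism decomp} using the synthetic $\tilV_{1,k}$ and $\underV_{1,k}$, the same one-step recurrences (Lemmas~\ref{lem: estimation decomp} and~\ref{lem:est-underV}) built on the $(L_1,L_\infty)$ Cauchy--Schwarz plus Lemma~\ref{lem: est Q function bounded}, and the same term-by-term bounding via Lemmas~\ref{lem: estimation non random noise}--\ref{lem: warmup bound}. The two obstacles you flag (the zero-mean verification for $\MDSfkind{1}$ and the need for clipping to get the uniform $H$ bound on $\|\barV_{h+1,k}-V_{h+1}^*\|_\infty$) are exactly the places the paper invests care, so your instincts are well calibrated; only a minor arithmetic slip appears in your warm-up count (the intermediate $O(HSA\alpha_k)$ is $\tilde{O}(H^4S^2A)$, and the extra $H$ comes from the $H^2$ prefactor in the unrolled recurrence, Eq~(\ref{eq:final_+dec})).
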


We first decompose the regret expression into several terms and show bounds for each of the individual terms separately. With probability at least $1-\delta/4$, we have
\begin{align}
{\rm Reg}(K) ~=& ~ \sum_{k=1}^{K} \ind\{\Econf\}\left(V_{1}^{*}(s_1^k) - V_{1}^{\pi^k}(s_1^k)\right) +  \sum_{k=1}^{K} \underbrace{\ind\{ \{\Econf\}^{\complement} \}\left(V_{1}^{*}(s_1^k) - V_{1}^{\pi^k}(s_1^k)\right)}_{(1)}\nonumber  \\
\overset{a}{=}& ~ \sum_{k=1}^{K} \ind\{\mathcal{G}_k\}\left(V_{1}^{*}(s_1^k) - V_{1}^{\pi^k}(s_1^k)\right) +  \sum_{k=1}^{K} \underbrace{\ind\{ \{\Econf\}^{\complement} \}\left(V_{1}^{*}(s_1^k) - V_{1}^{\pi^k}(s_1^k)\right)}_{(1)}\nonumber  \\
\leq&~ 
\sum_{k=1}^{K} \ind\{\mathcal{G}_k\}\left( \underbrace{V_{1}^{*}(s_1^k) - \barV^{\polbarMk}_{1,k}(s_1^k)}_{(2)} + \underbrace{\barV^{\polbarMk}_{1,k}(s_1^k) - V_{1}^{\pi^k}(s_1^k)}_{(3)}\right) + 2006H^2SA.\label{eq: regret decom main}
\end{align}
Step (a) holds with probability at least $1-\delta/4$ due to Lemma~\ref{lem: intersection event}. %From Lemma~\ref{lem: intersection event}, we have that with probability at least $1-\delta/8$, for any $k$, $\mathcal{C}_k$ holds implies $\overline{\mathcal{E}}_k$ holds. Similarly, we can have a version of Lemma~\ref{lem: intersection event} for $\tilde w$. This implies with probability at least $1-\delta/8$, for any $k$, $\mathcal{C}_k$ holds implies $\tilde{\mathcal{E}}_k$ holds. In addition, $\underline{\mathcal{E}}_k$ holds since by its definition $\underline{\mathcal{E}}_k^{\underline w}$ holds and further $\underline{\mathcal{E}}_{h,k}^{\underline Q}$ holds by Lemma \ref{lem: est Q function bounded}. Therefore, $\ind\{\mathcal{C}_k\}=\ind\{\mathcal{G}_k\}$ with probability at least $1-\delta/4$.

Term (1) is upper bounded due to Lemma~\ref{lem: confidence interval lemma} and the fact that $V_{h}^{*}(s^k_h) - V_{h}^{\pi^k}(s^k_h) \leq H,\,\forall\,k\,\in\,[K]$. Term (2), additive inverse of {\sl optimism}, is called {\sl pessimism} \citep{zanette2020frequentist} and is further decomposed in Lemma~\ref{lem: pessimism decomp} and Lemma~\ref{lem:est-underV}. Term (3) is a measure of how well the estimated MDP tracks the true MDP and is called {\sl estimation error}. It is discussed further by Lemma~\ref{lem: Decomposition Lemma}, Lemma~\ref{lem: Decomposition supporting Lemma} and finally decomposed in Lemma~\ref{lem: estimation decomp}. We start with the results that decompose the terms in Eq~(\ref{eq: regret decom main}) and later aggregate them back to complete the proof of Theorem~\ref{thm: Regret main result}.

%%%%%%%%%%%%%%%%%%%%%%%%%%%%%%%%%%%%%%%
%%%% The main decomposition Lemma %%%%%
%%%%%%%%%%%%%%%%%%%%%%%%%%%%%%%%%%%%%%%

\subsection{Bound on the Estimation Term}\label{sec: Estimation bounds}
Lemma~\ref{lem: Decomposition Lemma} decomposes the deviation term between the Q-value function and its estimate, and the proof relies on Lemma~\ref{lem: Decomposition supporting Lemma}. This result is extensively used in our analysis. For the purpose of the results in this subsection, we assume the episode index $k$ is fixed and hence dropped from the notation in both the lemma statements and their proofs when it is clear.

\begin{lemma}\label{lem: Decomposition Lemma}
With probability at least $1-\delta/4$, for any $h,k,s_h,a_h$, it follows that
\begin{align*}
&~ \ind\{\mathcal{G}_k\}\left[\barQ_{h}(s_h,a_h) - Q^{\pi}_{h}(s_h,a_h)\right] \\
\leq &~\ind\{\mathcal{G}_k\}\ind\{\Ethreshk\}\left(\PDiffh +\RDiffh + \noiseh + \deltaEPi{h+1}+ \MDSb 
+ 4H\sqrt{\frac{SL}{n(h,s_h,a_h) + 1}}\right)+H\ind\{\mathcal{E}^{\text{th}\;\complement}_k\}.
\end{align*}
\end{lemma}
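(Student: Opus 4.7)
\textbf{Proof plan for Lemma~\ref{lem: Decomposition Lemma}.} The strategy is a case split on whether the visit-count threshold event $\Ethreshk$ holds at the triple $(h, s_h, a_h)$, followed by a telescoping decomposition of $\barQ_h - Q^\pi_h$ in the unclipped case, invoking the Bayesian linear regression identity~(Eq~\ref{eq: blr}) and two concentration facts (bounded noise via $\mathcal{E}^w_k$ and bounded $\barV - V^*$ via Lemma~\ref{lem: est Q function bounded}).

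First, on the event $\Ethreshk^\complement$, the clipping rule forces $\barQ_{h,k}(s_h,a_h) = H-h+1$; since $Q^\pi_h(s_h,a_h)\in[0,H-h+1]$, this immediately gives $\barQ_h - Q^\pi_h \le H$, which is absorbed into the $H\ind\{\mathcal{E}^{\text{th}\;\complement}_k\}$ term. On the event $\Ethreshk$, no clipping occurs at $(h,s_h,a_h)$, so $\barQ_h = \hat Q_h$, and the Bayesian regression identity (Eq~\ref{eq: blr}) yields
\begin{align*}
\barQ_h(s_h,a_h) - Q^\pi_h(s_h,a_h) = \RDiffh + \noiseh + \langle\hat P_{h,s_h,a_h}, \barV_{h+1}\rangle - \langle P_{h,s_h,a_h}, V^\pi_{h+1}\rangle.
\end{align*}
The plan is to split the last difference into three pieces by adding and subtracting $\langle P_{h,s_h,a_h}, \barV_{h+1}\rangle$ and $\langle \hat P_{h,s_h,a_h}, V^*_{h+1}\rangle$, producing (i) $\PDiffh$, (ii) the offset $\langle\hat P_{h,s_h,a_h}-P_{h,s_h,a_h},\barV_{h+1}-V^*_{h+1}\rangle$, and (iii) the policy-transition term $\langle P_{h,s_h,a_h}, \barV_{h+1} - V^\pi_{h+1}\rangle = \E_{s'\sim P_{h,s_h,a_h}}[\deltaEPi{h+1}(s')]$.

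For piece~(iii), I introduce the MDS by writing it as $\deltaEPi{h+1}(s_{h+1}) + \MDSb$, using exactly the definition in Appendix~\ref{sec: MDS}. For piece~(ii), I apply the $(L_1,L_\infty)$ Hölder bound: under $\mathcal{G}_k$, Lemma~\ref{lem: est Q function bounded} gives $\|\barV_{h+1}-V^*_{h+1}\|_\infty\le H$, so the offset is at most $H\,\|\hat P_{h,s_h,a_h}-P_{h,s_h,a_h}\|_1$. This is where Lemma~\ref{lem: Decomposition supporting Lemma} enters: an $L_1$ concentration for the empirical multinomial yielding $\|\hat P_{h,s_h,a_h}-P_{h,s_h,a_h}\|_1\le 4\sqrt{SL/(n(h,s_h,a_h)+1)}$ with probability at least $1-\delta/4$, uniformly in $(k,h,s,a)$. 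Combining, piece~(ii) is bounded by $4H\sqrt{SL/(n+1)}$.

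The main technical obstacle is the uniform $L_1$ concentration needed for piece~(ii). A naive per-coordinate Hoeffding with a union bound over $S$ coordinates gives an extra $\sqrt{S}$ inside the log and yields the right shape, but one has to be careful to absorb the $+1$ shift in the denominator (recall $\hat P$ uses $n+1$ rather than $n$, following Section~\ref{sec:algorithm main}), to union-bound over all possible $n$ values, and to keep the constant at $4$; this is handled in Lemma~\ref{lem: Decomposition supporting Lemma}. Once that is in hand, combining the unclipped-case decomposition with the clipped-case trivial bound, multiplying through by $\ind\{\mathcal{G}_k\}$, and taking the intersection of the concentration events proves the claim with probability $1-\delta/4$.
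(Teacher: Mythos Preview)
Your proposal is correct and follows essentially the same approach as the paper's own proof: case-split on $\Ethreshk$, use the Bayesian linear regression identity (Eq~\ref{eq: blr}) in the unclipped case, decompose $\langle\PHath,\barV_{h+1}\rangle-\langle\Ph,V^\pi_{h+1}\rangle$ into $\PDiffh$, the offset $\langle\PHath-\Ph,\barV_{h+1}-V^*_{h+1}\rangle$, and $\langle\Ph,\barV_{h+1}-V^\pi_{h+1}\rangle$, then invoke Lemma~\ref{lem: Decomposition supporting Lemma} for the offset and rewrite the last piece as $\deltaEPi{h+1}+\MDSb$. The ordering of the add-and-subtract steps and the level of detail differ slightly, but the argument is the same.
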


\begin{proof}
Here the action at any period $h$ is due to the policy of the algorithm, therefore, $a_h=\pi(s_h)$. By the property of Bayesian linear regression and the Bellman equation, we have the following 
\begin{align*}
&~ \ind\{\mathcal{G}_k\}\left[\barQ_{h}(s_h,a_h) - Q^{\pi}_{h}(s_h,a_h)\right] \\
=&~ \ind\{\mathcal{G}_k\}\left[\barQ_{h}(s_h,a_h) - Q^{\pi}_{h}(s_h,a_h)\right]\ind\{\Ethreshk\}+ \ind\{\mathcal{G}_k\}\left[\barQ_{h}(s_h,a_h) - Q^{\pi}_{h}(s_h,a_h)\right]\ind\{\mathcal{E}^{\text{th}\;\complement}_{h,k}\}\\
\le&~ \ind\{\mathcal{G}_k\}\ind\{\Ethreshk\}\left[\hat Q_{h}(s_h,a_h) - Q^{\pi}_{h}(s_h,a_h)\right]+H \ind\{\mathcal{E}^{\text{th}\;\complement}_k\}\\
=&~\ind\{\mathcal{G}_k\}\ind\{\Ethreshk\}\left[\langle\PHath ,\barV_{h+1}\rangle - \langle \Ph, V^\pi_{h+1}\rangle+ \RHath-\Rh+\noiseh\right]+H \ind\{\mathcal{E}^{\text{th}\;\complement}_k\}\\
=&~\ind\{\mathcal{G}_k\}\ind\{\Ethreshk\}\left[\langle\PHath ,\barV_{h+1}\rangle - \langle \Ph, V^\pi_{h+1}\rangle+ \RHath-\Rh+\noiseh\right.\\
&~+\left.\langle\PHath- \Ph,V^*_{h+1}\rangle-\langle\PHath- \Ph,V^*_{h+1}\rangle\right]+H \ind\{\mathcal{E}^{\text{th}\;\complement}_k\}\\
=&~ \ind\{\mathcal{G}_k\}\ind\{\Ethreshk\}\left[\PDiffh+\RDiffh+\noiseh+ \langle \Ph, \barV_{h+1}-V^\pi_{h+1}\rangle\right]\\
&~+ \ind\{\mathcal{G}_k\}\ind\{\Ethreshk\}\langle\PHath- \Ph,\barV_{h+1}-V^*_{h+1}\rangle+H \ind\{\mathcal{E}^{\text{th}\;\complement}_k\}\\
\overset{a}{\leq}& ~\ind\{\mathcal{G}_k\}\ind\{\Ethreshk\}\left[\PDiffh +\RDiffh + \noiseh + \overline{\delta}^{\pi}_{h+1}(s_{h+1})+ \MDSb
+ 4H\sqrt{\frac{SL}{n(h,s_h,a_h) + 1}} %C_1\envert{\overline{\delta}_{h+1}(s_{h+1})} + \MDSa + \frac{C_3}{n_h}
\right]+H \ind\{\mathcal{E}^{\text{th}\;\complement}_k\},
\end{align*}
where step $(a)$ follows from Lemma \ref{lem: Decomposition supporting Lemma} and by adding and subtracting $\barV^{\pi}_{h+1}(s_{h+1})-V^\pi_{h+1}(s_{h+1})$ to create $\MDSb$.
\end{proof}

\begin{lemma}\label{lem: Decomposition supporting Lemma}
With probability at least $1-\delta/4$, for any $h,k,s,a$, it follows that
\begin{align*}
&~\ind\{\mathcal{G}_k\}\langle\PHat- \Pk,V^*_{h+1}-\barV_{h+1}\rangle\le 4H\sqrt{\frac{SL}{n(h,s,a) + 1}}.
\end{align*}
\end{lemma}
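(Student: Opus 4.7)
I would apply an $L_1$--$L_\infty$ Hölder bound to split the inner product and control each factor separately:
\[
\ind\{\mathcal{G}_k\}\langle\PHat - \Pk,\, V^*_{h+1}-\barV_{h+1}\rangle \;\le\; \ind\{\mathcal{G}_k\}\,\|\PHat - \Pk\|_1\,\|V^*_{h+1} - \barV_{h+1}\|_\infty .
\]
Under $\mathcal{G}_k$, Lemma~\ref{lem: est Q function bounded} gives $|(\barQ_{h+1,k}-Q^*_{h+1})(s,a)|\le H-h$ pointwise; since $V(\cdot)=\max_a Q(\cdot,a)$ and the $\max$ operator is $1$-Lipschitz in the sup norm, this implies $\|V^*_{h+1}-\barV_{h+1}\|_\infty \le H-h \le H$. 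So the entire task reduces to producing a uniform (in $(h,s,a,k)$) $L_1$ concentration bound $\|\PHat - \Pk\|_1 \le 4\sqrt{SL/(n(h,s,a)+1)}$.

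For the $L_1$ deviation, I would use the sign-vector variational identity $\|p-q\|_1 = \max_{\sigma\in\{\pm 1\}^S}\langle \sigma, p-q\rangle$. Fix $(h,s,a)$ and condition on $n := n(h,s,a) \ge 1$ i.i.d.\ next-state samples from $\Pk$; let $\PHat^{\mathrm{std}}$ denote the standard empirical distribution with normalization $1/n$. For each fixed $\sigma$, $\langle\sigma,\PHat^{\mathrm{std}}-\Pk\rangle$ is an average of $n$ i.i.d.\ mean-zero random variables in $[-1,1]$, so Hoeffding plus a union bound over the at most $2^S$ sign patterns yields
\[
\|\PHat^{\mathrm{std}} - \Pk\|_1 \le \sqrt{\tfrac{2(S\log 2 + \log(1/\delta'))}{n}}
\]
with probability at least $1-\delta'$. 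Converting to the paper's normalization $\PHat = \tfrac{n}{n+1}\PHat^{\mathrm{std}}$ costs at most an additional $1/(n+1)$ in $L_1$. A union bound over $(h,s,a)\in[H]\times[S]\times[A]$ and all sample counts $n\in\{1,\dots,K\}$ with $\delta'$ chosen proportional to $\delta/(HSAK^2)$ then absorbs the enumeration into the single logarithm $L=\log(40SAT/\delta)$ and produces the constant $4$. The edge case $n(h,s,a)=0$ is trivial since $\PHat\equiv 0$ by convention, so $\|\PHat-\Pk\|_1 \le 1 \le \sqrt{SL}$.

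Multiplying the two bounds gives the claim. The main obstacle is obtaining the $L_1$ concentration with $\sqrt{S}$ scaling (rather than the naive $\sqrt{S\log S}$ one gets from a state-by-state union bound followed by Cauchy--Schwarz): the sign-vector trick is what makes this work, paying only $\log(2^S)=S\log 2$ in the exponent, which is then absorbed into $L$. A secondary subtlety is matching the paper's nonstandard $n+1$ denominator convention, which only contributes a lower-order $1/(n+1)$ term that is dominated by the $\sqrt{SL/(n+1)}$ term and is easily hidden inside the constant $4$.
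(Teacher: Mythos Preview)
Your proposal is correct and follows essentially the same route as the paper: apply the $L_1$--$L_\infty$ H\"older bound, use Lemma~\ref{lem: est Q function bounded} (under $\mathcal{G}_k$) for the $L_\infty$ factor, and obtain the $L_1$ concentration via the sign-vector variational identity $\|v\|_1=\sup_{u\in\{-1,1\}^S}u^\top v$ together with Hoeffding and a union bound over $u\in\{-1,1\}^S$, $(h,s,a)$, and the visitation count $n$. The paper handles the nonstandard $n+1$ normalization and the $n=0$ edge case in exactly the way you describe, and absorbs the resulting constants into the leading $4$.
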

\begin{proof}
Firstly, applying the Cauchy-Schwarz inequality, we get 
\begin{align*}
&~\ind\{\mathcal{G}_k\}\langle\PHat- \Pk,V^*_{h+1}-\barV_{h+1}\rangle \le \|\PHat- \Pk\|_1\|\ind\{\mathcal{G}_k\}(V^*_{h+1}-\barV_{h+1})\|_\infty.
\end{align*}
Since Lemma \ref{lem: est Q function bounded} implies $\|\ind\{\mathcal{G}_k\}(V_{h+1}^*-\barV_{h+1})\|_\infty \le H$, it suffices to bound $\|\PHat- \Pk\|_1$. Note that for any vector $v\in\RR^S$, we have $$\|v\|_1=\sup_{u\in\{-1,+1\}^S}u^\top v.$$

Hence, we will prove the concentration for $u^\top(\PHat- \Pk)$. 

If the visiting time $n^k(h,s,a)=0$, we know that $\|\PHat- \Pk\|_1\le 4H\sqrt{SL}$, which means the final bound holds. Now we consider the case that $n^k(h,s,a)\ge 1$. For any fixed $h,s,a,$ $u\in\{-1,+1\}^S$, and assume given number of visits $n>0$ in $(h,s,a)$ before step $T$, (i.e. $n^k(h,s,a)=n$, but for simplicity we still use $n^k(h,s,a)$ in the analysis below). Applying Hoeffding's inequality to the transition probabilities obtained from $n$ observations, with probability at least $1-\delta'$, we have 
\begin{align*}
    u^\top\left( \frac{1}{n^k(h,s,a)}\sum_{l=1}^{k-1}\ind\{(s_h^l,a_h^l,s_h^{l+1})=(s,a,\cdot)\} -  \Pk(\cdot)\right)\le 2\sqrt{\frac{\log(2/\delta')}{2n^k(h,s,a)}}.
\end{align*}
This is because $u^\top  \left(\frac{1}{n^k(h,s,a)}\sum_{l=1}^{k-1}\ind\{(s_h^l,a_h^l,s_h^{l+1})=(s,a,\cdot)\}\right) $ is the average of i.i.d. random variables $u^\top \mathbf{e}_{s'}$ with bounded range $[-1,1]$. Notice here we have fixed $n^k(h,s,a)=n$, so $n^k(h,s,a)$ is not a random variable.

By triangle inequality, we have
\begin{align*}
&~\left|\PHatk(\cdot)-\frac{1}{n^k(h,s,a)}\sum_{l=1}^{k-1}\ind\{(s^l_h,a^l_h,s_{h+1}^l)=(s,a,\cdot)\} \right|\\
=&~\frac{1}{n^k(h,s,a)(n^k(h,s,a)+1)}\sum_{l=1}^{k-1}\ind\{(s_h^l,a_h^l,s_h^{l+1})=(s,a,\cdot)\}\\
\le&~\frac{1}{n^k(h,s,a)},
\end{align*}
where the last step is by noticing visiting $(s_h^l,a_h^l,s_h^{l+1})=(s,a,\cdot)$ implies visiting $(h,s,a)$.

Therefore, we get 
\begin{align*}
    u^\top\left( \hat P^k_{h,s,a} -  P^k_{h,s,a}\right)\le 3\sqrt{\frac{\log(2/\delta')}{2n^k(h,s,a)}}.
\end{align*}

Finally, union bounding over all $h,s,a$, $u\in\{-1,+1\}^S$, $n^k(h,s,a)\in[K]$ and set $\delta=\delta'/(2^SSAT)$, we get 
\begin{align*}
    u^\top\left( \hat P^k_{h,s,a} -  P^k_{h,s,a}\right)\le 3\sqrt{\frac{SL}{n^k(h,s,a)}}\le 4\sqrt{\frac{SL}{n^k(h,s,a) + 1}}.
\end{align*}

This implies $\|\PHatk- \Pk\|_1\le 4\sqrt{\frac{SL}{n^k(h,s,a)+ 1}}$, which completes the proof.
\end{proof}

The following Lemma \ref{lem: estimation decomp} is the $V$ function version of its $Q$ function version counterpart in Lemma \ref{lem: Decomposition Lemma}. It is applied in the proof of final regret decomposition in Theorem~\ref{thm: Regret main result}.
\begin{lemma}\label{lem: estimation decomp}
With probability at least $1-\delta/4$, for any $h,k,s_h,a_h$, the following decomposition holds
\begin{align*}
& ~\ind\{\mathcal{G}_k\}\left[\barV_{h}(s_h) - V^{\pi}_{h}(s_h)\right] \\
\leq& ~\ind\{\mathcal{G}_k\}\ind\{\Ethreshk\}\left(\PDiffh +\RDiffh + \noiseh +\overline{\delta}^{\pi}_{h+1}(s_{h+1})+ \MDSb
+ 4H\sqrt{\frac{SL}{n(h,s_h,a_h) + 1}}\right)+H \ind\{\mathcal{E}^{\text{th}\;\complement}_k\}.
\end{align*}
\end{lemma}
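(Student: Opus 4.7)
The plan is to reduce Lemma~\ref{lem: estimation decomp} to the previously established $Q$-function decomposition in Lemma~\ref{lem: Decomposition Lemma}. The key observation is that in Algorithm~\ref{alg: RLSVI} (line 21), the policy $\pi^k$ is chosen greedily with respect to the clipped $\barQ_{h,k}$, so by definition
\[
\barV_{h,k}(s_h) \;=\; \max_{a\in\Ac}\barQ_{h,k}(s_h,a) \;=\; \barQ_{h,k}\bigl(s_h,\pi^k_h(s_h)\bigr),
\]
while $V^{\pi^k}_{h}(s_h) = Q^{\pi^k}_{h}(s_h,\pi^k_h(s_h))$ by the definition of the state-value function. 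Hence the left-hand side of the lemma equals
\[
\ind\{\mathcal{G}_k\}\bigl[\barQ_{h,k}(s_h,a_h)-Q^{\pi^k}_{h}(s_h,a_h)\bigr]
\quad\text{with}\quad a_h=\pi^k_h(s_h).
\]

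Next I would simply invoke Lemma~\ref{lem: Decomposition Lemma} with this choice of $a_h$. That lemma already asserts, with probability at least $1-\delta/4$, that this quantity is upper bounded by
\[
\ind\{\mathcal{G}_k\}\ind\{\Ethreshk\}\bigl(\PDiffh+\RDiffh+\noiseh+\overline{\delta}^{\pi}_{h+1}(s_{h+1})+\MDSb+4H\sqrt{\tfrac{SL}{n(h,s_h,a_h)+1}}\bigr)+H\ind\{\mathcal{E}^{\text{th}\;\complement}_k\},
\]
which is exactly the right-hand side of Lemma~\ref{lem: estimation decomp}. No additional concentration step, union bound, or recursion is needed, because all of those ingredients were absorbed into Lemma~\ref{lem: Decomposition Lemma} (in particular, the Bellman-type decomposition, the use of $\mathcal{C}_k$ to control $\PDiffh$, the use of $\Ethreshk$ to avoid clipping, the $L_1$-concentration of $\hat P^k$ from Lemma~\ref{lem: Decomposition supporting Lemma}, and the boundedness of $\bar V_{h+1}-V^*_{h+1}$ from Lemma~\ref{lem: est Q function bounded}).

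Since the argument is essentially a one-line reduction, there is no real obstacle here; the only thing to check carefully is the identity $\barV_{h,k}(s_h)=\barQ_{h,k}(s_h,\pi^k_h(s_h))$, which holds both when $(h,s_h,\pi^k_h(s_h))$ has been clipped (because then $\barQ_{h,k}(s_h,\pi^k_h(s_h))=H-h+1$ and greedy action selection with ties broken arbitrarily still realizes the maximum) and when it has not been clipped (by the standard definition of the greedy policy). With that identity in hand, the lemma follows directly from Lemma~\ref{lem: Decomposition Lemma}.
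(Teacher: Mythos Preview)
Your proposal is correct and matches the paper's own proof almost verbatim: the paper simply notes that with $a_h=\pi_h(s_h)$ one has $\barV_h(s_h)=\barQ_h(s_h,a_h)$ and $V^\pi_h(s_h)=Q^\pi_h(s_h,a_h)$, and then applies Lemma~\ref{lem: Decomposition Lemma} directly.
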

\begin{proof}
With $a_h$ as the action taken by the algorithm $\pi(s_h)$, it follows that $\barV_{h}(s_h) = \barQ_{h}(s_h,a_h)$ and $V^{\pi}_{h}(s_h) = Q^{\pi}_{h}(s_h,a_h)$. Thus, the proof follows by a direction application of Lemma~\ref{lem: Decomposition Lemma}.
\end{proof}

\subsection{Bound on the Pessimism Term}\label{sec: Bounds on Pessimism}
In this section, we will upper bound the pessimism term with the help of the probability of being optimistic and the bound on the estimation term. The approach generally follows Lemma~G.4 of~\cite{zanette2020frequentist}. The difference here is that we also provide a bound for $V_{1}^{*}(s^k_1) - \underV_{1,k}(s^k_1)$. This difference enable us to get stronger bounds in the tabular setting as compared to~\cite{zanette2020frequentist}. The pessimism term will be decomposed to the two estimation terms $\barV^{\polbarMk}_{1,k}(s^k_1) - V^{\pi^k}_{1}(s^k_1)$ and $V_{1}^{\pi^k}(s^k_1)-\underV^{\polbarMk}_{1,k}(s^k_1)$, and the martingale difference term $\MDSfkind{1}$.

\begin{lemma}[Restatement of Lemma~\ref{lem: pessimism decomp main}]\label{lem: pessimism decomp}
For any $k$, the following decomposition holds,
\begin{align}\label{eq:lem9-1-a}
&~\ind\{\mathcal{G}_k\}\left(V_{1}^{*}(s^k_1) - \barV^{\polbarMk}_{1,k}(s^k_1)\right)\leq \ind\{\mathcal{G}_k\}\left(V_{1}^{*}(s^k_1) - \underV^{\polbarMk}_{1,k}(s^k_1)\right) \nonumber\\
\leq &~C \ind\{\mathcal{G}_k\}\left(\barV^{\polbarMk}_{1,k}(s^k_1) - V^{\pi^k}_{1}(s^k_1)+V_{1}^{\pi^k}(s^k_1)-\underV^{\polbarMk}_{1,k}(s^k_1)+ \MDSfkind{1}\right),
\end{align}
where $\ind\{\mathcal{G}_k\}\left[V_{1}^{\pi^k}(s^k_1)-\underV^{\polbarMk}_{1,k}(s^k_1)\right]$ will be further bounded in Lemma~\ref{lem:est-underV}.
\end{lemma}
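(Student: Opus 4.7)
The plan is to establish the two inequalities in Eq~(\ref{eq:lem9-1-a}) in sequence.

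For the first inequality $V_1^*-\barV_{1,k}\le V_1^*-\underV_{1,k}$ on $\mathcal{G}_k$, I would observe that $\mathcal{G}_k$ contains $\mathcal{E}^w_k$, so $|w^k(h,s,a)|\le\gamma_k(h,s,a)$ for all $(h,s,a)$. Hence the realized noise $w^k$ is a feasible point for the optimization program of Definition~\ref{def: under V}, and by minimality $\underV_{1,k}(s_1^k)\le V^{w^k}_{1,k}(s_1^k)=\barV_{1,k}(s_1^k)$, which gives the claim after subtracting from $V^*_1(s_1^k)$.

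For the second inequality, the plan leverages the independent copy $\tilde w^k$ and synthetic value $\tilV_{1,k}$ from Definition~\ref{def: tilde V} together with the constant-probability optimism of Lemma~\ref{lem: Optimism Main}. Both $V_1^*(s_1^k)$ and $\underV_{1,k}(s_1^k)$ are measurable with respect to $\mathcal{H}_H^{k-1}$ and do not depend on $\tilde w^k$. On the optimism event $\EoptTwohk=\{\tilV_{1,k}(s_1^k)\ge V_1^*(s_1^k)\}$ one therefore has $V_1^*(s_1^k)-\underV_{1,k}(s_1^k)\le\tilV_{1,k}(s_1^k)-\underV_{1,k}(s_1^k)$, and taking the conditional expectation over $\tilde w^k$ yields
\[
V_1^*(s_1^k)-\underV_{1,k}(s_1^k)\;\le\;\mathbb{E}_{\tilde w\mid\EoptTwohk\cap\tilde{\mathcal{G}}_k}\!\bigl[\tilV_{1,k}(s_1^k)-\underV_{1,k}(s_1^k)\bigr].
\]
I would then apply the elementary inequality $\mathbb{E}[X\mid A\cap B]\le\mathbb{E}[X\mid B]/\mathbb{P}(A\mid B)$ valid whenever $X\ge 0$ on $B$. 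The integrand is non-negative under $\tilde{\mathcal{G}}_k$, because the same feasibility argument as in the first inequality (applied to $\tilde w^k$) gives $\underV_{1,k}\le\tilV_{1,k}$ on $\tilde{\mathcal{G}}_k$. Lemma~\ref{lem: Optimism Main} provides $\mathbb{P}(\EoptTwohk\mid\tilde{\mathcal{G}}_k)\ge\Phi(-\sqrt 2)/2=1/C$, giving
\[
V_1^*(s_1^k)-\underV_{1,k}(s_1^k)\;\le\;C\,\mathbb{E}_{\tilde w\mid\tilde{\mathcal{G}}_k}\!\bigl[\tilV_{1,k}(s_1^k)-\underV_{1,k}(s_1^k)\bigr].
\]

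To close, I would use that $\tilde w^k$ and $w^k$ share the same conditional law given $\mathcal{H}_H^{k-1}$ and that $\tilde{\mathcal{G}}_k,\mathcal{G}_k$ are defined by identical conditions on their respective noises, so $\mathbb{E}_{\tilde w\mid\tilde{\mathcal{G}}_k}[\tilV_{1,k}(s_1^k)]$ coincides with $\mathbb{E}_{w\mid\mathcal{G}_k}[\barV_{1,k}(s_1^k)]$ as a function of the history. Adding and subtracting $\barV_{1,k}(s_1^k)$ and $V^{\pi^k}_1(s_1^k)$ on the right-hand side and identifying $\MDSfkind{1}=\ind\{\mathcal{G}_k\}\bigl(\mathbb{E}_{\tilde w\mid\tilde{\mathcal{G}}_k}[\tilV_{1,k}(s_1^k)]-\barV_{1,k}(s_1^k)\bigr)$ produces the claimed decomposition.

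The main obstacle is the swap-of-conditioning step: it simultaneously requires the non-negativity $\tilV_{1,k}\ge\underV_{1,k}$ on $\tilde{\mathcal{G}}_k$, the constant-probability optimism guarantee of Lemma~\ref{lem: Optimism Main}, and the introduction of the independent copy $\tilde w^k$ precisely so that the numerator $\tilV_{1,k}-\underV_{1,k}$ and the denominator $\mathbb{P}(\EoptTwohk\mid\tilde{\mathcal{G}}_k)$ are both averaged over the same (tilde) noise and therefore couple cleanly. A secondary, less deep obstacle is the bookkeeping of indicators: one must carefully preserve the $\ind\{\mathcal{G}_k\}$ factor through the chain, verify that $\MDSfkind{1}$ is indeed a martingale difference with respect to $\{\mathcal{H}_H^{k-1}\}_k$ (zero conditional mean, using that $\tilV_{1,k}$ is identified in distribution with $\barV_{1,k}$ only conditionally on the past history), and ensure the intermediate steps that compare $\tilde{\mathcal{G}}_k$ with $\mathcal{G}_k$ do not inflate constants.
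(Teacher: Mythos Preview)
Your proposal is correct and follows essentially the same route as the paper: the first inequality comes from feasibility of $w^k$ in the program of Definition~\ref{def: under V} under $\mathcal{G}_k$, and the second inequality comes from conditioning on the independent optimism event $\EoptTwohk$, using the law-of-total-expectation inequality $\mathbb{E}[X\mid A\cap B]\le \mathbb{E}[X\mid B]/\mathbb{P}(A\mid B)$ together with non-negativity of $\tilV_{1,k}-\underV_{1,k}$ on $\tilde{\mathcal G}_k$ and Lemma~\ref{lem: Optimism Main}, then adding and subtracting $\barV_{1,k}(s_1^k)$ and $V^{\pi^k}_1(s_1^k)$ to produce $\MDSfkind{1}$ and the two estimation terms. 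The paper carries out exactly these steps, so nothing further is needed.
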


\begin{proof}
For the purpose of analysis we use two ``virtual'' quantities $\tilV^{\polbarMk}_{1,k}(s^k_1)$ and $\underV^{\polbarMk}_{1,k}(s^k_1)$, which are formally stated in the Definitions \ref{def: tilde V} and \ref{def: under V} respectively. Thus we can define the event $\tilde{\mathcal{O}}_{1,k} \overset{def}{=} \left\{ \tilV^{\polbarMk}_{1,k}(s^k_1) \geq V^*_{1}(s^k_1) \right\}$. For simplicity of exposition, we skip showing dependence on $k$ in the following when it is clear. %Also we would use the event $\mathcal{G}_k$ as a concise way of representing the intersection event $\Gcal_k=\mathcal{G}_k\cap \tilde{\mathcal{G}}_k\cap \mathcal{G}_k = \Econf\cap\overline{\mathcal{E}}_{k}\cap\tilde{\mathcal{E}}_{k}\cap\underline{\mathcal{E}}_{k}$.

By Definition~\ref{def: tilde V}, we know that $\barV_1(s_1)$ and $\tilV_1(s_1)$ are identically distributed conditioned on last round history $\mathcal{H}^{k-1}_H$. From Definition~\ref{def: under V}, under event $\mathcal{G}_k$, it also follows that $\underV_1(s_1) \leq \barV_1(s_1)$.

Since $\underV_1(s_1) \leq \barV_1(s_1)$ under event $\mathcal{G}_k$, we get
\begin{align}
    \ind\{\mathcal{G}_k\}\left[V^*_1(s_1) - \barV_1(s_1)\right] \leq \ind\{\mathcal{G}_k\}\left[V^*_1(s_1) - \underV_1(s_1)\right].\label{eq: lemma pes 1}
\end{align}

We also introduce notation $\mathbb{E}_{\tilde w}\sbr{\cdot}$ to denote the expectation over the pseudo-noise $\tilde w$ (recall that $\tilde w$ discussed in Definition \ref{def: tilde V}). Under event $\EoptTwo$, we have $\tilV_{1}(s_1) \geq V^*_{1}(s_1)$. Since $V^*_{1}(s_1)$ does not depend on $\tilde{w}$, we get $V^*_1(s_1) \leq \mathbb{E}_{\tilde{w}|\EoptTwo,\tilde{\mathcal{G}}_k}\sbr{ \tilV_1(s_1)}$.
Using the similar argument for $\underV_1(s_1)$, we know that $\underV_1(s_1)=\mathbb{E}_{\tilde{w}|\EoptTwo,\tilde{\mathcal{G}}_k}\left[\underV_1(s_1)\right]$. Subtracting this equality from the inequality $V^*_1(s_1) \leq \mathbb{E}_{\tilde{w}|\EoptTwo,\tilde{\mathcal{G}}_k}\sbr{ \tilV_1(s_1)}$, it follows that
\begin{align}
    V^*_1(s_1) - \underV_1(s_1) \leq \mathbb{E}_{\tilde{w}|\EoptTwo,\tilde{\mathcal{G}}_k}\left[ \tilV_1(s_1)-\underV_1(s_1)\right]\label{eq: pessimism 3 old}.
\end{align}

Therefore, we have %When $V^*_h(s_h) - \underV_h(s_h)\geq0$, we have
\begin{align*}
    %\ind\{\mathcal{G}_k\}\envert{V^*_h(s_h) - \underV_h(s_h)} = 
    \ind\{\mathcal{G}_k\}\left[V^*_1(s_1) - \underV_1(s_1)\right] \leq\ind\{\mathcal{G}_k\} \mathbb{E}_{\tilde{w}|\EoptTwo,\tilde{\mathcal{G}}_k}\left[\tilV_1(s_1)-\underV_1(s_1)\right].
\end{align*}

From the law of total expectation, we can write
\begin{align}
&~\mathbb{E}_{\tilde{w}|\tilde{\mathcal G}_k}\left[ \tilV_1(s_1)-\underV_1(s_1)\right] \nonumber\\
=&~ \mathbb{P}(\EoptTwo|\tilde{\mathcal G}_k)\mathbb{E}_{\tilde{w}|\EoptTwo,\tilde{\mathcal G}_k}\left[ \tilV_1(s_1)-\underV_1(s_1)\right] + \mathbb{P}(\EoptTwo^{\complement}|\tilde{\mathcal G}_k)\mathbb{E}_{\tilde{w}|\EoptTwo^{\complement},\tilde{\mathcal G}_k}\left[ \tilV_1(s_1)-\underV_1(s_1)\right].\label{eq: law of total exp}
\end{align}

Since $\tilV_1(s_1)-\underV_1(s_1) \geq 0$ under event $\tilde{\mathcal{G}}_k$, multiplying both sides of Eq~(\ref{eq: law of total exp}) by $\ind\{\mathcal{G}_k\}$, relaxing the second term on RHS to 0 and rearranging yields
\begin{align}
\ind\{\mathcal{G}_k\}\mathbb{E}_{\tilde{w}|\EoptTwo,\tilde{\mathcal{G}}_k}\left[ \tilV_1(s_1)-\underV_1(s_1)\right] \leq  \frac{1}{\mathbb{P}(\EoptTwo|\tilde{\mathcal{G}}_k)}\ind\{\mathcal{G}_k\}\mathbb{E}_{\tilde{w}|\tilde{\mathcal{G}}_k}\left[\tilV_1(s_1)-\underV_1(s_1)\right].\label{eq: pessimism 4}
\end{align}

Noticing $\tilV$ is an independent sample of $\barV$, we can invoke Lemma \ref{lem: Optimism Main} for $\tilV$, and it follows that $\mathbb{P}(\EoptTwo|\tilde{\mathcal{G}}_k)\geq \OptPr$. Set $C= \frac{1}{\OptPr}$ and consider 
\begin{align}
\ind\{\mathcal{G}_k\}\mathbb{E}_{\tilde{w}|\tilde{\mathcal{G}}_k}\left[\tilV_1(s_1)-\underV_1(s_1)\right] %\leq&~ C\left(\mathbb{E}_{\tilde{w}}\left[ \tilV_h(s_h)\right] -\barV_h(s_h)\right) + C\left(\barV_h(s_h) -\underV_h(s_h)\right)\nonumber\\
     = &~ \underbrace{\ind\{\mathcal{G}_k\}\left(\mathbb{E}_{\tilde{w}|\tilde{\mathcal{G}}_k}\left[ \tilV_1(s_1)\right] -\barV_1(s_1)\right)}_{(1)} + \,\ind\{\mathcal{G}_k\} \underbrace{\left(\barV_1(s_1) -\underV_1(s_1)\right)}_{(2)},
     \label{eq: pessimism 1}
\end{align}
where the equality is due to $\tilde w$ is independent of $\underV_1(s_1)$.

Since $\barV_1(s_1)$ and $\tilV_1(s_1)$ are identically distributed from the definition, we  will later show term (1) $\ind\{\mathcal{G}_k\}\left[\mathbb{E}_{\tilde w|\tilde{\mathcal{G}}_k}\left[\tilV_1(s_1)\right] -\barV_1(s_1)\right]:=\MDSfind{1}$ is a martingale difference sequence in Lemma~\ref{lem: MDS concentration}. Term (2) can be further decomposed as
\begin{align}
    \barV_1(s_1) -\underV_1(s_1) %=&~ \barV_h(s_h) - V^{\pi}_h(s_h) + V^{\pi}_h(s_h) -\underV_h(s_h),\nonumber\\
    = \underbrace{\barV_1(s_1) - V^{\pi}_1(s_1)}_{(3)} +\, \underbrace{V^{\pi}_1(s_1) -\underV_1(s_1)}_{(4)}.
    \label{eq: pessimism 2}
\end{align}
Term (3) in Eq~(\ref{eq: pessimism 2}) is same as {\sl estimation} term in Lemma~\ref{lem: estimation decomp}. For term (4), to make it clearer, we will show a bound separately in Lemma~(\ref{lem:est-underV}).

Combining Eq~(\ref{eq: pessimism 4}), (\ref{eq: pessimism 1}), and (\ref{eq: pessimism 2}) gives us that %that the unified objective Eq~(\ref{eq:uniobj}) can be bounded as
\begin{align*}
&~\ind\{\mathcal{G}_k\} \mathbb{E}_{\tilde{w}|\EoptTwo,\tilde{\mathcal{G}}_k}\left[\tilV_1(s_1)-\underV_1(s_1)\right]\nonumber\\
\leq&~ C \ind\{\mathcal{G}_k\}\left(V_{1}^{\pi^k}(s^k_1)-\underV^{\polbarMk}_{1,k}(s^k_1)+\barV^{\polbarMk}_{1,k}(s^k_1) - V^{\pi^k}_{1}(s^k_1) + \MDSfkind{1}\right).
\end{align*}
This completes the proof.
\end{proof}

In Lemma~\ref{lem:est-underV}, we provide a missing piece in Lemma~\ref{lem: pessimism decomp}. It will be applied when we do the regret decomposition of major term in Theorem~\ref{thm: Regret main result}.

\begin{lemma}\label{lem:est-underV}
With probability at least $1-\delta/4$, for any $h,k,s_h^k,a_h^k$, the following decomposition holds with the intersection event $\mathcal{G}_k$
\begin{align}\label{eq:lem9-2}
&~\ind\{\mathcal{G}_k\}\left[V_{h}^{\pi^k}(s^k_h)-\underV_{h,k}^{\polbarMk}(s^k_h)\right] \\
\leq&~\ind\{\mathcal{G}_k\}\ind\{\Ethreshk\}\left(-\PDiffhk -\RDiffhk - \noisehUk +\underline{\delta}^{\pi^k}_{h+1,k}(s^k_{h+1})+ \MDSck
+ 4H\sqrt{\frac{SL}{n^k(h,s_h^k,a_h^k) + 1}}\right)+H \ind\{\mathcal{E}^{\text{th}\;\complement}_k\}.\nonumber
\end{align}
\end{lemma}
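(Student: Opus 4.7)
The plan is to replay the proof of Lemma~\ref{lem: Decomposition Lemma} (the $Q$-level version of Lemma~\ref{lem: estimation decomp}), but with the roles of $\barV$ and $\underV$ swapped and with the sign of every \emph{empirical-minus-true} quantity flipped. As in that proof, I would first split on the clipping event $\Ethreshk$: on $\mathcal{E}^{\text{th}\;\complement}_{h,k}$ the one-step gap is bounded by $H$ (using that both $V^{\pi^k}_h$ and $\underV_{h,k}$ lie in $[0,H-h+1]$, the latter by the analogue of Lemma~\ref{lem: est Q function bounded} discussed below), producing the additive $H\ind\{\mathcal{E}^{\text{th}\;\complement}_k\}$ term. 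On $\Ethreshk$, line~13 of Algorithm~\ref{alg: RLSVI} applies to $(h,s^k_h,a^k_h)$ in the run that produced $\underV$, so we may use the Bayesian-linear-regression identity~(\ref{eq: blr}) with $\underline w^k$ in place of $w^k$.

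On $\Ethreshk$, the first substantive step is a one-sided reduction to a $Q$-gap. Writing $a^k_h = \pi^k_h(s^k_h)$, the policy $\pi^k$ is greedy with respect to $\barQ_{h,k}$ but \emph{not} with respect to $\underline Q_{h,k}$; nevertheless $\underV_{h,k}(s^k_h) = \max_a \underline Q_{h,k}(s^k_h,a) \ge \underline Q_{h,k}(s^k_h,a^k_h)$, so
\[
V^{\pi^k}_h(s^k_h) - \underV_{h,k}(s^k_h) \le Q^{\pi^k}_h(s^k_h,a^k_h) - \underline Q_{h,k}(s^k_h,a^k_h).
\]
Expanding $Q^{\pi^k}_h$ by the Bellman equation and $\underline Q_{h,k}$ by~(\ref{eq: blr}) with noise $\underline w^k$, then adding and subtracting $\langle \PHathkmain - \Phkmain, V^*_{h+1}\rangle$, the right-hand side rewrites as
\[
-\RDiffhk - \noisehUkmain - \PDiffhk \;-\; \langle \PHathkmain - \Phkmain,\, \underV_{h+1,k} - V^*_{h+1}\rangle \;+\; \langle \Phkmain,\, V^{\pi^k}_{h+1} - \underV_{h+1,k}\rangle,
\]
which is exactly the sign-flipped counterpart of the decomposition in the proof of Lemma~\ref{lem: Decomposition Lemma}. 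The third inner product is bounded by $4H\sqrt{SL/(n^k(h)+1)}$ through Lemma~\ref{lem: Decomposition supporting Lemma}, which needs $\|\underV_{h+1,k} - V^*_{h+1}\|_\infty \le H$ under $\mathcal{G}_k$; this is an immediate analogue of Lemma~\ref{lem: est Q function bounded}, whose backward induction replays verbatim with $\underline w^k$ in place of $w^k$ because by construction $|\underline w^k(h,s,a)| \le \gamma_k(h,s,a)$ \emph{deterministically} (not merely on $\mathcal{E}^w_k$). Finally, I would add and subtract $\mathbb{E}_{s'\sim\Phkmain}[\underline\delta^{\pi^k}_{h+1,k}(s')]$ inside the last inner product to peel off the next-state recurrence term $\underline\delta^{\pi^k}_{h+1,k}(s^k_{h+1})$ together with the martingale increment $\MDSck$.

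The main conceptual obstacle is precisely the policy mismatch in step two: because $\pi^k$ is greedy for $\barQ$ rather than $\underline Q$, we lose the clean identity $V^{\pi^k}_h = Q^{\pi^k}_h(\cdot,\pi^k_h(\cdot))$ paired with $\barV_h = \barQ_h(\cdot,\pi^k_h(\cdot))$ that drives the $\barV$--$V^{\pi^k}$ recursion. The one-sided bound $\underV_{h,k} \ge \underline Q_{h,k}(\cdot,\pi^k_h(\cdot))$ dissolves this issue at no cost, turning the key identity into an inequality in the favorable direction. Everything else is bookkeeping of signs and a direct transcription of the arguments already carried out for the estimation term; the high-probability $1-\delta/4$ event is inherited from the same application of Lemma~\ref{lem: Decomposition supporting Lemma} used in Lemma~\ref{lem: Decomposition Lemma}.
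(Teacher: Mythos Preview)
Your proposal is correct and follows essentially the same route as the paper: first pass from the $V$-gap to a $Q$-gap via the one-sided inequality $\underV_{h,k}(s^k_h)\ge\underline Q_{h,k}(s^k_h,a^k_h)$ (the paper makes exactly this ``policy mismatch'' step), then rerun the decomposition of Lemma~\ref{lem: Decomposition Lemma} with $\underline w^k$ in place of $w^k$, invoking the analogue of Lemma~\ref{lem: est Q function bounded} (valid because $|\underline w^k|\le\gamma_k$ deterministically) to feed Lemma~\ref{lem: Decomposition supporting Lemma}. One minor imprecision: on $\mathcal{E}^{\text{th}\;\complement}_{h,k}$ you do not actually need $\underV_{h,k}\in[0,H-h+1]$; it suffices that $\underline Q_{h,k}(s^k_h,a^k_h)=H-h+1$ by clipping, so after the $Q$-reduction the gap is nonpositive and hence $\le H$.
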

\begin{proof}
We continue to show how to bound term (4) in Lemma~\ref{lem: pessimism decomp} and we will also drop the superscript $k$ here.

Noticing that $a_h$ as the action chosen by the algorithm $\pi(s_h)$, we have $V^{\pi}_h(s_h) = Q^{\pi}_h(s_h,a_h)$. By the definition of value function $\underV_h(s_h) =\max_{a\,\in\,\mathcal{A}}\underQ_h(s_h,a)$. This gives $\underQ_h(s_h,a_h)\leq \underV_h(s_h)$. Hence,
\begin{equation*}
    V^{\pi}_h(s_h) -\underV_h(s_h) = Q^{\pi}_h(s_h,a_h) -\underV_h(s_h) \leq Q^{\pi}_h(s_h,a_h) -\underQ_h(s_h,a_h).
\end{equation*}

From the definition of $\underline V_h$, we know that its noise satisfies $|\underline w(h,s,a)|\le \gamma(h,s,a)$. Therefore, we can show a version of Lemma~\ref{lem: est Q function bounded} for $\underline V_h$ and get $\|\ind\{\mathcal{G}_k\}(V_{h+1}^*-\underline V_{h+1})\|_\infty \le H$. This implies the version of Lemma~\ref{lem: Decomposition supporting Lemma} for $\underline V_h$ would hold. Since the decomposition and techniques in Lemma~\ref{lem: Decomposition Lemma} only utilize the property that $\barQ_{h}$ is the solution of the Bayesian linear regression and the Bellman equation for $Q^\pi_h$, we can directly get another version for instance $\underQ_{h}$. Also noticing that we flip the sign of $V_h^\pi(s_h)-\underV_h(s_h)$, therefore, we obtain the following decomposition for the term (4) in Lemma~\ref{lem: pessimism decomp}
\begin{align*}
&~\ind\{\mathcal{G}_k\}\left[V_h^{\pi}(s_h)-\underV_h(s_h)\right]\nonumber \\
\leq &~\ind\{\mathcal{G}_k\}\ind\{\Ethreshk\}\left(-\PDiffh -\RDiffh - \underline {w}_{h,s_h,a_h} +\underline{\delta}^{\pi}_{h+1}(s_{h+1})+ \MDSc
+ 4H\sqrt{\frac{SL}{n(h,s_h,a_h) + 1}}\right)+H \ind\{\mathcal{E}^{\text{th}\;\complement}_k\}.	
\end{align*}
\end{proof}

\subsection{Final Bound on Theorem~\ref{thm: Regret main result}}\label{sec:pf_final}
Armed with all the supporting lemmas, we present the remaining proof of Theorem~\ref{thm: Regret main result}.
\begin{proof}
Recall that in the regret decomposition Eq (\ref{eq: regret decom main}), it remains to bound 
\begin{align*}
\sum_{k=1}^{K} \ind\{\mathcal{G}_k\}\left( V_{1}^{*}(s_1^k) - \barV^{\polbarMk}_{1,k}(s_1^k)+ \barV^{\polbarMk}_{1,k}(s_1^k) - V_{1}^{\pi^k}(s_1^k)\right)\nonumber.
\end{align*}

Again, we would skip notation dependence on $k$ when it is clear. For each episode $k$, it suffices to bound \begin{align}
&~\ind\{\mathcal{G}_k\}\left( V_{1}^{*}(s_1) - \barV^{\polbarMk}_{1}(s_1)+ \barV^{\polbarMk}_{1}(s_1) - V_{1}^{\pi}(s_1)\right) \nonumber \\
\leq&~  \ind\{\mathcal{G}_k\}\left[V_{1}^{*}(s_1) - \barV_{1}(s_1)\right] + \ind\{\mathcal{G}_k\}\left[\barV_{1}(s_1) - V^{\pi}_{1}(s_1)\right]\nonumber\\
=&~ \ind\{\mathcal{G}_k\}\deltaEO{1} + \ind\{\mathcal{G}_k\}\deltaEPi{1}.\label{eq: thm proof 1}
\end{align}

We first use Lemma~\ref{lem: pessimism decomp} to relax the first term in Eq~(\ref{eq: thm proof 1}). Applying Eq~(\ref{eq:lem9-1-a}) in Lemma~\ref{lem: pessimism decomp} gives us the following
\begin{align}
&~\ind\{\mathcal{G}_k\}\deltaEO{1}\nonumber\\
=&~ \ind\{\mathcal{G}_k\}\left[V_{1}^{*}(s_1) - \barV_{1}(s_1)\right]\nonumber\\ 
\leq&~ C \ind\{\mathcal{G}_k\}\left(V_{1}^{\pi}(s_1)-\underV_{1}(s_1)+\barV_{1}(s_1) - V^{\pi}_{1}(s_1) + \MDSfind{1}\right)\nonumber\\
=&~ C \ind\{\mathcal{G}_k\}\left(\deltaEPi{1} + \deltaPiU{1}+\MDSfind{1}\right).
\label{eq: bounds on pes 3}
\end{align}

Combining Eq~(\ref{eq: bounds on pes 3}) and Eq~(\ref{eq: thm proof 1}), we get 
\begin{align}
&~\ind\{\mathcal{G}_k\}\left( V_{1}^{*}(s_1) - \barV^{\polbarMk}_{1}(s_1)+ \barV^{\polbarMk}_{1}(s_1) - V_{1}^{\pi}(s_1)\right) \nonumber \\
\leq&~  (C+1)\ind\{\mathcal{G}_k\}\deltaEPi{1} + C\ind\{\mathcal{G}_k\}\left(\mathcal{M}_1^w+\deltaPiU{1}\right).\label{eq:decdd}
\end{align}

We will bound first and second term in Eq~(\ref{eq:decdd}) correspondingly. In the sequence, we always consider the case that Lemma~\ref{lem:est-underV} and Lemma~\ref{lem: estimation decomp} hold. Therefore, the following holds with probability at least $1-\delta/4-\delta/4=1-\delta/2$.

For the $\deltaPiU{1}$ term in Eq~(\ref{eq:decdd}), applying Eq~(\ref{eq:lem9-2})  in Lemma~\ref{lem:est-underV} yields
\begin{align}\label{eq:bound_est_1}
&~ \ind\{\mathcal{G}_k\}\deltaPiU{1}\nonumber\\
=&~\ind\{\mathcal{G}_k\}\left[V_1^{\pi}(s_1)-\underV_1(s_1)\right] \nonumber\\
\leq&~\ind\{\mathcal{G}_k\}\ind\{\mathcal{E}_{1,k}^{\text{th}}\}\left(\envert{\PDiffind{1}+\RDiffind{1}} + \envert{\noiseUind{1}} +\underline{\delta}^{\pi}_{2}(s_{2})+ \MDScind{1} 
+ 4H\sqrt{\frac{SL}{n(1,s_1,a_1) + 1}}\right)+H \ind\{\mathcal{E}^{\text{th}\;\complement}_k\}.
\end{align}

For the $\deltaEPi{1}$ term Eq~(\ref{eq:decdd}), applying Lemma~\ref{lem: estimation decomp} yields
\begin{align}
&~ \ind\{\mathcal{G}_k\}\deltaEPi{1}\nonumber \\
=&~ \ind\{\mathcal{G}_k\}\left[\barV_{1}(s_1) - V^{\pi}_{1}(s_1)\right] \nonumber\\
\leq& ~\ind\{\mathcal{G}_k\}\ind\{\mathcal{E}_{1,k}^{\text{th}}\}\left(\left|\PDiffind{1}+\RDiffind{1} \right| +\noiseind{1} +\overline{\delta}^{\pi}_{2}(s_{2})+ \MDSbind{1}
+  4H\sqrt{\frac{SL}{n(1,s_1,a_1) + 1}}\right)+H \ind\{\mathcal{E}^{\text{th}\;\complement}_k\}.
\label{eq:bound_est_2}
\end{align}

Plugging Eq~(\ref{eq:bound_est_1}) and (\ref{eq:bound_est_2}) into Eq~(\ref{eq:decdd}) gives us, with probability at least $1-\delta/2$,
\begin{align}
&~\ind\{\mathcal{G}_k\}\left( V_{1}^{*}(s_1) - \barV^{\polbarMk}_{1}(s_1)+ \barV^{\polbarMk}_{1}(s_1) - V_{1}^{\pi}(s_1)\right) \nonumber \\
\leq&~  (C+1)\ind\{\mathcal{G}_k\}\deltaEPi{1} + C\ind\{\mathcal{G}_k\}\left(\mathcal{M}_1^w+\deltaPiU{1}\right)\nonumber\\
\leq&~ C\ind\{\mathcal{G}_k\}\ind\{\mathcal{E}_{1,k}^{\text{th}}\}\left(\envert{\PDiffind{1}+\RDiffind{1}} + \envert{\noiseUind{1}} +\underline{\delta}^{\pi}_{2}(s_{2})+ \MDScind{1} 
+  4H\sqrt{\frac{SL}{n(1,s_1,a_1) + 1}}\right)+CH \ind\{\mathcal{E}^{\text{th}\;\complement}_k\}\nonumber\\
& + (C+1)\ind\{\mathcal{G}_k\}\ind\{\mathcal{E}_{1,k}^{\text{th}}\}\left(\left|\PDiffind{1}+\RDiffind{1}\right| + \noiseind{1}  +\overline{\delta}^{\pi}_{2}(s_{2})+ \MDSbind{1}
+  4H\sqrt{\frac{SL}{n(1,s_1,a_1) + 1}}\right) \nonumber\\
&+(C+1)H \ind\{\mathcal{E}^{\text{th}\;\complement}_k\}+ C\ind\{\mathcal{G}_k\}\MDSfind{1}\nonumber\\
=&~C\ind\{\mathcal{G}_k\}\deltaPiU{2}+(C+1)\ind\{\mathcal{G}_k\}\deltaEPi{2}+ C\ind\{\mathcal{G}_k\}\MDSfind{1}\nonumber\\
& + C\ind\{\mathcal{G}_k\}\ind\{\mathcal{E}_{1,k}^{\text{th}}\}\left(\envert{\PDiffind{1}+\RDiffind{1}} + \envert{\noiseUind{1}} + \MDScind{1} 
+  4H\sqrt{\frac{SL}{n(1,s_1,a_1) + 1}}\right)+CH \ind\{\mathcal{E}^{\text{th}\;\complement}_k\}\nonumber\\
& + (C+1)\ind\{\mathcal{G}_k\}\ind\{\mathcal{E}_{1,k}^{\text{th}}\}\left(\left|\PDiffind{1}+\RDiffind{1}\right| + \noiseind{1}  + \MDSbind{1}
+  4H\sqrt{\frac{SL}{n(1,s_1,a_1) + 1}}\right)+(C+1)H \ind\{\mathcal{E}^{\text{th}\;\complement}_k\}.
\label{eq:one_step_dec}
\end{align}

Keep unrolling Eq~(\ref{eq:one_step_dec}) to timestep $H$ and noticing $\deltaPiU{H+1}=\deltaEPi{H+1}=0$ and $\MDSfind{H+1}=0$ yields that with probability at least $1-\delta/2$,
\begin{align}
&~\ind\{\mathcal{G}_k\}\left[V_{1}^{*}(s_1) - V^{\pi}_{1}(s_1)\right] \nonumber \\
\le &~(2C+1)H^2 \ind\{\mathcal{E}^{\text{th}\;\complement}_k\}+C\MDSfind{1}\nonumber\\
&~+C\sum_{h=1}^{H}\ind\{\mathcal{G}_k\}\prod_{h'=1}^h\ind\{\mathcal{E}_{h',k}^{\text{th}}\}\left(\envert{\PDiffh+\RDiffh}+ \envert{\noisehU} + \MDSc  +4H\sqrt{\frac{SL}{n(h,s_h,a_h)+1}}\right)\nonumber\\
&~+\del{C+1}\sum_{h=1}^{H}\ind\{\mathcal{G}_k\}\prod_{h'=1}^h\ind\{\mathcal{E}_{h',k}^{\text{th}}\}\left(\envert{\PDiffh+\RDiffh}  + \noiseh + \MDSb+4H\sqrt{\frac{SL}{n(h,s_h,a_h) + 1}}\right).
\label{eq:final_+dec}
\end{align}
It suffices to bound each individual term in Eq~(\ref{eq:final_+dec}) and we will take sum over $k$ outside.

Lemma~\ref{lem: estimation error} gives us the bound on transition function and reward function
\begin{equation*}
    \sum_{k=1}^K\sum_{h=1}^{H}\ind\{\mathcal{G}_k\} \envert{\PDiffhk+\RDiffhk} = \tilde{\mathrm{O}}(\sqrt{H^3SAT}). 
\end{equation*}

Following the steps in Lemma~\ref{lem: estimation error}, we also get the bound
\begin{equation}
    \sum_{k=1}^K\sum_{h=1}^H H\sqrt{\frac{SL}{n^k(h,s^k_h,a^k_h)+1}} = \tilde{\mathrm{O}}\del{H^{\nicefrac{3}{2}}S\sqrt{AT}}.
\end{equation}

Lemma~\ref{lem: MDS concentration} bounds the martingale difference sequences. Replacing $\delta$ by $\delta'$ in Lemma~\ref{lem: MDS concentration} gives us that with probability at least $1-\delta'$,
\begin{align*}
     &\envert{\sum_{k=1}^K\ind\{\mathcal{G}_k\}\sum_{h=1}^{H}\prod_{h'=1}^h\ind\{\mathcal{E}_{h',k}^{\text{th}}\}\MDSck} = \tilde{\mathrm{O}}(H\sqrt{T})\\
     &\envert{\sum_{k=1}^K\ind\{\mathcal{G}_k\}\sum_{h=1}^{H}\prod_{h'=1}^h\ind\{\mathcal{E}_{h',k}^{\text{th}}\}\MDSbk} = \tilde{\mathrm{O}}(H\sqrt{T})\\
     &\envert{\sum_{k=1}^K\ind\{\mathcal{G}_k\}\MDSfkind{1}} = \tilde{\mathrm{O}}(H\sqrt{T}).
\end{align*}

For the noise term, we first notice that under event $\mathcal{G}_k$, $w^k_{h,s^k_h,\pi^k(s^k_h)}$ can be upper bounded by $\envert{\underline{w}^k_{h,s^k_h,\pi^k(s^k_h)}}$. Applying Lemma~\ref{lem: estimation non random noise} and (replacing $\delta$ by $\delta'$ in Lemma~\ref{lem: estimation non random noise}) gives us, with probability at least $1-2\delta'$
\begin{align*}
&\sum_{k=1}^K\sum_{h=1}^{H}\ind\{\mathcal{G}_k\} w^k_{h,s^k_h,\pi^k(s^k_h)} = \tilde{\mathrm{O}}(H^2S\sqrt{AT})
\end{align*}
and
\begin{align*}
&\sum_{k=1}^K\sum_{h=1}^{H}\ind\{\mathcal{G}_k\}
\envert{\underline{w}^k_{h,s^k_h,\pi^k(s^k_h)}} = \tilde{\mathrm{O}}(H^2S\sqrt{AT}).        
\end{align*}

The warm-up regret term is bounded in Lemma~\ref{lem: warmup bound} 
\begin{align*}
    &~H^2\sum_{k=1}^{K} \ind\{\mathcal{E}^{\text{th}\;\complement}_k\}=\tilde{\mathrm{O}}(H^5S^2A).
\end{align*}

Putting all these pieces together and setting $\delta'=\delta/12$ yields, with probability at least $1-\delta$, we get
\begin{equation}\label{eq: high probability regret}
    \sum_{k=1}^K \ind\{\mathcal{G}_k\}\left( V_{1}^{*}(s_1) - \barV^{\polbarMk}_{1,k}(s_1)+ \barV^{\polbarMk}_{h,k}(s_1) - V_{1}^{\pi^k}(s_1)\right) = \tilde{\mathrm{O}}\del{H^2S\sqrt{AT}+H^5S^2A}.
\end{equation}
This completes the proof of Theorem~\ref{thm: Regret main result}.
\end{proof}

\section{Bounds on Individual Terms}\label{sec: bounds on individual terms}
\subsection{Bound on the Noise Term}
\begin{lemma}\label{lem: estimation non random noise}
With $\underline{w}^k_{h,s^k_h,a^k_h}$ as defined in Definition~\ref{def: under V} and $a^k_h=\pi^k(s^k_h)$, the following bound holds:
\begin{equation*}
    \sum_{k=1}^K\sum_{h=1}^{H}\ind\{\mathcal{G}_k\} \envert{\underline{w}^k_{h,s^k_h,\pi^k(s^k_h)}} = \tilde{\mathrm{O}}\del{H^2S\sqrt{AT}} .
\end{equation*}
\end{lemma}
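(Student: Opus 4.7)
The plan is to observe that the bound on $|\underline{w}^k(h,s,a)|$ is built directly into the definition of $\underline{w}^k$ via the feasibility constraint of the optimization program in Definition~\ref{def: under V}, and therefore holds deterministically rather than requiring the good event. Specifically, for every $(h,s,a)$ the constraint $|\underline{w}^k(h,s,a)|\le \gamma_k(h,s,a) = \sqrt{\sigma_k^2(h,s,a)L}= \sqrt{\beta_k L/(2(n^k(h,s,a)+1))}$ is satisfied by construction. Dropping the indicator $\ind\{\mathcal G_k\}$ and substituting this bound reduces the task to controlling
\[
\sum_{k=1}^{K}\sum_{h=1}^{H} \sqrt{\frac{\beta_k L}{2(n^k(h,s^k_h,\pi^k(s^k_h))+1)}}.
\]

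Next I would pull out the prefactor, using $\beta_k=H^3S\log(2HSAk)\le H^3S\log(2HSAK)$, so that the remaining quantity to bound is $\sum_{k,h} (n^k(h,s^k_h,a^k_h)+1)^{-1/2}$. This is a standard pigeonhole sum. For each fixed $h$, one reorders the $K$ episodes according to how many previous visits there have been to the state-action pair actually played at level $h$; since each pair $(s,a)$ contributes $\sum_{n=0}^{N_{h,s,a}^K-1}(n+1)^{-1/2}\le 2\sqrt{N_{h,s,a}^K}$, one obtains
\[
\sum_{k=1}^{K}\frac{1}{\sqrt{n^k(h,s^k_h,a^k_h)+1}} \;\le\; \sum_{(s,a)} 2\sqrt{N_{h,s,a}^K} \;\le\; 2\sqrt{SA\sum_{(s,a)} N_{h,s,a}^K} \;=\; 2\sqrt{SAK},
\]
by Cauchy--Schwarz and $\sum_{s,a} N_{h,s,a}^K = K$. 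Summing over $h$ gives $2H\sqrt{SAK}$.

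Finally, I would multiply the two pieces together:
\[
\sqrt{\beta_k L/2}\cdot 2H\sqrt{SAK}\;=\;\tilde O\bigl(\sqrt{H^3S}\cdot H\sqrt{SAK}\bigr)\;=\;\tilde O\bigl(H^{5/2}S\sqrt{AK}\bigr)\;=\;\tilde O\bigl(H^2S\sqrt{AT}\bigr),
\]
using $T=KH$. There is no real obstacle: the only conceptual point is noticing that the feasibility constraint of the $\underline{w}^k$ optimization problem gives a sure (not just high-probability) bound on $|\underline{w}^k|$, so the summation reduces immediately to a deterministic pigeonhole over visit counts; everything else is routine bookkeeping of constants and logarithmic factors (hidden in the $\tilde O$). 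The companion bound for $w^k_{h,s^k_h,\pi^k(s^k_h)}$ alluded to in Section~\ref{sec:pf_final} follows the same template but only after invoking $\mathcal E_k^w$ to upper-bound $|w^k(h,s,a)|$ by $\gamma_k(h,s,a)$; here we do not need that detour.
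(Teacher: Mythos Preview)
Your proposal is correct and follows essentially the same approach as the paper: both use the feasibility constraint $|\underline{w}^k(h,s,a)|\le\gamma_k(h,s,a)$ from Definition~\ref{def: under V}, then reduce to the pigeonhole sum $\sum_{k,h}(n^k(h,s_h^k,a_h^k)+1)^{-1/2}$ and finish with the integral bound plus Cauchy--Schwarz. Your explicit remark that the constraint on $\underline{w}^k$ holds deterministically (so $\ind\{\mathcal G_k\}$ can be dropped outright) is a nice clarification that the paper leaves implicit.
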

\begin{proof}
We have:
\begin{align*}
&\sum_{k=1}^K\sum_{h=1}^{H}    \envert{\underline{w}^k_{h,s_h^k,\pi^k(s_h^k)}} =\sqrt{\frac{\beta_kL}{2}}\sum_{k=1}^K\sum_{h=1}^{H}\sqrt{\frac{1}{n^k(h,s_h^k,a_h^k)+1}}
= \sqrt{\frac{\beta_kL}{2}} \sum_{h,s,a}\sum^{n^K(h,s,a)}_{n=1}\sqrt{\frac{1}{n}}.
\end{align*}
Upper bounding by integration followed by an application of Cauchy-Schwarz inequality gives:
\begin{align*}
&\underset{h,s,a}{\sum}\vc{\sum}{n^K(h,s,a)}{n=1}\sqrt{\frac{1}{n}}
\leq \underset{h,s,a}{\sum}\int^{n^K(h,s,a)}_{0}\sqrt{\frac{1}{ x}} dx  = 2\sum_{h,s,a}\sqrt{n^K(h,s,a)} \leq 2\sqrt{HSA\sum_{h,s,a}n^K(h,s,a)} = \mathrm{O}\del{\sqrt{HSAT}}.
\end{align*}
This leads to the bound of $\mathrm{O}\del{\sqrt{\beta_kL}\sqrt{HSAT}} = \tilde{\mathrm{O}}\del{H^2S\sqrt{AT}}$. 
\end{proof}

\subsection{Bound on Estimation Error}
\begin{lemma}\label{lem: estimation error}
For $a^k_h=\pi^k(s^k_h)$, the following bound holds
\begin{equation*}
    \sum_{k=1}^K\sum_{h=1}^{H}\ind\{\mathcal{G}_k\} \envert{\RHathk-\Rhk + \langle\PHathk- \Phk,V^*_{h+1}\rangle} = \tilde{\mathrm{O}}\del{H^{\nicefrac{3}{2}}\sqrt{SAT}}.
\end{equation*}
\end{lemma}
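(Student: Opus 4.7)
\medskip

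The plan is to exploit the confidence-set definition to replace each summand by its worst-case width $\sqrt{e_k(h,s_h^k,a_h^k)}$, and then bound the resulting sum by a standard pigeon-hole argument over visit counts.

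First, I observe that under the event $\mathcal{G}_k$ we have $\hat{M}^k \in \mathcal{M}^k$ by definition of $\mathcal{G}_k$. Applying Definition~\ref{def:confidence set} to the state-action pair $(s_h^k,a_h^k)$ actually played at step $h$ of episode $k$ gives
\begin{align*}
\ind\{\mathcal{G}_k\}\envert{\RHathkmain - \Rhkmain + \langle \PHathkmain - \Phkmain, V^*_{h+1}\rangle}
\leq \sqrt{e_k(h,s_h^k,a_h^k)} = H\sqrt{\frac{\log(2HSAk)}{n^k(h,s_h^k,a_h^k)+1}}.
\end{align*}
So it suffices to show $\sum_{k=1}^K\sum_{h=1}^H H\sqrt{\log(2HSAk)/(n^k(h,s_h^k,a_h^k)+1)} = \tilde{\mathrm{O}}(H^{3/2}\sqrt{SAT})$. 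Pulling the $\log$ factor outside as $\sqrt{\log(2HSAK)}$ and ignoring logs henceforth, the remaining task is to bound $\sum_{k,h} 1/\sqrt{n^k(h,s_h^k,a_h^k)+1}$.

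For the pigeon-hole step, I reorganize the sum by grouping together all occurrences of each $(h,s,a)$ tuple. Each such tuple contributes terms corresponding to visit counts $n=0,1,\ldots,n^K(h,s,a)-1$, so
\begin{align*}
\sum_{k=1}^K\sum_{h=1}^H \frac{1}{\sqrt{n^k(h,s_h^k,a_h^k)+1}}
= \sum_{h,s,a} \sum_{n=0}^{n^K(h,s,a)-1} \frac{1}{\sqrt{n+1}}
\leq \sum_{h,s,a} 2\sqrt{n^K(h,s,a)}.
\end{align*}
Now Cauchy-Schwarz in the form $\sum_i x_i \leq \sqrt{N \sum_i x_i^2}$ with $N = HSA$ gives
\begin{align*}
\sum_{h,s,a} \sqrt{n^K(h,s,a)} \leq \sqrt{HSA \cdot \sum_{h,s,a} n^K(h,s,a)} \leq \sqrt{HSA \cdot T},
\end{align*}
since the visit counts sum to at most $T = KH$.

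Putting the pieces back together yields $\tilde{\mathrm{O}}(H \cdot \sqrt{HSAT}) = \tilde{\mathrm{O}}(H^{3/2}\sqrt{SAT})$, as claimed. There is no real obstacle here: the result is essentially an immediate consequence of the confidence-set bound on $\sqrt{e_k}$ combined with the standard pigeon-hole/Cauchy-Schwarz trick for summing $1/\sqrt{n+1}$ against visit counts. The only point that requires mild care is making sure that at the very first visit ($n^k = 0$) the confidence radius in Definition~\ref{def:confidence set} already dominates the raw quantity $|R - \hat R + \langle \hat P - P, V^*\rangle|$, which it does since $\sqrt{e_k} \geq H$ in that case and $V^* \in [0,H]$.
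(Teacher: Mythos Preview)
Your proof is correct and follows essentially the same approach as the paper: bound each summand by $\sqrt{e_k(h,s_h^k,a_h^k)}$ using the confidence set, then control $\sum_{k,h} 1/\sqrt{n^k(h,s_h^k,a_h^k)+1}$ via the standard pigeon-hole regrouping followed by Cauchy--Schwarz over the $HSA$ tuples. The only cosmetic difference is that the paper bounds $\sum_{n=1}^{N}1/\sqrt{n}$ by the integral $\int_0^N x^{-1/2}\,dx$ whereas you state the $2\sqrt{N}$ bound directly; your extra remark about the $n^k=0$ case is a nice sanity check but not needed for the argument.
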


\begin{proof}
Under the event $\mathcal{G}_k$, the estimated MDP $\hat M^k$ lies in the confidence set defined in Appendix~\ref{sec: notations}. Hence
\begin{equation*}
\envert{\RHathk-\Rhk + \langle\PHathk- \Phk,V^*_{h+1}\rangle} \leq \sqrt{e_{k}(h,s^k_h,a^k_h)},
\end{equation*}
where   $\sqrt{e_{k}(h,s^k_h,a^k_h)} = 
  H\sqrt{ \frac{ \log\left( 2HSA k  \right) }{n^k(h,s^k_h,a^k_h)+1}}$.
  
We bound the denominator as
\begin{align*}
&~\sum_{k=1}^K\sum_{h=1}^{H}\sqrt{\frac{1}{n^k(h,s_h^k,a_h^k)+1}} \\
\leq & ~\sum_{h,s,a}\sum^{n^K(h,s,a)}_{n=1}\sqrt{\frac{1}{n}}\\
\leq&~ \sum_{h,s,a}\,\int^{n^K(h,s,a)}_{0}\sqrt{\frac{1}{ x}}dx \\
\leq &~2\sum_{h,s,a}\sqrt{n^K(h,s,a)} \\
\overset{a}{\leq} &~2\sqrt{HSA\sum_{h,s,a}n^K(h,s,a)} \\
=& ~\mathrm{O}(\sqrt{HSAT}),
\end{align*}
where step $(a)$ follows Cauchy-Schwarz inequality.

Therefore we get
\begin{align*}
    \sum_{k=1}^K\sum_{h=1}^{H}\sqrt{e_{k}(h,s^k_h,a^k_h)} = 
  H\tilde{\mathrm{O}}\del{\sqrt{HSAT}} =  \tilde{\mathrm{O}}\del{H^{\nicefrac{3}{2}}\sqrt{SAT}}.
\end{align*}
\end{proof}

\subsection{Bounds on Martingale Difference Sequences}
\begin{lemma}\label{lem: MDS concentration}
The following martingale difference summations enjoy the specified upper bounds with probability at least $1-\delta$,
\begin{gather*}
     \envert{\sum_{k=1}^K\ind\{\mathcal{G}_k\}\sum_{h=1}^{H}\prod_{h'=1}^h\ind\{\mathcal{E}_{h',k}^{\text{th}}\}\MDSck} = \tilde{\mathrm{O}}(H\sqrt{T})\\
     \envert{\sum_{k=1}^K\ind\{\mathcal{G}_k\}\sum_{h=1}^{H}\prod_{h'=1}^h\ind\{\mathcal{E}_{h',k}^{\text{th}}\}\MDSbk} = \tilde{\mathrm{O}}(H\sqrt{T})\\
     \envert{\sum_{k=1}^K\ind\{\mathcal{G}_k\}\MDSfkind{1}} = \tilde{\mathrm{O}}(H\sqrt{T}).
\end{gather*}
\end{lemma}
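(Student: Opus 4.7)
My plan is to treat each of the three summations as a bounded martingale difference sequence (MDS) and apply the Azuma--Hoeffding inequality, followed by a union bound over the three events.

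First I would verify the MDS property with respect to carefully chosen filtrations. For $\MDSbk$ and $\MDSck$, order the pairs $(k,h)$ lexicographically and use the filtration $\{\HistoryN\}$. The multiplier $\ind\{\mathcal{G}_k\}\prod_{h'=1}^h \ind\{\mathcal{E}^{\text{th}}_{h',k}\}$ is $\HistoryN$-measurable, since the clipping indicator depends only on the visit counts (fixed by $\mathcal{H}^{k-1}_H$) and on the partial trajectory $(s^k_1,a^k_1,\ldots,s^k_h,a^k_h)$, while $\mathcal{G}_k$ depends on $\mathcal{H}^{k-1}_H$ together with the noise vector $w^k$. The functions $\overline{V}_{h+1,k}$, $\underV_{h+1,k}$, and $V^{\pi^k}_{h+1}$ are also $\HistoryN$-measurable, so the ``deviation from expectation'' structure of $\MDSbk$ and $\MDSck$ gives zero conditional mean given $\HistoryN$ (the only source of randomness for $s^k_{h+1}$ is the true transition $P_h(\cdot\mid s^k_h,a^k_h)$). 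For $\MDSfkind{1}$ I would use the coarser filtration $\{\mathcal{H}^{k-1}_H\}$: since $\tilde w^k$ is an independent copy of $w^k$ given $\mathcal{H}^{k-1}_H$, one has the key identity $\mathbb{E}_{\tilde w\mid\tilde{\mathcal{G}}_k}[\tilV_{1,k}(s^k_1)] = \mathbb{E}[\barV_{1,k}(s^k_1)\mid \mathcal{G}_k,\mathcal{H}^{k-1}_H]$, so after multiplication by $\ind\{\mathcal{G}_k\}$ and conditioning on $\mathcal{H}^{k-1}_H$ the first term of $\MDSfkind{1}$ equals $\mathbb{E}[\ind\{\mathcal{G}_k\}\barV_{1,k}(s^k_1)\mid \mathcal{H}^{k-1}_H]$, exactly cancelling the second term in conditional expectation.

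Next I would bound each increment uniformly using the boundedness guaranteed by $\mathcal{G}_k$. Lemma~\ref{lem: est Q function bounded} ensures $\|\overline{V}_{h,k}-V^*_h\|_\infty \leq H$ under $\mathcal{G}_k$; combined with $V^*_h, V^{\pi^k}_h \in [0,H]$ this gives $|\overline{V}_{h+1,k}(s) - V^{\pi^k}_{h+1}(s)|\leq 2H$ uniformly, hence $|\MDSbk|\leq 4H$. Because Definition~\ref{def: under V} enforces $|\underline w^k(h,s,a)|\leq \gamma_k(h,s,a)$, the proof of Lemma~\ref{lem: est Q function bounded} transfers verbatim to $\underV_{h,k}$, yielding $|\MDSck|\leq 4H$. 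Finally, $|\barV_{1,k}|,|\tilV_{1,k}|\leq 2H$ under the respective good events, so $|\MDSfkind{1}|\leq 4H$.

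It then remains to invoke Azuma--Hoeffding. The first two summations involve $KH=T$ MDS increments each bounded by $O(H)$, so each is bounded by $O(H\sqrt{T\log(1/\delta)}) = \tilde{\mathrm{O}}(H\sqrt{T})$ with probability $1-\delta/3$. The third summation has only $K$ increments of size $O(H)$, giving $O(H\sqrt{K\log(1/\delta)}) = \tilde{\mathrm{O}}(\sqrt{HT})$, which is also $\tilde{\mathrm{O}}(H\sqrt{T})$. A union bound over the three events completes the proof. The only real subtlety is the MDS verification for $\MDSfkind{1}$, where the conditioning on $\tilde{\mathcal{G}}_k$ inside the expectation could appear to break the martingale structure; the resolution is the identity above, which exploits that $\tilde w^k$ is an independent copy of $w^k$. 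All remaining steps are routine concentration.
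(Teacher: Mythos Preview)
Your proposal is correct and follows essentially the same approach as the paper: verify the MDS property with respect to the filtrations $\{\HistoryN\}$ (for $\MDSbk,\MDSck$) and $\{\mathcal{H}^{k-1}_H\}$ (for $\MDSfkind{1}$), bound each increment by $O(H)$ via Lemma~\ref{lem: est Q function bounded}, apply Azuma--Hoeffding, and take a union bound over the three sequences. Your treatment of the $\MDSfkind{1}$ subtlety via the identity $\mathbb{E}_{\tilde w\mid\tilde{\mathcal{G}}_k}[\tilV_{1,k}(s^k_1)] = \mathbb{E}[\barV_{1,k}(s^k_1)\mid \mathcal{G}_k,\mathcal{H}^{k-1}_H]$ is slightly cleaner than, but equivalent to, the paper's chain of equalities.
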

%$\MDSck ,\MDSdk ,\MDSak, \MDSbk $
Here
$\prod_{h'=1}^h\ind\{\mathcal{E}_{h',k}^{\text{th}}\}\MDSck,\prod_{h'=1}^h\ind\{\mathcal{E}_{h',k}^{\text{th}}\}\MDSbk $ are considered under filtration $\overline{\mathcal{H}}^k_h$, while $\MDSfkind{1}$ is considered under filtration $\mathcal{H}^{k-1}_{H}$. Noticing the definition of martingale difference sequences, we can also drop $\ind\{\mathcal{G}_k\}$ in the lemma statement.
\begin{proof}
This proof has two parts. We show (i) above are summations of martingale difference sequences and (ii) these summations concentrate under the event $\mathcal{G}_k$ due to Azuma-Hoeffding inequality \citep{wainwright_2019}. We only present the proof for $\left\{\prod_{h'=1}^h\ind\{\mathcal{E}_{h',k}^{\text{th}}\}\MDSbk\right\}$ and $\{\MDSfkind{1}\}$, and another one follow like-wise. 

We first consider $\prod_{h'=1}^h\ind\{\mathcal{E}_{h',k}^{\text{th}}\}\MDSbk$ term. Given the filtration set $\overline{\mathcal{H}}^k_h$, we observe that
\begin{align*}
\mathbb{E}\sbr{\ind\{\mathcal{G}_k\}\prod_{h'=1}^h\ind\{\mathcal{E}_{h',k}^{\text{th}}\}\overline{\delta}^{\polbarM^k}_{h+1,k}(s^k_{h+1}) \bigg| \overline{\mathcal{H}}^k_h} = \mathbb{E}\sbr{\ind\{\mathcal{G}_k\}\prod_{h'=1}^h\ind\{\mathcal{E}_{h',k}^{\text{th}}\}\mathbb{E}_{s'\sim\PhPik}\left[\overline{\delta}^{\polbarM^k}_{h+1,k}(s') \right]\bigg| \overline{\mathcal{H}}^k_h}.
\end{align*}
This is because the randomness is due to the random transitions of the algorithms when conditioning on $\overline{\mathcal{H}}^k_h$. Thus we have $\mathbb{E}\sbr{\prod_{h'=1}^h\ind\{\mathcal{E}_{h',k}^{\text{th}}\}\MDSbk \bigg| \overline{\mathcal{H}}^k_h}=0$ and  $\left\{\prod_{h'=1}^h\ind\{\mathcal{E}_{h',k}^{\text{th}}\}\MDSbk\right\}$ is indeed a martingale difference on the filtration set $\overline{\mathcal{H}}^k_h$. 

Under event $\mathcal{G}_k$, we also have $\overline{\delta}^{\polbarM^k}_{h+1,k}(s^k_{h+1})  = \barV_{h+1,k}(s_{h+1}^k) - V^{\pi^k}_{h+1,k}(s_{h+1}^k) \leq 2H$. Applying Azuma-Hoeffding inequality (e.g. \cite{azar2017minimax}), for any fixed $K'\in[K]$ and $H'\in[H]$, we have with probability at least $1-\delta'$, 
\begin{equation*}
    \envert{\sum_{k=1}^{K'}\sum_{h=1}^{H'}\prod_{h'=1}^h\ind\{\mathcal{E}_{h',k}^{\text{th}}\}\MDSbk} \leq H\sqrt{4T\log\del{\frac{2T}{\delta'}}} = \tilde{\mathrm{O}}\del{H\sqrt{T}}.
\end{equation*}
Union bounding over all $K'$ and $H'$, we know the following holds for any $K'\in[K]$ and $H'\in[H]$ with probability at least $1-\delta'$
\begin{equation*}
  \envert{\sum_{k=1}^{K'}\sum_{h=1}^{H'}\prod_{h'=1}^h\ind\{\mathcal{E}_{h',k}^{\text{th}}\}\MDSbk} \leq H\sqrt{4T\log\del{\frac{2T}{\delta'}}} = \tilde{\mathrm{O}}\del{H\sqrt{T}}.
\end{equation*}\\

Then we consider $\MDSfkind{1}$ term. Given filtration $\mathcal{H}^{k-1}_{H}$, we know that $\tilV_{1,k}$ has identical distribution as $\barV_{1,k}$. Therefore, for any state $s$, we have
\begin{equation*}
\mathbb{E}\sbr{\ind\{\tilde{\mathcal{G}}_k\}\tilV_{1,k}(s)\bigg| \mathcal{H}^{k-1}_{H}}=\mathbb{E}\sbr{\ind\{\mathcal{G}_k\} \barV_{1,k}(s)\bigg| \mathcal{H}^{k-1}_{H}}.
\end{equation*}

Besides, from the definition of $\mathbb{E}_{\tilde w}$ and $\tilde w$ is the only randomness given $\mathcal{H}^{k-1}_{H}$, we have that for any state $s$,
\begin{align*}
    &~\mathbb{E}\sbr{\ind\{\mathcal{G}_k\}\mathbb{E}_{\tilde w|\tilde{\mathcal{G}}_k}\left[ \tilV_{1,k}(s)\right]\bigg| \mathcal{H}^{k-1}_{H}}\\
    &~\mathbb{E}\sbr{\ind\{\mathcal{G}_k\}\mathbb{E}_{\tilde w|\tilde{\mathcal{G}}_k}\left[\ind\{\tilde{\mathcal{G}}_k\} \tilV_{1,k}(s)\right]\bigg| \mathcal{H}^{k-1}_{H}}\\
    =&~ \mathbb{E}\sbr{\ind\{\mathcal{G}_k\}\mathbb{E}_{\tilde w|\tilde{\mathcal{G}}_k}\left[\ind\{\tilde{\mathcal{G}}_k\} \tilV_{1,k}(s)\big| \mathcal{H}^{k-1}_{H}\right]\bigg| \mathcal{H}^{k-1}_{H}}\\
    =&~ \mathbb{E}\sbr{\ind\{\mathcal{G}_k\}\mathbb{E}\left[ \ind\{\tilde{\mathcal{G}}_k\}\tilV_{1,k}(s)\big| \mathcal{H}^{k-1}_{H}\right]\bigg| \mathcal{H}^{k-1}_{H}}\\
    =&~\mathbb{E}\sbr{\ind\{\tilde{\mathcal{G}}_k\} \tilV_{1,k}(s)\bigg| \mathcal{H}^{k-1}_{H}}.
\end{align*}
Combining these two equations and setting $s=s_1^k$, we have $\mathbb{E}\sbr{\MDSfkind{1} \big| \mathcal{H}^{k-1}_{H}}$. Therefore the sequence $\{\MDSfkind{1}\}$ is indeed a martingale difference.

Under event $\mathcal{G}_k$, we also have $\left|\mathbb{E}_{w|\tilde{\mathcal{G}}_k}\left[ \barV_{1,k}(s^k_1)\right]- \barV_{1,k}(s^k_1)\right| \leq 2H$ from Lemma~\ref{lem: est Q function bounded}. Applying from Azum-Hoeffding inequality (e.g. \cite{azar2017minimax}) and similar union bounding argument above, for any $K'\in[K]$, with probability at least $1-\delta'$, we have
\begin{equation*}
   \envert{ \sum_{k=1}^{K'}\MDSfkind{1}} \leq H\sqrt{4T\log\del{\frac{2T}{\delta'}}} = \tilde{\mathrm{O}}\del{H\sqrt{T}}.
\end{equation*}
The remaining results as in the lemma statement is proved like-wise. Finally let $\delta'=\delta/3$ and uniform bounding over these 3 martingale difference sequences completes the proof.
\end{proof}

\subsection{Bound on the Warm-up Term}\label{sec: bounds on warm-up term}
\begin{lemma}[Bound on the warm-up term]\label{lem: warmup bound}
\begin{align*}
    &\sum_{k=1}^{K} \ind\{\EthreskC \}
     = \tilde{\mathrm{O}}(H^3S^2A).
\end{align*}
\end{lemma}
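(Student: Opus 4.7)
The plan is a pigeonhole counting at the level of individual $(h,s,a)$-triples, controlled by the uniform upper bound $\alpha_K = 4 H^3 S L \log(2HSAK) = \tilde{\mathrm{O}}(H^3 S)$ on the warm-up threshold $\alpha_k$ valid for every $k \le K$.

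First I would bound the budget per triple. For each fixed $(h,s,a)$, define
$$ B(h,s,a) := \{\, k \in [K] : (s_h^k, a_h^k) = (s,a) \text{ and } n^k(h,s,a) < \alpha_k \,\}. $$
Enumerate $B(h,s,a) = \{k_1 < k_2 < \cdots\}$ chronologically. Because each earlier member of $B(h,s,a)$ increments the visit counter by exactly one, we have $n^{k_j}(h,s,a) = j - 1$; the defining inequality then forces $j - 1 < \alpha_{k_j} \le \alpha_K$, giving the per-triple bound $|B(h,s,a)| \le \alpha_K$.

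Next I would aggregate across triples. Since $\EthreskC = \bigcup_{h} \{ n^k(h, s_h^k, a_h^k) < \alpha_k \}$, any episode satisfying $\EthreskC$ belongs to $B(h, s_h^k, a_h^k)$ for at least one $h$. A union bound over $h$ followed by exchanging summation gives
$$ \sum_{k=1}^K \ind\{\EthreskC\} \;\le\; \sum_{h=1}^H \sum_{(s,a)} |B(h,s,a)| \;\le\; HSA \cdot \alpha_K, $$
from which substituting $\alpha_K = \tilde{\mathrm{O}}(H^3 S)$ and collecting log factors produces a bound of the stated polynomial form $\tilde{\mathrm{O}}(H^{\star} S^2 A)$, with $\star$ determined by how tightly one exploits the trajectory structure in the second step.

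The main obstacle is the tightness of the step-level union bound: executed at face value it yields $HSA \cdot \alpha_K = \tilde{\mathrm{O}}(H^4 S^2 A)$, one factor of $H$ beyond the lemma's claim. Recovering the stated $\tilde{\mathrm{O}}(H^3 S^2 A)$ rate requires an amortised charging of bad episodes that avoids paying a full $H$ for the horizon-level union, for example by charging each bad episode to a single representative $(h^\star, s_{h^\star}^k, a_{h^\star}^k)$ (such as the first below-threshold step) and then combining the per-triple budget with a counting over $(s,a)$-pairs that exploits the fact that the total number of distinct trajectory-visited pairs across the horizon is bounded. Since this term contributes only a lower-order, $T$-independent additive constant, any extra factor of $H$ is absorbed into the final regret bound without affecting its leading $\tilde{\mathrm{O}}(H^2 S \sqrt{AT})$ rate.
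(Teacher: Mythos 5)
Your argument is exactly the paper's: it bounds $\sum_{k}\ind\{\EthreskC\}$ by a union bound over $h$ followed by the per-triple pigeonhole count of at most $\alpha_k\le\alpha_K$ sub-threshold visits to each $(h,s,a)$, arriving at $\sum_{h,s,a}\alpha_K=HSA\cdot\alpha_K$. The obstacle you flag is real, but it is a slip in the paper rather than a missing idea on your part: with $\alpha_K=4H^3S\log(2HSAK)L$ the product $HSA\cdot\alpha_K$ equals $4H^4S^2A\log(2HSAK)L$, yet the paper's final displayed inequality writes $4H^3S^2A\log(2HSAK)L$, silently dropping a factor of $H$. The amortised charging you sketch cannot recover that factor: the quantity $\sum_{h,s,a}|B(h,s,a)|$ genuinely scales as $HSA\alpha_K$ in the worst case, since each bad episode may contribute exactly one sub-threshold step while every one of the $HSA$ triples can require $\alpha_K$ such visits, so $\tilde{\mathrm{O}}(H^4S^2A)$ is the honest output of this argument. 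As you correctly observe, the discrepancy is harmless --- the warm-up term is $T$-independent and enters the regret only as a lower-order additive constant, so the leading $\tilde{\mathrm{O}}(H^2S\sqrt{AT})$ rate is unaffected; your proof is complete, and you should simply state the bound as $\tilde{\mathrm{O}}(H^4S^2A)$ rather than chasing the paper's $H^3$.
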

\begin{proof}
\begin{align*}
    &~\sum_{k=1}^{K} \ind\{\EthreskC\}\\
    = &~ \sum_{k=1}^{K} \ind\{ \underset{h\in[H]}{\cup} n^k(h,s,a) \leq  \alpha_k,\,\forall (h,s,a) = (h,s^k_h,a^k_h)   \} \,\\
    \leq&~ \sum_{k=1}^{K} \sum_{h=1}^{H} \ind\{  n^k(h,s,a) \leq  \alpha_k,\,\forall (h,s,a) = (h,s^k_h,a^k_h) \} \,\\
    \overset{a}{\leq}&~ \sum_{a\in \mathcal{A}} \sum_{s\in \mathcal{S}} \sum_{h=1}^{H}  \alpha_k\\ 
    \leq&~  4H^3S^2A\log\left(2HSAK\right)\decompCL \\
    =&~ \tilde{\mathrm{O}}(H^3S^2A).
\end{align*}

Step $(a)$ is by substituting the value of $\alpha_k$ followed by upper bound for all $4H^3S\log(2HSAK)\decompCL $.
\end{proof}

\end{document}